\newtheorem{theorem}{Theorem}
\newtheorem{lemma}{Lemma}
\newtheorem{proposition}{Proposition}
\newtheorem{definition}{Definition}
\newtheorem{remark}{Remark}
\newtheorem{assumption}{Assumption}
\newcommand{\indep}{\perp\mkern-9.5mu\perp}
\newcommand{\Q}{\mathcal{Q}}
\newcommand{\E}{\mathbb{E}}
\newcommand{\sH}{\mathcal{H}}
\newcommand{\dotq}{\dot{q}}
\newcommand{\doth}{\dot{h}}
\title{\bf Minimax Kernel Machine Learning for a Class of Doubly Robust Functionals with Application to Proximal Causal Inference}
\author[1]{AmirEmad Ghassami}
\author[2]{Andrew Ying}
\author[1]{Ilya Shpitser}
\author[2]{\\Eric Tchetgen Tchetgen}
\affil[1]{Department of Computer Science, Johns Hopkins University}
\affil[2]{Department of Statistics and Data Science, The Wharton School, University of Pennsylvania}
\date{First Version: April 7, 2021; Current Version: March 7, 2022}
\begin{document}

\maketitle

\begin{abstract}
\cite{robins2008higher} introduced a class of influence functions (IFs) which could be used to obtain doubly robust moment functions for the corresponding parameters. However, that class does not include the IF of parameters for which the nuisance functions are solutions to integral equations. Such parameters are particularly important in the field of causal inference, specifically in the recently proposed proximal causal inference framework of \cite{tchetgen2020introduction}, which allows for estimating the causal effect in the presence of latent confounders. In this paper, we first extend the class of Robins et al. to include doubly robust IFs in which the nuisance functions are solutions to integral equations. Then we demonstrate that the double robustness property of these IFs can be leveraged to construct estimating equations for the nuisance functions, which enables us to solve the integral equations without resorting to parametric models. We frame the estimation of the nuisance functions as a minimax optimization problem. We provide convergence rates for the nuisance functions and conditions required for asymptotic linearity of the estimator of the parameter of interest. The experiment results demonstrate that our proposed methodology leads to robust and high-performance estimators for average causal effect in the proximal causal inference framework.
\end{abstract}

\emph{Keywords:
Doubly Robust Functionals, Minimax Learning, Average Causal Effect, Proximal Causal Inference, Kernel Methods}\\

Accepted to the 25th International Conference on Artificial Intelligence and Statistics\\ (AISTATS) 2022.

\newpage

\section{INTRODUCTION}
\label{sec:intro}

Suppose independent and identically distributed data from a distribution $P$ over variables $V$ are given and we are interested in estimating a finite-dimensional functional of the distribution $\psi_0\coloneqq\psi(P)$. A common way to do so is to use moment functions to construct estimating equations for the parameter of interest. Such moment functions usually depend on other (possibly infinite-dimensional) functions indexing the distribution called nuisance functions. That is, despite the fact that these functions are not of primary interest, they may be needed to obtain an estimator of the target parameter. The concern is that bias in estimating the required nuisance functions can induce excessive bias for the parameter of interest, thus compromising one's ability for accurate inference about $\psi_0$. 

For many applications, one can construct an estimating equation in which the moment function depends on two variation independent nuisance functions. An estimator based on such a moment function is called doubly robust if it is consistent even if one of the nuisance functions is not estimated consistently, provided that the other nuisance function is. That is, double robustness gives the user two chances to estimate the parameter of interest correctly. In addition, in cases where one of the nuisance functions has a slow convergence rate, a parametric ($\sqrt{n}$) convergence rate for the functional of interest can still be obtained if the other nuisance function can be estimated at a fast enough rate \citep{tsiatis2007semiparametric}. The average causal effect is perhaps the most well-studied functional for which under certain conditions a doubly robust moment function exists. For this functional, under the assumption of no unobserved confounders, the nuisance functions are the outcome regression function and the propensity score \citep{hernan2020causal}.

\cite{robins2001comment} and \cite{chernozhukov2016locally} gave conditions for the existence and constructing doubly robust moment conditions in semiparametric models. A common approach for obtaining such a moment function is based on using the influence function (IF) of the parameter of interest.  \cite{robins2008higher} introduced a large class of doubly robust IFs in which nuisance functions are always in the form of a regression function. However, for some functionals of interest, specially in the field of causal inference, the nuisance functions are solutions to complicated integral equations. A prominent example of this case appears in the recently proposed proximal causal inference framework \citep{tchetgen2020introduction, miao2018identifying,cui2020semiparametric}. This framework enables estimating the average causal effect when unobserved confounders are present in the system, yet requires the existence of two conditionally independent sets of proxies of the latent confounders that are sufficiently rich to fulfill certain completeness conditions, also referred to as proximal relevance assumption (e.g., using measurements of biomarkers as proxies of patients underlying biological confounding variables).  
The proximal causal inference framework unifies and connects several existing  causal identification and inference frameworks which leverage  various types of proxies, such as the instrumental variables, negative controls, synthetic controls, and difference-in-differences. 
As an example of a nuisance function which is a solution to an integral equation, as we will explain in detail in Section \ref{sec:apps}, under the assumptions of the proximal causal inference framework, a nuisance function $h$ can be used for identifying the average causal effect of a treatment variable on an outcome variable, where $h$ is a solution to the integral equation in display \eqref{eq:ORproxexist}.

Motivated by proximal causal inference framework, in this paper, we first extend the IF class of \cite{robins2008higher} to include doubly robust IFs in which the nuisance functions are solutions to integral equations (Section \ref{sec:proposal}). In this case, one cannot simply fit a flexible model for the nuisance functions by solving a regression problem. We show that in this case, the doubly robust moment functions, besides their aforementioned desired properties, can also be used for constructing estimating equations for the nuisance functions (Section \ref{sec:minimaxlev}). This enables us to solve the integral equations without resorting to parametric models. The main idea in our nuisance function estimation approach is to estimate each function such that it keeps the expected value of the moment function fixed for the perturbations in the other nuisance function. To implement this idea, we first define a perturbation function which captures the change in the mean of the moment function for a deviation in the nuisance functions. Then we solve a regularized minimax optimization problem which estimates one nuisance function as the function minimizing the perturbation for the worst-case deviation in the other nuisance function. 
The proposed approach yields unbiased estimating equations for each nuisance function that are free of any nuisance function. Moreover, the use of perturbation function elaborates the connection between nuisance function estimation and reduces the sensitivity/bias in the estimation of the functional of interest.

Framing the problem of estimating the nuisance parameters as an optimization problem enables us to use high-performance non-parametric machine learning tools to design our learners. Here we use reproducing kernel Hilbert spaces (RKHS) (Section \ref{sec:kernel}) and present the closed-form solution for the minimax optimization problem for estimating the nuisance functions. We characterize the convergence rate of the nuisance functions based on the recently proposed method of \cite{dikkala2020minimax} that leverages the localized Rademacher complexity of the class of nuisance functions.

Equipped with high-performance estimators for the nuisance functions, we use the cross-fitting approach \citep{bickel1988estimating,van2000asymptotic,robins2008higher,zheng2010asymptotic,chernozhukov2018double} to design an estimator for the parameter of interest (Section \ref{sec:estapproach}). We characterize the measure of ill-posedness of the integral equations that we need to solve, and investigate the complexities caused by ill-posedness of the integral equations and their effect on convergence rate of the parameter of interest. We present the requirements on the convergence rate of the nuisance functions and ill-posedness of the system which guarantees the estimator to be asymptotically linear and root $n$ consistent, i.e., attains parametric convergence rate (Section \ref{sec:analys}). Hence, one can use the influence function of the proposed estimator to obtain confidence intervals for the parameter of interest.
In Section \ref{sec:apps}, we demonstrate how proximal causal inference framework fits in our estimation setup and evaluate our proposed method by estimating average causal effect in the proximal causal inference framework on synthetic data as well as real-data in Section \ref{sec:exp}.

\section{RELATED WORK}
\label{sec:relwork}

There are few other works on using a doubly robust moment function for estimating the nuisance functions. One approach is to estimate the nuisance functions by minimizing the variance of the doubly robust estimator \citep{cao2009improving,tsiatis2011improved,van2010collaborative}. Another perspective is focusing on bias reduction rather than variance reduction and our proposed method also falls into this category \citep{van2014targeted, vermeulen2015bias, avagyan2017honest,cui2019selective}. Especially, \cite{vermeulen2015bias} proposed the bias reduced doubly robust estimation approach, which locally minimizes the squared first-order asymptotic bias of the doubly robust estimator in the direction of the nuisance parameters under misspecification of both working models. \cite{vermeulen2015bias} only consider parametric working models for the nuisance functions; a restriction that we avoid in the present work similar to \cite{robins2008higher} and \cite{cui2019selective}. In fact, our method can be viewed as a proposal for generalization of that work to the non-parametric setup.

There is a growing attention in the literature to the use of machine learning approaches for estimating causal quantities \cite{athey2016approximate,farbmacher2020causal,kallus2018policy,nie2017quasi,shalit2017estimating,wager2018estimation,chernozhukov2019semi,oprescu2019orthogonal,kallus2019localized,dikkala2020minimax,bennett2019deep,hartford2017deep}, with a particular recent interest in minimax machine learning methods \citep{bennett2019deep,dikkala2020minimax,muandet2019dual,liao2020provably,chernozhukov2020adversarial,kallus2021causal}. 
However, it is important to note that in several of these works such as \citep{dikkala2020minimax} and \citep{bennett2019deep}, the target is a dose-response curve. Therefore, the target of estimation is unique. In our work, we use the machine learning tools for estimating functions which are nuisance functions for our parameter of interest, and in general, they do not need to be uniquely identified for the parameter of interest to be uniquely identified. 

Finally, regarding the proximal causal inference framework, existing results mostly rely on parametric assumptions for the working models for the nuisance functions, with the exception of \citep{singh2020kernelnc}, \citep{mastouri2021proximal}, and the independent, concurrent work \citep{kallus2021causal}. 
However, the functional classes, assumptions, and convergence analysis for the parameter of interest in those works are different from ours. 
Specifically, \cite{singh2020kernelnc} does not use the influence function for estimating the causal effect.
\cite{mastouri2021proximal} only considers the so-called proximal g-formula approach for estimation, which is the result mentioned in Theorem \ref{thm:POR} of our paper. Because the influence function is not used in that work, the bias will be first order and therefore, larger than the bias in our work which is not only second order but also product bias. That is, the bias in our method is guaranteed to be of smaller order. In fact, while our estimator is root-$n$ consistent (under the assumptions of Theorem \ref{thm:convergence}), the estimator in that work will generally fail to be root-$n$ consistent due to slow convergence rate of their nonparametric estimator of the bridge function.
\cite{kallus2021causal} consider the use of a minimax learning method in proximal causal inference framework. However, the functional class, assumptions, and convergence analysis for the parameter of interest in that work is different from ours. That work is primarily focused on average causal effect in proximal causal inference, yet we work with a broad functional class and we focus on the double robustness property to derive the estimators and convergence analysis. The functional class considered in \citep{kallus2021causal} is a special case of our class of functionals. This can be seen by including the function $\pi$ in that paper in functions $g_1$ and $g_2$ in our functional. Also, we make explicit use of double-robustness or product-bias property in establishing that our estimator is root-$n$ consistent, regular and asymptotically linear. 
Please see the Supplementary Materials for a detailed description of the related work.

\section{PROPOSED METHOD}
\label{sec:proposal}

Let $V$ denote the variables from which independent and identically distributed data is collected, and let $\psi_0$ be the finite-dimensional parameter of interest. We consider the class of regular parameters $\psi_0$ for which the influence function (IF) is of the form\footnote{See \citep{van2000asymptotic} for the definition of regular parameters and their influence functions.}
\begin{equation}
\label{eq:IF}
\begin{aligned}
IF_{\psi_0}(V)&=q_0(V_q)h_0(V_h)g_1(V)+q_0(V_q)g_2(V)+h_0(V_h)g_3(V)+g_4(V)-\psi_0,
\end{aligned}
\end{equation}
where $h_0(\cdot)$ and $q_0(\cdot)$ are nuisance functions, $V_q$ and $V_h$ are (not necessarily disjoint) subsets of $V$, and $g_1,g_2,g_3$, and $g_4$ are known measurable functions. The IF can be used as a moment function for estimating $\psi_0$, and with slight abuse of notation, we define the moment function
\begin{equation*}
\begin{aligned}
IF(V;\psi,q,h)&\coloneqq q(V_q)h(V_h)g_1(V)+q(V_q)g_2(V)+h(V_h)g_3(V)+g_4(V)-\psi.
\end{aligned}
\end{equation*}
Parameters with IF of the form \eqref{eq:IF} appear in many settings, especially in causal inference and missing data problems. It includes functionals such as average causal effect, expected product of conditional expectations, expected conditional covariance, semiparametric regression, marginal structural mean models, marginal mean under missing at random assumption and certain cases of data missing not-at-random problems, etc. This also applies to the nonparametric instrumental variable problems. For instance, the moment function in  \citep[p. 20]{chernozhukov2016locallyv2} is in the proposed class with the choice of $\gamma$ as $h$ and $\lambda$ as $q$. This class is an extension of the class introduced in \citep{robins2008higher} in that we allow the nuisance functions to have different arguments. That is, in the class of \cite{robins2008higher}, $V_q=V_h$. However, as mentioned in Section \ref{sec:intro}, 
requiring that $V_q=V_h$ will exclude functionals for which the nuisance functions are not solutions to regression problems.
Specially, it will exclude functionals 
in the proximal causal inference framework such as the proximal extensions of the aforementioned examples. Also it will exclude functionals arising in shadow variable approaches to missing data \citep{miao2015identification}.  In \citep{li2021identification}, noting that $\mathbb{E}[h(X,Z)\mid R=1,X,Y]=\gamma(X,Y)$, the influence function given in Equation (16) is in the proposed class with the choice of $\delta$ as $h$ and $\gamma$ as $q$. Our framework provides a unified nonparametric adversarial estimation framework for all these problems. 
 An important example we will use in this paper is the proximal average causal effect, with the IF given in \eqref{eq:proximalIF}.

We require the following assumption on the IF, which is satisfied for a large class of doubly robust IFs, including the one for average causal effect, discussed in Section \ref{sec:apps}.
\begin{assumption}
\label{assumption:varind}
The nuisance functions $q_0$ and $h_0$ are variation independent, that is, they vary over a Cartesian product set $\Q\times\sH$, where we assume $\Q$ and $\sH$ are dense in $\mathcal{L}_2(P_{V_q})$ and $\mathcal{L}_2(P_{V_h})$, respectively. In addition, the (possibly infinite dimensional) parameters governing the marginal distribution of $V_q\cup V_h$ are variation independent of $q_0$ and $h_0$.
\end{assumption}
As an example, in the proximal causal inference framework that we focus on in Section \ref{sec:apps},
one can always choose parameters such that Assumption \ref{assumption:varind} is satisfied. In the integral equation for the proximal setup for estimating the average causal effect, two out of three parameters of the function $h$, the conditional distribution of $W$ given $Z,A,X$, and the conditional distribution of $Y$ given $Z,A,X$ can always be chosen to be variational independent. One can choose to pick the first two, which is the requirement of Assumption \ref{assumption:varind}.

The form of the IF \eqref{eq:IF} provides us with the double robustness property, which states that an estimator based on the moment function $IF(V;\psi,q,h)$ is consistent even if we misspecify one of the nuisance functions (but not both). This property is especially crucial in setups with high-dimensional or non-parametric nuisance functions.  
\begin{proposition}[Double-robustness]
\label{prop:DRHOIF}
For all choices of nuisance functions $h$ and $q$,
\begin{equation}
\label{eq:DR}
\begin{aligned}
\E[IF(V;\psi_0,q_0,h_0)]&=\E[IF(V;\psi_0,q,h_0)]\\
&=\E[IF(V;\psi_0,q_0,h)]=0.
\end{aligned}
\end{equation}
\end{proposition}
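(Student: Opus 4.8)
The plan is to verify the three equalities by computing each of the three expectations directly. The key point is that the moment function $IF(V;\psi_0,q,h)$ is linear in each of $q$ and $h$ separately (holding the other fixed), so the difference between, say, $\E[IF(V;\psi_0,q,h_0)]$ and $\E[IF(V;\psi_0,q_0,h_0)]$ is an expectation that is linear in the perturbation $q-q_0$. First I would recall that since $IF_{\psi_0}$ as given in \eqref{eq:IF} is the influence function of $\psi_0$, we have $\E[IF(V;\psi_0,q_0,h_0)] = \E[IF_{\psi_0}(V)] = 0$ by the fact that influence functions have mean zero. This establishes the last equality in \eqref{eq:DR} (reading it as ``$\cdots = 0$'') for the true nuisances, and reduces the proposition to showing that replacing $q_0$ by an arbitrary $q$ (with $h_0$ fixed), or $h_0$ by an arbitrary $h$ (with $q_0$ fixed), does not change the value of the expectation.

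Next I would handle the case $\E[IF(V;\psi_0,q,h_0)]$. Grouping the terms of the moment function by their dependence on $q$, write
\begin{equation*}
IF(V;\psi_0,q,h_0) = q(V_q)\big[h_0(V_h)g_1(V)+g_2(V)\big] + \big[h_0(V_h)g_3(V)+g_4(V)-\psi_0\big].
\end{equation*}
Taking expectations and subtracting the same expression with $q_0$ in place of $q$, the second bracket cancels and we are left with
\begin{equation*}
\E[IF(V;\psi_0,q,h_0)] - \E[IF(V;\psi_0,q_0,h_0)] = \E\big[(q(V_q)-q_0(V_q))\,(h_0(V_h)g_1(V)+g_2(V))\big].
\end{equation*}
By iterated expectation (conditioning on $V_q$), this equals $\E\big[(q(V_q)-q_0(V_q))\,r_0(V_q)\big]$ where $r_0(V_q) \coloneqq \E[h_0(V_h)g_1(V)+g_2(V)\mid V_q]$. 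The defining property of $q_0$ as a nuisance function is precisely that this conditional expectation vanishes (equivalently, $q_0$ solves the relevant integral/moment equation making $r_0 \equiv 0$); this is what makes the first bracket of the IF a valid ``$q$-score'' in the sense of Neyman orthogonality, and is the content that earlier sections attach to the structure of \eqref{eq:IF}. Hence the difference is $0$. The case $\E[IF(V;\psi_0,q_0,h)]$ is entirely symmetric: group the moment function as $h_0(V_h)[q_0(V_q)g_1(V)+g_3(V)] + [q_0(V_q)g_2(V)+g_4(V)-\psi_0]$, subtract, condition on $V_h$, and use that $\E[q_0(V_q)g_1(V)+g_3(V)\mid V_h]=0$ by the defining property of $h_0$. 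Combining the three pieces gives \eqref{eq:DR}.

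The main obstacle is not computational but conceptual: the proposition as stated is really an assertion that $q_0$ and $h_0$ satisfy the two conditional-mean (integral) equations $\E[h_0(V_h)g_1(V)+g_2(V)\mid V_q]=0$ and $\E[q_0(V_q)g_1(V)+g_3(V)\mid V_h]=0$, which characterize them as nuisance functions for a parameter with IF of the form \eqref{eq:IF}. So the one step requiring care is to state explicitly that these are the defining equations for $q_0,h_0$ (in the proximal application, these are exactly the bridge-function integral equations such as \eqref{eq:ORproxexist}), and that Assumption \ref{assumption:varind} guarantees these equations are well-posed over the Cartesian product $\Q\times\sH$, so that the perturbations $q-q_0$ and $h-h_0$ range freely and the argument above is not vacuous. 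Everything else is linearity of expectation and the law of iterated expectations.
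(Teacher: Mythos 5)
Your algebraic half is exactly the paper's: group the moment function by its dependence on the perturbed nuisance, use linearity, condition on $V_q$ (resp.\ $V_h$), and invoke the restrictions $\E[h_0(V_h)g_1(V)+g_2(V)\mid V_q]=0$ and $\E[q_0(V_q)g_1(V)+g_3(V)\mid V_h]=0$. The gap is in where those restrictions come from. You declare them to be ``the defining property'' of $q_0,h_0$, but in the general class of Section \ref{sec:proposal} nothing of the sort is assumed: the only hypotheses are that the parameter is regular with influence function of the form \eqref{eq:IF} and that Assumption \ref{assumption:varind} holds. The conditional moment equations are precisely what has to be \emph{derived} from the influence-function structure; indeed they are later restated as the characterization \eqref{eq:condmomemnteq}, and in the proximal application they only become the bridge equations \eqref{eq:ORproxexist} and \eqref{eq:IPWproxexist} as a consequence (Proposition \ref{prop:proxATE}). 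Taking them as definitions makes your argument circular for the abstract class, and valid only in those applications where they happen to be imposed by construction.

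The paper closes this gap with a semiparametric argument that your proposal omits: using Assumption \ref{assumption:varind}, it builds one-dimensional submodels $P^{l,t}$ in which $q^{l,t}=q_0+tf^l$ while $h^{l,t}=h_0$ and the marginal of $V_q\cup V_h$ are held fixed, with $\{f^l\}$ dense in $\mathcal{L}_2(P_{V_q})$. Differentiating $\psi(P^{l,t})=\E_{P^{l,t}}[q^{l,t}h^{l,t}g_1+q^{l,t}g_2+h^{l,t}g_3+g_4]$ at $t=0$ directly, and equating with the pathwise-derivative identity $\frac{d}{dt}\psi(P^{l,t})\big|_{t=0}=\E_P[IF_{\psi_0}(V)S^l(P)]$ that defines the influence function, forces $\E_P[\{h_0(V_h)g_1(V)+g_2(V)\}f^l(V_q)]=0$ for every $l$, and density of $\{f^l\}$ then gives $\E[h_0(V_h)g_1(V)+g_2(V)\mid V_q]=0$; the symmetric construction perturbing $h$ gives the other restriction. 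Note also that appealing to Neyman orthogonality does not substitute for this step: orthogonality guarantees second-order bias but not the specific mixed-bias moment restrictions used here (a distinction the paper itself emphasizes). With that derivation supplied, the rest of your proof matches the paper's.
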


\subsection{Leveraging Double-Robustness to Estimate Nuisance Functions}
\label{sec:minimaxlev}
We note that the nuisance functions in IF \eqref{eq:IF} are not functions of the parameter of interest. Therefore, we can first focus on estimating the nuisance functions.  
In this subsection, we establish that the doubly robust moment function, besides its desired properties for estimating the parameter of interest, can also be leveraged for constructing estimating equations for the nuisance functions. 

We define the perturbation at function pair $(q,h)$ towards the pair $(\dotq,\doth)$ as
\begin{equation*}
\begin{aligned}
prt(q,h;\dotq,\doth)\coloneqq IF(V;\psi_0,q\!+\!\dotq,h\!+\!\doth)\!-\!IF(V;\psi_0,q,h).
\end{aligned}
\end{equation*}
Double robustness property in equation \eqref{eq:DR} indicates that at the pair $(q_0,h)$ for true function $q_0$ and any function $h$, the expected value of the perturbation function towards any other function $h+\doth$ should be zero. This motivates the following approach for estimating the nuisance functions: We choose $(\hat{q},\hat{h})$ as the true nuisance function pair if it is the solution to the following estimating equations.
\begin{equation*}
\begin{aligned}
	&\E[prt(\hat{q},h;0,\doth)]=0~~~~~\forall h,\doth,\\
	&\E[prt(q,\hat{h};\dotq,0)]=0~~~~~\forall q,\dotq.
\end{aligned}
\end{equation*}
Using these equations, and leveraging the fact that the IF is linear in each nuisance function, we propose the following optimization-based estimation approach for estimating the nuisance functions.
\begin{equation}
\label{eq:mMest}
\begin{aligned}
(\hat{q},\hat{h})=&
\arg\min_{(q,h)}\big\{
\max_{\doth} \E[ prt(q,h;0,\doth) ]
+\max_{\dotq} \E[ prt(q,h;\dotq,0) ]\big\}
\end{aligned}
\end{equation}
Optimization \eqref{eq:mMest} demonstrates the connection between estimating the nuisance functions and the parameter of interest. Specifically, it shows that finding the nuisance functions indeed requires minimizing the perturbation function, and hence the sensitivity of the parameter of interest to the nuisance functions.

We note that due to Assumption \ref{assumption:varind}, the optimum value of \eqref{eq:mMest} can be obtained by solving two separate optimization problems.
\begin{proposition}
\label{prop:separate}
Under Assumption \ref{assumption:varind}, the solution $(\hat{q},\hat{h})$ to the optimization \eqref{eq:mMest} is obtained by solving
\begin{equation*}
\begin{aligned}	
&\hat{q}=\arg\min_{q}\max_{\doth }
\E\Big[ \doth(V_h)[q(V_q)g_1(V)+g_3(V)] \Big],\\
&\hat{h}=\arg\min_{h}\max_{\dotq }
\E\Big[ \dotq(V_q)[h(V_h)g_1(V)+g_2(V)] \Big],
\end{aligned}
\end{equation*}
That is, $\hat{q}$ and $\hat{h}$ can be estimated using separate optimizations. The solutions $(\hat{q},\hat{h})$ satisfy
\begin{equation}
\label{eq:condmomemnteq}
\begin{aligned}
&\E[\hat{h}(V_h)g_1(V)+g_2(V)|V_q]=0,\\
&\E[\hat{q}(V_q)g_1(V)+g_3(V)|V_h]=0.
\end{aligned}
\end{equation}
\end{proposition}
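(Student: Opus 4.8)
The plan is to make the perturbation function explicit, exploit the additive separability this creates to split the joint optimization, and then extract the conditional moment equations from the optimality of $(\hat q,\hat h)$ together with the double-robustness identity of Proposition~\ref{prop:DRHOIF}.

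\emph{Step 1 (expand $prt$).} Substituting the definition of $IF(V;\psi,q,h)$ and cancelling every term that contains neither $\dotq$ nor $\doth$, a one-line computation gives $prt(q,h;0,\doth)=\doth(V_h)\,[q(V_q)g_1(V)+g_3(V)]$ and $prt(q,h;\dotq,0)=\dotq(V_q)\,[h(V_h)g_1(V)+g_2(V)]$. The crucial feature is that each one-sided perturbation is \emph{linear} in its perturbation direction; this is what makes the remaining steps elementary.

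\emph{Step 2 (separate the minimization).} Inserting Step 1 into \eqref{eq:mMest}, the objective reads $\max_{\doth}\E[\doth(V_h)(q(V_q)g_1(V)+g_3(V))]+\max_{\dotq}\E[\dotq(V_q)(h(V_h)g_1(V)+g_2(V))]$, in which the first summand depends on $q$ alone and the second on $h$ alone. By Assumption~\ref{assumption:varind} the pair $(q,h)$ ranges over the Cartesian product $\Q\times\sH$, so minimizing an additively separable function over a product set factorizes into two independent minimizations; a minimizing pair is therefore obtained by minimizing each summand separately, which is exactly the displayed pair of optimizations for $\hat q$ and $\hat h$.

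\emph{Step 3 (read off the conditional moment equations).} I would first show the minimax value is $0$ and is attained. For the $\hat q$-problem, Proposition~\ref{prop:DRHOIF} yields $\E[prt(q_0,h;0,\doth)]=\E[IF(V;\psi_0,q_0,h+\doth)]-\E[IF(V;\psi_0,q_0,h)]=0$ for every $\doth$, so by Step 1 $\E[\doth(V_h)(q_0(V_q)g_1(V)+g_3(V))]=0$ for all $\doth$; hence $q_0$ drives the inner maximum to $0$, while the choice $\doth\equiv0$ shows that maximum is always nonnegative. Thus $q_0$ is a minimizer and the optimal value is $0$, so any solution $\hat q$ also achieves value $0$, i.e.\ $\E[\doth(V_h)(\hat q(V_q)g_1(V)+g_3(V))]\le 0$ for all $\doth$. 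By linearity in $\doth$ (and symmetry of the perturbation class) one may replace $\doth$ by $-\doth$ to upgrade this to an equality for all $\doth$; density of the $\doth$-class in $\mathcal{L}_2(P_{V_h})$ together with the tower property then forces $\E[\hat q(V_q)g_1(V)+g_3(V)\mid V_h]=0$. The argument for $\hat h$ is identical with the roles of $q,h$ and of $g_3,g_2$ interchanged.

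\emph{Main obstacle.} The step needing care is Step 3: since the inner maximizations are over unconstrained (or merely dense) perturbation classes, one cannot differentiate an inner maximizer directly; instead one must pin down that the minimax value is $0$ and attained — precisely what Proposition~\ref{prop:DRHOIF} supplies via the true nuisance functions — propagate that value to an arbitrary minimizer, and only then use density in $\mathcal{L}_2$ to pass from the variational identity $\E[\doth(V_h)(\hat q g_1+g_3)]=0$ for all $\doth$ to the pointwise statement $\E[\hat q g_1+g_3\mid V_h]=0$. If one instead lets $\dotq,\doth$ range over dense subsets of unit balls, the inner maxima become $\mathcal{L}_2$ norms of the residuals $\E[\hat q(V_q)g_1(V)+g_3(V)\mid V_h]$ and $\E[\hat h(V_h)g_1(V)+g_2(V)\mid V_q]$, and the same conclusion follows because those norms are minimized exactly at $0$.
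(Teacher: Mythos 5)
Your proof is correct and follows essentially the same route as the paper's: expand the one-sided perturbations, use their linearity and variation independence to split the joint minimization, pin the minimax value at zero via double robustness, and pass from the variational identity to the conditional moment equations. The only cosmetic difference is that you invoke density of the perturbation class in $\mathcal{L}_2$, whereas the paper simply plugs in the specific test function $\dotq(V_q)=\E[\hat{h}(V_h)g_1(V)+g_2(V)\mid V_q]$ and applies iterated expectations — the same mechanism, and you also supply the attainment-of-zero argument that the paper leaves implicit.
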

In the case that $V_q=V_h$, conditional moment equations \eqref{eq:condmomemnteq} suggest that the nuisance functions can be estimated by solving standard regression problems. However when $V_q\neq V_h$, those moment equations will be inverse problems known as Fredholm integral equation of the first kind.

Unfortunately the optimization problems in Proposition \ref{prop:separate} are not stable. 
For instance, for the first optimization,
if for a choice of $q$, $\mathbb{E}[q(V_q)g_1(V)+g_3(V)\mid V_h]$ is not zero, there exists a choice of $\dot{h}$ that can make $\mathbb{E}\big[\dot{h}(V_h)[q(V_q)g_1(V)+g_3(V)]\big]$ arbitrary large, regardless of how close $q$ is to the ground-truth. 
That is, the objective function can be made arbitrarily large for sufficiently rich function space for $\dot{h}$ unless $q$ is evaluated at the truth, and this can happen even in the neighborhood of the truth.
Since in reality we work with finite data, for any choice of $q$ we get a large value for the objective function depending on the function space and data, and we cannot converge to the correct parameter; leading to miss-specification. 
Therefore, we add a squared regularization term for robustness against misspecification and stability of the optimization, as well as improving the convergence rates. Hence, we will instead focus on the following regularized optimization problems.
\begin{equation}
\label{eq:optq}	
\hat{q}=\arg\min_{q}\max_{\doth }
\E\Big[ \doth(V_h)[q(V_q)g_1(V)+g_3(V)]-\doth^2(V_h) \Big],
\end{equation}
\begin{equation}
\label{eq:opth}
\hat{h}=\arg\min_{h}\max_{\dotq }
\E\Big[ \dotq(V_q)[h(V_h)g_1(V)+g_2(V)]-\dotq^2(V_q) \Big].
\end{equation}
In the following result, we show that the proposed penalized  optimizations solve the integral equations \eqref{eq:condmomemnteq}. That is, the stability term does not introduce bias.
\begin{theorem}
\label{thm:conds}
The solutions to the optimization problems \eqref{eq:optq} and \eqref{eq:opth} satisfy the conditional moment equations in \eqref{eq:condmomemnteq}.
\end{theorem}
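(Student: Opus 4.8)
The plan is to reduce each of the two minimax problems to an explicit pointwise computation of the inner maximization, after which the outer minimization becomes transparent. I will spell out the argument for \eqref{eq:optq}; the one for \eqref{eq:opth} is identical after swapping the roles of $q$ and $h$ (and of $g_2$ and $g_3$).

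First I would fix an arbitrary $q$ and abbreviate $r_q(V):=q(V_q)g_1(V)+g_3(V)$, so that the inner objective in \eqref{eq:optq} is $\E[\doth(V_h)r_q(V)-\doth^2(V_h)]$. Conditioning on $V_h$ and writing $c_q(V_h):=\E[r_q(V)\mid V_h]$, this equals $\E[\doth(V_h)c_q(V_h)-\doth^2(V_h)]$, which depends on $\doth$ only through its values at the argument $V_h$. Completing the square in $\mathcal{L}_2(P_{V_h})$ gives $\E[\doth(V_h)c_q(V_h)-\doth^2(V_h)] = \tfrac14\E[c_q(V_h)^2] - \E[(\doth(V_h)-\tfrac12 c_q(V_h))^2]$, so over all of $\mathcal{L}_2(P_{V_h})$ the supremum is $\tfrac14\E[c_q(V_h)^2]$, attained at $\doth=\tfrac12 c_q$. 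Since $\sH$ is dense in $\mathcal{L}_2(P_{V_h})$ by Assumption \ref{assumption:varind} and the objective is $\mathcal{L}_2$-continuous in $\doth$, the supremum over the feasible class of perturbations equals this same value, i.e.
\[
\max_{\doth}\E\big[\doth(V_h)r_q(V)-\doth^2(V_h)\big]=\tfrac14\,\E\Big[\big(\E[q(V_q)g_1(V)+g_3(V)\mid V_h]\big)^2\Big].
\]

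Next I would note that the right-hand side is nonnegative and vanishes exactly when $\E[q(V_q)g_1(V)+g_3(V)\mid V_h]=0$ almost surely, so it only remains to argue that the outer minimum over $q$ is $0$, i.e.\ that some feasible $q$ makes this conditional mean vanish. For this I would invoke Proposition \ref{prop:DRHOIF}: subtracting $\E[IF(V;\psi_0,q_0,h_0)]=0$ from $\E[IF(V;\psi_0,q_0,h_0+\doth)]=0$ and using that $IF$ is linear in its $h$-argument yields $\E[\doth(V_h)(q_0(V_q)g_1(V)+g_3(V))]=0$ for every perturbation $\doth$; conditioning on $V_h$ and using density of the perturbation class once more gives $\E[q_0(V_q)g_1(V)+g_3(V)\mid V_h]=0$ a.s. Since $q_0\in\Q$ by Assumption \ref{assumption:varind}, the outer objective attains the value $0$ at $q_0$, hence its minimum is $0$. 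Therefore every minimizer $\hat{q}$ of \eqref{eq:optq} must itself achieve inner value $0$, which by the displayed identity is precisely the first equation of \eqref{eq:condmomemnteq}; the symmetric argument gives the second equation for $\hat{h}$.

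The routine pieces here are the one-variable quadratic maximization and the subtraction of double-robustness identities. The one point that warrants care is the exchange between the supremum over the function class and the pointwise supremum inside the expectation: this relies on density of $\sH$ (resp.\ $\Q$) in $\mathcal{L}_2$ from Assumption \ref{assumption:varind}, together with enough integrability of $g_1,g_2,g_3$ and of the candidate nuisance functions to ensure $c_q\in\mathcal{L}_2(P_{V_h})$ so that its scalar multiple $\tfrac12 c_q$ is a genuine $\mathcal{L}_2$-limit of feasible perturbations. Once that exchange is justified, the remainder of the argument is immediate.
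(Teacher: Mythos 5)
Your proposal is correct and follows essentially the same route as the paper: complete the square in the inner maximization to show its value is $\tfrac14\E\big[\E[q(V_q)g_1(V)+g_3(V)\mid V_h]^2\big]$ (resp.\ the analogous quantity for $h$), then observe the outer minimum is zero because the true nuisance function—via the double-robustness identities of Proposition \ref{prop:DRHOIF}—makes the conditional mean vanish, so any minimizer must satisfy the conditional moment equation. Your added care about justifying the supremum over the dense class $\sH$ (resp.\ $\Q$) versus the unconstrained $\mathcal{L}_2$ supremum is a fair point the paper leaves implicit, but it does not change the argument.
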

Proposition \ref{prop:separate} and Theorem \ref{thm:conds} show the connection between perturbing doubly robust functionals and conditional moment function; an observation which to the best of our knowledge is new to this work.

\subsection{Estimation Procedure}
\label{sec:estapproach}
We use cross-fitting estimation approach of \cite{chernozhukov2018double} for separating the estimation of the nuisance functions from the parameter of interest. This approach provides us with the benefit that weaker smoothness requirements are needed for the nuisance functions. In the cross-fitting approach, we partition the samples into $L$ equal size  parts $\{I_1,...,I_L\}$. Consider the finite data version of the estimators from the optimization problems \eqref{eq:optq} and \eqref{eq:opth},  obtained by replacing the population expectation operator $\E[\cdot]$ with the sample average operator $\hat{\E}[\cdot]$. For $\ell\in\{1,...,L\}$, we estimate the nuisance functions $(\hat{q}_\ell,\hat{h}_\ell)$ on data from all parts but $I_\ell$. For all $\ell$, let $\hat{\psi}_\ell$ be the estimation of $\psi_0$ obtained by solving $\frac{1}{|I_\ell|}\sum_{i\in I_\ell}IF(V_i;\hat{\psi}_\ell,\hat{q}_\ell,\hat{h}_\ell)=0$. Our final estimator of $\psi_0$ is obtained by 
\begin{equation}
\label{eq:cf-estimator}
\hat{\psi}=\frac{1}{L}\sum_{\ell=1}^L \hat{\psi}_\ell.
\end{equation}
\subsection{Asymptotic Analysis}
\label{sec:analys}
In this subsection, we study the asymptotic properties of the cross-fitting estimator in display \eqref{eq:cf-estimator}.
We require the following regularity conditions for the results.
\begin{assumption}
\label{assumption:convergence}
\begin{enumerate}[label=(\roman*),leftmargin=*]
\itemsep0em
\item
Functions $g_1$, $g_2$, $g_3$, and $g_4$ are bounded.
\item
There exists a constant $\sigma_1$ such that $\big|\E[g_1(V)|V_q,V_h]\big|>\sigma_1>0$.
\item
Functions $q_0$ and $h_0$ are square-integrable with respect to the underlying  measure $P$.
\item 
$
\min\Big\{	
\sup_{v_q}|\hat{q}_\ell(v_q)|
+\sup_{v_h}|h_0(v_h)|
,
\sup_{v_q}|q_0(v_q)|
+\sup_{v_h}|\hat{h}_\ell(v_h)|
\Big\}<\infty.
$
\end{enumerate}
\end{assumption}
In addition to the regularity conditions, we require  conditions on the convergence of the nuisance functions. In our results, we show that the projected space is the right space to impose the convergence constraints on. Below, we use the conventional notations $\mathbb{P}_n[\cdot]$ and $\text{P}[\cdot]$ to denote empirical and population expectations with respect to the variable $V$. Note that for a function $\hat{f}$, unlike the operator $\E[\hat{f}(V)]$, in $\text{P}[\hat{f}(V)]$, the operator does not take the expectation with respect to the possible randomness in the function $\hat{f}$. For any function $f$, we use the norm notation $\|f\|_2\coloneqq \sqrt{\text{P}[f^2]}$. We need the following definition for our convergence assumption.
\begin{definition}
For any given value $\delta>0$, for given function spaces $\Q$ and $\sH$, let
$
\Q^{|\delta}\coloneqq\big\{ q\in\Q:\big\| \textup{P}[q(V_q)|V_h] \big\|_2\le\delta \big\}$, and
$
\sH^{|\delta}\coloneqq\big\{ h\in\sH:\big\| \textup{P}[h(V_h)|V_q] \big\|_2\le\delta \big\}$.
We define the local measure of ill-posedness as
$
\tau_q(\delta)\coloneqq\sup_{q\in\Q^{|\delta}}\|q\|_2$, and
$\tau_h(\delta)\coloneqq\sup_{h\in\sH^{|\delta}}\|h\|_2$.
\end{definition}
$\tau_q(\delta)$ and $\tau_h(\delta)$ are defined as measures of ill-posedness of the conditional expectation operator. This measure was originally proposed in \citep{chen2012estimation} and is commonly used in the econometrics literature.
The intuition behind this measure is as follows: Projecting functions on a space makes them closer (and hence hard to distinguish) in terms of the $L_2$-norm. The measure of ill-posedness relates a certain $L_2$ distance post-projection, to the corresponding largest $L_2$ distance of the functions before projection (i.e. the extent to which projection shrinks the distance). In other words, how much resolution is lost due to the projection. Therefore, a large ill-posedness measure $\tau_h(\delta)$ implies that $V_q$ does not capture much of the information in $V_h$.

\begin{assumption}
\label{assumption:nuisconv}
$(i)$ $\|\hat{q}_\ell\!-\!q_0\|_2\!=\!o_p(1)$, $\|\hat{h}_\ell\!-\!h_0\|_2\!=\!o_p(1)$, i.e., they converge to zero in probability.

$(ii)$ $
\big\|\textup{P}[g_1(V)\{\hat{q}_\ell(V_q)\!-\!q_0(V_q)\}|V_h]\big\|_2\!\!=\!O(r_q(n))
$,
and 
$
\big\|\textup{P}[g_1(V)\{\hat{h}_\ell(V_h)\!-\!h_0(V_h)\}|V_q]\big\|_2\!=\!O(r_h(n))
$,
where 
$(r_q(n),r_h(n))$ satisfy
\begin{equation*}
\begin{aligned}	
\min\big\{
r_h(n)\tau_q\big(r_q(n)\big) , r_q(n) \tau_h\big(r_h(n)\big)    \big\} = o(n^{-\frac{1}{2}}).
\end{aligned}
\end{equation*}
\end{assumption}
Part $(i)$ of Assumption \ref{assumption:nuisconv} requires the consistency of the estimators of the nuisance functions. In Section \ref{sec:kernel}, we provide an RKHS-based estimation framework, under which, for certain choices of the kernel (such as the Gaussian or Sobolev kernels), this assumption is satisfied.
Part $(ii)$ of Assumption \ref{assumption:nuisconv} requires a certain rate of convergence for the estimated nuisance functions. However, instead of requiring a convergence rate for each nuisance function, we require a rate for their product. Hence, in case that one of them is not converging fast enough the other nuisance function can compensate. This is one of the main attractions of double robust estimators.
In Section \ref{sec:kernel}, we will study the local measure of ill-posedness and convergence rates related to part $(ii)$ for the case that a reproducing kernel Hilbert space is used as the hypothesis class.

We have the following result regarding the convergence of the cross-fitting estimator.
\begin{theorem}
\label{thm:convergence}
Under Assumptions \ref{assumption:convergence} and \ref{assumption:nuisconv}, the estimator $\hat{\psi}$ in  \eqref{eq:cf-estimator} is asymptotically linear and satisfies
\begin{equation*}
\begin{aligned}
\sqrt{n}\{\hat{\psi}-\psi_0\}=\sqrt{n}~\hat{\E}[IF(V;\psi_0,q_0,h_0)]+o_p(1),
\end{aligned}
\end{equation*}
\begin{sloppypar}
\noindent
where $o_p(1)$ demonstrates a sequence of variables which converges to zero in probability, and $\sqrt{n}\{\hat{\psi}-\psi_0\}$ converges in distribution to the Gaussian distribution $\mathcal{N}\big( 0,\text{var}~(IF(V; \psi_0,q_0,h_0)) \big)$.
\end{sloppypar}
\end{theorem}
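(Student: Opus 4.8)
The plan is to establish the asymptotic linearity of $\hat\psi$ by first reducing the claim about the cross-fitted average $\hat\psi=\frac1L\sum_\ell\hat\psi_\ell$ to a statement about each fold, and then performing the standard von-Mises / second-order remainder expansion on each $\hat\psi_\ell$. Since $IF(V;\psi,q,h)$ is affine in $\psi$ with slope $-1$, solving $\frac{1}{|I_\ell|}\sum_{i\in I_\ell}IF(V_i;\hat\psi_\ell,\hat q_\ell,\hat h_\ell)=0$ gives the closed form $\hat\psi_\ell=\hat{\E}_{I_\ell}[\,\hat q_\ell(V_q)\hat h_\ell(V_h)g_1(V)+\hat q_\ell(V_q)g_2(V)+\hat h_\ell(V_h)g_3(V)+g_4(V)\,]$, so there is no implicit-function complication. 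First I would write $\hat\psi_\ell-\psi_0=\hat{\E}_{I_\ell}[IF(V;\psi_0,\hat q_\ell,\hat h_\ell)]$ and decompose this into (a) an empirical-process term $(\hat{\E}_{I_\ell}-\text{P})[IF(V;\psi_0,\hat q_\ell,\hat h_\ell)-IF(V;\psi_0,q_0,h_0)]$, (b) the ``oracle'' term $\hat{\E}_{I_\ell}[IF(V;\psi_0,q_0,h_0)]$, and (c) the bias term $\text{P}[IF(V;\psi_0,\hat q_\ell,\hat h_\ell)]$, where in (c) the estimated nuisances are held fixed (cross-fitting makes $\hat q_\ell,\hat h_\ell$ independent of the fold $I_\ell$, which is exactly what licenses treating them as deterministic when conditioning).

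For term (a), the cross-fitting independence plus Assumption \ref{assumption:convergence}(i),(iii),(iv) (boundedness of the $g_j$'s and the sup-norm control on one of the nuisance pairs) together with the $L_2$-consistency in Assumption \ref{assumption:nuisconv}(i) give, by a conditional Chebyshev/variance bound, that the conditional variance of the centered summand goes to $0$; hence term (a) is $o_p(n^{-1/2})$. Term (b) is already in the desired influence-function form. The crux is term (c): using $\E[IF(V;\psi_0,q_0,h_0)]=0$ and linearity in each nuisance, the bias collapses to a single bilinear remainder. Concretely, writing $\Delta_q=\hat q_\ell-q_0$, $\Delta_h=\hat h_\ell-h_0$, and exploiting Proposition \ref{prop:DRHOIF} to kill the two ``single-error'' pieces, one gets $\text{P}[IF(V;\psi_0,\hat q_\ell,\hat h_\ell)]=\text{P}\big[g_1(V)\,\Delta_q(V_q)\,\Delta_h(V_h)\big]$. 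Now I would iterate conditional expectations: this equals $\text{P}\big[\Delta_h(V_h)\,\text{P}[g_1(V)\Delta_q(V_q)\mid V_h]\big]$, which by Cauchy--Schwarz is at most $\|\Delta_h\|_2\cdot\big\|\text{P}[g_1(V)\Delta_q(V_q)\mid V_h]\big\|_2$; symmetrically it is also bounded by $\|\Delta_q\|_2\cdot\big\|\text{P}[g_1(V)\Delta_h(V_h)\mid V_q]\big\|_2$. This is where the local measure of ill-posedness enters: the key observation (to be proved as a small lemma) is that by Proposition \ref{prop:separate}, the estimated $\hat q_\ell$ and $\hat h_\ell$ lie (up to the convergence rates $r_q(n),r_h(n)$) in the restricted classes $\Q^{|\,O(r_q(n))}$ and $\sH^{|\,O(r_h(n))}$, so $\|\Delta_h\|_2\le\tau_h(r_h(n))$ and $\|\Delta_q\|_2\le\tau_q(r_q(n))$ up to constants. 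Substituting, the bias is bounded by $\min\{\,r_h(n)\,\tau_q(r_q(n))\,,\,r_q(n)\,\tau_h(r_h(n))\,\}$, which is $o(n^{-1/2})$ by Assumption \ref{assumption:nuisconv}(ii).

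Finally I would collect the three terms: summing over $\ell=1,\dots,L$ and averaging, terms (a) and (c) contribute $o_p(n^{-1/2})$ per fold (hence in aggregate, since $L$ is fixed), and the averaged oracle term (b) over the disjoint folds reassembles into $\hat{\E}[IF(V;\psi_0,q_0,h_0)]$ over the full sample. This yields $\sqrt n(\hat\psi-\psi_0)=\sqrt n\,\hat{\E}[IF(V;\psi_0,q_0,h_0)]+o_p(1)$, and the CLT together with $\E[IF(V;\psi_0,q_0,h_0)]=0$ and finiteness of $\text{var}(IF(V;\psi_0,q_0,h_0))$ (guaranteed by the boundedness in Assumption \ref{assumption:convergence}(i) and square-integrability in (iii)) gives convergence in distribution to $\mathcal N(0,\text{var}(IF(V;\psi_0,q_0,h_0)))$. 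I expect the main obstacle to be the rigorous justification that the fitted nuisances belong to the $\delta$-restricted classes with the claimed $\delta$, i.e. translating the conditional-moment characterization of Proposition \ref{prop:separate} and the rates in Assumption \ref{assumption:nuisconv}(ii) into membership $\hat h_\ell\in\sH^{|\,O(r_h(n))}$, so that the ill-posedness measure can legitimately upgrade a projected-norm rate into a genuine $L_2$ rate; the empirical-process control in term (a) is routine given cross-fitting and boundedness, and the double-robust cancellation in term (c) is immediate from Proposition \ref{prop:DRHOIF}.
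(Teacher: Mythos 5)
Your proposal is correct and follows essentially the same route as the paper: the same three-term decomposition (empirical-process difference handled by conditional Chebyshev under cross-fitting, the oracle term handled by the CLT, and the bias term reduced via double robustness to the bilinear remainder and bounded by Cauchy--Schwarz plus the ill-posedness measure, with the minimum of the two symmetric bounds matching Assumption \ref{assumption:nuisconv}(ii)). The ``small lemma'' you defer---upgrading the projected rate into membership of $\hat q_\ell-q_0$, $\hat h_\ell-h_0$ in the $\delta$-restricted classes---is exactly where the paper invokes Assumption \ref{assumption:convergence}(ii), i.e.\ $|\E[g_1(V)\mid V_q,V_h]|>\sigma_1>0$, to strip $g_1$ out of the projected norm, rather than Proposition \ref{prop:separate}.
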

As a corollary of Theorem \ref{thm:convergence}, one can use the influence function of the proposed estimator to obtain confidence intervals for the parameter of interest.
\begin{remark}
In our approach, we use an IF, which always leads to a second order bias. This fundamental property was recently popularized by \cite{chernozhukov2018double} in terms of Neyman Orthogonality property. However, note that our result is even stronger because not only does our estimator have second order bias, but also it has the so-called mixed bias property, which is the basis of double-robustness and is not guaranteed by Neyman Orthogonality. Please see the Supplementary Materials for a detailed discussion.	
\end{remark}
\section{AVERAGE CAUSAL EFFECT UNDER PROXIMAL CAUSAL INFERENCE FRAMEWORK}
\label{sec:apps}
In this section, we apply the proposed minimax estimation approach to estimating average causal effect from observational data. Let $A$ be a binary treatment variable and $Y$ be the outcome variable. For $a\in\{0,1\}$, let $Y^{(a)}$ be the counterfactual outcome variable representing the outcome if (contrary to the fact) the treatment is set to value $a$. The Average causal effect (ACE) captures the difference in the expected value of the counterfactual outcome variables, that is $\textit{ACE}=\E[Y^{(1)}-Y^{(0)}]$. Therefore, it suffices to focus on the problem of estimating the counterfactual mean of form $\psi_0=\E[Y^{(a)}]$, for $a\in\{0,1\}$. The most popular assumption for studying average causal effect is the so-called \emph{conditional exchangeability} assumption \citep{hernan2020causal}. Intuitively, this assumption requires that we have collected enough covariates of the units in the study that conditional on those covariates, the observational data is effectively as good as a conditionally randomized experiment. That is, conditional on the covariates, the units in the treated and untreated groups are exchangeable.

Despite its popularity, the exchangeability assumption is often violated even in laboratory settings as there may be confounders of the treatment and the outcome which are latent.  Hence, causal inference approaches are needed that are capable of handling unobserved confounders. Proximal causal inference is one such approach in which, although latent confounders are allowed, presence of certain proxy variables for the confounders is required. Formally, the proximal causal inference framework allows for unobserved confounder variable (possibly vector-valued) $U$, observed confounder $X$, and requires access to  proxy variables $Z$ and $W$ which satisfy the following conditions.
\begin{assumption}
\label{assumption:prox1}
~
\begin{itemize}[leftmargin=*]
\itemsep0em
\item $Y^{(a,z)}=Y^{(a)}$ almost surely, for all $a$ and $z$.
\item $W^{(a,z)}=W$ almost surely, for all $a$ and $z$.
\item $(Y^{(a)},W)\indep (A,Z)|(U,X)$, for $a\in\{0,1\}$.
\end{itemize}
\end{assumption}
\begin{figure}[t]
\centering
\tikzstyle{block} = [circle, inner sep=2.5pt, fill=lightgray]
\tikzstyle{input} = [coordinate]
\tikzstyle{output} = [coordinate]
\begin{tikzpicture}
\tikzset{edge/.style = {->,> = latex'}}
            \node[block] (u) at  (0,1.8) {$U$};
            \node[] (x) at  (0,.9) {$X$};
            \node[] (z) at  (-2,.9) {$Z$};
            \node[] (w) at  (2,.9) {$W$};
            \node[] (a) at  (-1.1,0) {$A$};
            \node[] (y) at  (1.1,0) {$Y$};
			\draw[edge] (u) to (x);
			\draw[edge] (u) to (z);
			\draw[edge] (u) to (w);
			\draw[edge] (u) to (a);
			\draw[edge] (u) to (y);
			\draw[edge] (x) to (z);
			\draw[edge] (x) to (w);
			\draw[edge] (x) to (a);
			\draw[edge] (x) to (y);						
			\draw[edge] (z) to (a);			
            \draw[edge] (w) to (y);
            \draw[edge] (a) to (y);
\end{tikzpicture}
\caption{A proximal DAG.}
\label{fig:DAG}
\end{figure}
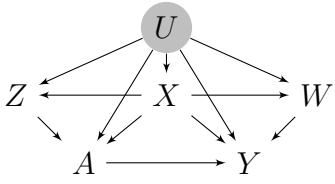

Note that Assumption \ref{assumption:prox1} implies $Y\indep Z|A,U,X$ and $W\indep (A,Z)|U,X$, which can be taken as primitive identification conditions if one does not wish to entertain an intervention on $Z$. 
Figure \ref{fig:DAG} depicts an example graphical representation that satisfies these assumptions (the gray variable $U$ is unobserved). 

In order to obtain identifiability, \cite{miao2018identifying} considered the following assumptions.
\begin{assumption}[Consistency and Positivity]
\label{assumption:pos2}
$(i)$ $Y^{(A,Z)}=Y$,
$(ii)$ $0<Pr(A=1|U,X)<1$ a.s.
\end{assumption}
\begin{assumption}
\label{assumption:compexist1}
$(i)$ For any $a$, $x$, if $\E[g(U)|Z,A = a,X = x]=0$ almost surely, then $g(U)=0$ almost surely.
$(ii)$ There exists an outcome confounding bridge function $h_0(w,a,x)$ that solves the following integral equation
\begin{equation}
\label{eq:ORproxexist}
\begin{aligned}	
\E[Y|Z,A,X]=\E[h_0(W,A,X)|Z,A,X].
\end{aligned}
\end{equation}
\end{assumption}
\cite{miao2018identifying} established the following nonparametric identification result.
\begin{theorem}[\cite{miao2018identifying}]
\label{thm:POR}
Under Assumptions \ref{assumption:prox1}, \ref{assumption:pos2}, and \ref{assumption:compexist1},	the counterfactual mean $\E[Y^{(a)}]$ is nonparametrically identified by
$\E[Y^{(a)}]= \E[h_0(W,a,X)]$.
\end{theorem}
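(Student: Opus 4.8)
The plan is to start from the counterfactual mean $\E[Y^{(a)}]$, introduce the latent confounder $U$ and observed confounder $X$ via iterated expectations, and then use the bridge function $h_0$ together with the completeness condition in Assumption \ref{assumption:compexist1}$(i)$ to eliminate $U$ in favor of the observed proxy $W$. First I would write, by the law of total expectation and Assumption \ref{assumption:pos2}$(i)$ (consistency), together with the conditional exchangeability consequence $Y^{(a)}\indep A\mid (U,X)$ extracted from Assumption \ref{assumption:prox1},
\begin{equation*}
\E[Y^{(a)}]=\E\big[\E[Y^{(a)}\mid U,X]\big]=\E\big[\E[Y\mid A=a,U,X]\big].
\end{equation*}
The nontrivial content is then to show that $\E[Y\mid A=a,U,X]$ can be replaced by $\E[h_0(W,a,X)\mid U,X]$ inside the outer expectation.

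The key step is to establish the pointwise identity $\E[Y\mid A=a,U,X]=\E[h_0(W,a,X)\mid U,X]$ (almost surely). To get this, I would define $g(U,X)\coloneqq \E[Y\mid A=a,U,X]-\E[h_0(W,a,X)\mid U,X]$ for fixed $a,x$, viewed as a function of $U$, and show $\E[g(U)\mid Z,A=a,X=x]=0$ a.s., so that Assumption \ref{assumption:compexist1}$(i)$ forces $g\equiv 0$. Computing $\E[g(U)\mid Z,A=a,X=x]$: the first term gives $\E[Y\mid Z,A=a,X=x]$ after using $Y\indep Z\mid A,U,X$ (which the excerpt notes follows from Assumption \ref{assumption:prox1}) and iterating; the second term gives $\E[\E[h_0(W,a,X)\mid U,X]\mid Z,A=a,X]$, and using $W\indep (A,Z)\mid U,X$ this collapses to $\E[h_0(W,a,X)\mid Z,A=a,X]$. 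By the defining integral equation \eqref{eq:ORproxexist} these two are equal, so $\E[g(U)\mid Z,A=a,X=x]=0$. Completeness then yields $g=0$ a.s., i.e. the desired pointwise identity.

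Finally I would substitute back: $\E[Y^{(a)}]=\E\big[\E[Y\mid A=a,U,X]\big]=\E\big[\E[h_0(W,a,X)\mid U,X]\big]=\E[h_0(W,a,X)]$ by the tower property, which is the claimed identification formula. I would also note that Assumption \ref{assumption:compexist1}$(ii)$ guarantees such an $h_0$ exists, and that the argument does not require uniqueness of $h_0$ (any solution gives the same value of $\E[h_0(W,a,X)]$, as a byproduct of the derivation).

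The main obstacle is the careful bookkeeping of the conditional independences: one must verify that each swap of conditioning sets (removing $Z$ from $Y$'s conditioning set given $(A,U,X)$, and removing $(A,Z)$ from $W$'s conditioning set given $(U,X)$) is licensed by Assumption \ref{assumption:prox1}, and that the positivity condition in Assumption \ref{assumption:pos2}$(ii)$ is what makes $\E[Y\mid A=a,U,X]$ well-defined on the relevant support. The completeness condition is doing the real work — it is the only place the "sufficiently rich proxy" requirement enters — so the crux is setting up the function $g(U)$ so that the integral equation \eqref{eq:ORproxexist} is exactly the statement that its conditional mean given $(Z,A=a,X=x)$ vanishes.
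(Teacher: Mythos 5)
Your proposal is correct and is essentially the standard argument: the paper itself does not reprove Theorem \ref{thm:POR} but cites \cite{miao2018identifying}, and your derivation (show $\E[Y\mid A=a,U,X]=\E[h_0(W,a,X)\mid U,X]$ by checking that the difference has zero conditional mean given $(Z,A=a,X=x)$ via $Y\indep Z\mid A,U,X$, $W\indep(A,Z)\mid U,X$ and the integral equation \eqref{eq:ORproxexist}, then invoke the completeness in Assumption \ref{assumption:compexist1}$(i)$ and marginalize over $(U,X)$) is exactly the proof given there. Your side remarks — that positivity makes $\E[Y\mid A=a,U,X]$ well defined and that uniqueness of $h_0$ is not needed — are also accurate.
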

\cite{cui2020semiparametric} proposed an alternative proximal identification result based on the following counterpart of Assumption \ref{assumption:compexist1}.
\begin{assumption}
\label{assumption:compexist2}
$(i)$ For any $a$, $x$, if $\E[g(U)|W,A = a,X = x]=0$ almost surely, then $g(U)=0$ almost surely.
$(ii)$ There exists a treatment confounding bridge function $q_0(z,a,x)$ that solves the following integral equation
\begin{equation}
\label{eq:IPWproxexist}
\begin{aligned}	
\E[q_0(Z,a,X)|W,A=a,X]=\frac{1}{P(A=a|W,X)}.
\end{aligned}
\end{equation}
\end{assumption}
\cite{cui2020semiparametric} established the following nonparametric identification result.
\begin{theorem}[\cite{cui2020semiparametric}]
\label{thm:PIPW}
Under Assumptions \ref{assumption:prox1}, \ref{assumption:pos2}, and \ref{assumption:compexist2}, the counterfactual mean $\E[Y^{(a)}]$ is nonparametrically identified by
$\E[Y^{(a)}]= \E[I(A=a)q_0(Z,a,X)Y]$.
\end{theorem}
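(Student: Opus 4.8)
The plan is to verify the identity by unrolling the expectation on the right-hand side, peeling off conditioning sets in the order $(U,Z,A,X)$ and then $(U,X)$, and reducing everything to the single fact that the re-weighting function $\phi(U,X):=\E[I(A=a)q_0(Z,a,X)\mid U,X]$ is identically $1$.

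First I would condition the product $I(A=a)q_0(Z,a,X)Y$ on $(U,Z,A,X)$. On the event $\{A=a\}$, consistency (Assumption \ref{assumption:pos2}$(i)$, $Y^{(A,Z)}=Y$) together with $Y^{(a,z)}=Y^{(a)}$ gives $Y=Y^{(a)}$, and the conditional independence $(Y^{(a)},W)\indep(A,Z)\mid(U,X)$ collapses $\E[Y^{(a)}\mid U,Z,A=a,X]$ to $\E[Y^{(a)}\mid U,X]$. Taking iterated expectations, then conditioning once more on $(U,X)$ to pull out the $(U,X)$-measurable factor $\E[Y^{(a)}\mid U,X]$, I obtain $\E[I(A=a)q_0(Z,a,X)Y]=\E\big[\phi(U,X)\,\E[Y^{(a)}\mid U,X]\big]$. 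Hence the theorem follows once $\phi(U,X)=1$ almost surely, since then the right-hand side is $\E[\E[Y^{(a)}\mid U,X]]=\E[Y^{(a)}]$.

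The substantive step, which I expect to be the main obstacle, is establishing $\phi(U,X)=1$ a.s. Here I would write $\phi(U,X)=P(A=a\mid U,X)\,\psi(U,X)$ with $\psi(U,X):=\E[q_0(Z,a,X)\mid U,A=a,X]$, so it suffices to show $\psi(U,X)=1/P(A=a\mid U,X)$; positivity (Assumption \ref{assumption:pos2}$(ii)$) makes this quantity well defined. I would then invoke the completeness condition (Assumption \ref{assumption:compexist2}$(i)$): for fixed $x$ it is enough to check that $g(U):=\psi(U,x)-1/P(A=a\mid U,x)$ has vanishing conditional mean given $(W,A=a,X=x)$. Two computations deliver this. $(a)$ Starting from the bridge equation \eqref{eq:IPWproxexist} and using $Z\indep W\mid(U,A,X)$ (a consequence of $(A,Z)\indep W\mid(U,X)$), a law-of-iterated-expectations argument gives $\E[\psi(U,X)\mid W,A=a,X]=1/P(A=a\mid W,X)$. $(b)$ Using $A\indep W\mid(U,X)$ and Bayes' rule for the conditional density of $U$ given $(W,A=a,X)$, the propensity weights cancel exactly, yielding $\E\big[1/P(A=a\mid U,X)\mid W,A=a,X\big]=1/P(A=a\mid W,X)$. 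Subtracting $(b)$ from $(a)$ gives $\E[g(U)\mid W,A=a,X]=0$, and completeness forces $g\equiv0$, hence $\phi\equiv1$.

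I would close by assembling the pieces: $\E[I(A=a)q_0(Z,a,X)Y]=\E[\phi(U,X)\,\E[Y^{(a)}\mid U,X]]=\E[\E[Y^{(a)}\mid U,X]]=\E[Y^{(a)}]$. All of the conditional-independence manipulations used above are routine consequences of Assumption \ref{assumption:prox1}; the one delicate point is step $(b)$, where the Bayes computation must be matched precisely against the form of the completeness condition (conditioning on $W,A=a,X$ rather than on $W,X$), which is where the argument would break down if handled carelessly.
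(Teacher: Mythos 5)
Your proposal is correct. Note that this paper does not prove Theorem \ref{thm:PIPW} itself --- it is imported from \cite{cui2020semiparametric} --- and your argument is essentially the standard proof given there: reduce the claim to showing $\E[I(A=a)q_0(Z,a,X)\mid U,X]=1$, establish $\E[q_0(Z,a,X)\mid U,A=a,X]=1/P(A=a\mid U,X)$ by combining the bridge equation \eqref{eq:IPWproxexist}, the conditional independences $W\indep Z\mid(U,A,X)$ and $W\indep A\mid(U,X)$ implied by Assumption \ref{assumption:prox1}, and the completeness condition of Assumption \ref{assumption:compexist2}$(i)$, and then iterate expectations; your handling of consistency, positivity, and the matching of the Bayes step to the conditioning set of the completeness condition is all sound.
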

\begin{remark}
Part $(i)$ in Assumption \ref{assumption:compexist1} and \ref{assumption:compexist2} is known as completeness condition; a technical condition central to the study of sufficiency in statistical inference. 
Regarding part $(ii)$ of these assumptions, note that equations \eqref{eq:ORproxexist} and \eqref{eq:IPWproxexist} define inverse problems known as Fredholm integral equations of the first kind. 
A sufficient condition for part $(ii)$ of Assumption \ref{assumption:compexist1} is the following completeness assumption together with certain mild regularity conditions \citep{miao2018confounding}:
For any $a$, $x$, if $\E[g(Z)|W,A = a,X = x]=0$ almost surely, then $g(Z)=0$ a.s.
Similar result holds for part $(ii)$ of Assumption \ref{assumption:compexist2}.
\end{remark}

Let $V=(X,Z,W,A,Y)$.
\cite{cui2020semiparametric} derived the locally semiparametric efficient influence function for the parameter of interest $\psi_0$ in a nonparametric model in which both confounding bridge functions are unrestricted, at the intersection submodel where both are uniquely identified as
\begin{equation}
\label{eq:proximalIF}
\begin{aligned}	
&IF_{\psi_0}(V)=-I(A=a)q_0(Z,A,X)h_0(W,A,X)\\
&+I(A=a)Yq_0(Z,A,X)+h_0(W,a,X)-\psi_0,
\end{aligned}
\end{equation}
This influence function satisfies our generic doubly robust functional form in expression \eqref{eq:IF} with the choice of 
 $g_1(V)=-I(A=a)$, $g_2(V)=I(A=a)Y$, $g_3(V)=1$, and 
$g_4(V)=0$.
Therefore, we can apply Theorem \ref{thm:conds} to obtain the following result.
\begin{proposition}
\label{prop:proxATE}
For arm $A\!\!=\!a$, let $(\tilde{q}_a,\tilde{h}_a)$ be a solution to the population-level minimax optimizations
\begin{equation*}
\begin{aligned}	
\tilde{h}_a&=\arg\min_h\max_q \E[\{-I(A=a)h(W,X)+I(A=a)Y\}q(Z,X)-q^2(Z,X)],\\
\tilde{q}_a&=\arg\min_q\max_h \E[\{-I(A=a)q(Z,X)+1\}h(W,X)-h^2(W,X)].
\end{aligned}
\end{equation*}
Then functions $\tilde{q}(z,a,x)=I(a=0)\tilde{q}_0(z,x)+I(a=1)\tilde{q}_1(z,x)$, and 
$\tilde{h}(w,a,x)=I(a=0)\tilde{h}_0(w,x)+I(a=1)\tilde{h}_1(w,x)$ solve the integral equations \eqref{eq:ORproxexist} and \eqref{eq:IPWproxexist}, respectively.
\end{proposition}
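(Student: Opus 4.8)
The plan is to show that the two population-level minimax programs in the statement are nothing but the generic penalized optimizations \eqref{eq:optq} and \eqref{eq:opth} instantiated at the proximal influence function \eqref{eq:proximalIF}, so that Theorem~\ref{thm:conds} applies verbatim and hands us conditional moment equations; the remaining work is to convert those moment equations into the Fredholm equations \eqref{eq:ORproxexist} and \eqref{eq:IPWproxexist} by integrating out the treatment indicator $I(A=a)$, and then to glue the two arm-specific solutions together.

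First I would fix $a\in\{0,1\}$, set $V=(X,Z,W,A,Y)$, $V_q=(Z,X)$, $V_h=(W,X)$, and substitute the functions $g_1(V)=-I(A=a)$, $g_2(V)=I(A=a)Y$, $g_3(V)=1$, $g_4(V)=0$ identified right after \eqref{eq:proximalIF} into \eqref{eq:optq}--\eqref{eq:opth}. A direct check shows the objective in \eqref{eq:opth} becomes $\E[\{-I(A=a)h(W,X)+I(A=a)Y\}q(Z,X)-q^2(Z,X)]$ and the objective in \eqref{eq:optq} becomes $\E[\{-I(A=a)q(Z,X)+1\}h(W,X)-h^2(W,X)]$, which are exactly the programs defining $\tilde h_a$ and $\tilde q_a$. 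Hence Theorem~\ref{thm:conds} yields the conditional moment equations
\begin{equation*}
\E\big[{-I(A=a)}\tilde h_a(W,X)+I(A=a)Y \,\big|\, Z,X\big]=0, \qquad \E\big[{-I(A=a)}\tilde q_a(Z,X)+1 \,\big|\, W,X\big]=0.
\end{equation*}

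Next I would remove the indicator by iterating the expectation over $A$. The first equation becomes $P(A=a\mid Z,X)\,\E[Y-\tilde h_a(W,X)\mid Z,A=a,X]=0$, and under positivity (Assumption~\ref{assumption:pos2}, which via the conditional-independence structure of Assumption~\ref{assumption:prox1} guarantees $P(A=a\mid Z,X)>0$ a.s.) this is equivalent to $\E[Y\mid Z,A=a,X]=\E[\tilde h_a(W,X)\mid Z,A=a,X]$. The second equation becomes $P(A=a\mid W,X)\,\E[\tilde q_a(Z,X)\mid W,A=a,X]=1$, i.e., $\E[\tilde q_a(Z,X)\mid W,A=a,X]=1/P(A=a\mid W,X)$. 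Since by construction $\tilde h(w,a,x)$ equals $\tilde h_a(w,x)$ and $\tilde q(z,a,x)$ equals $\tilde q_a(z,x)$ once the arm slot is set to $a$, repeating the argument for $a=0$ and $a=1$ shows that $\tilde h$ satisfies \eqref{eq:ORproxexist} and $\tilde q$ satisfies \eqref{eq:IPWproxexist} at every arm, completing the proof.

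The substitution of the $g$'s and the tower-property algebra are routine; the one genuine subtlety is that Theorem~\ref{thm:conds} only controls the conditional expectations given $(Z,X)$ and $(W,X)$ --- because in the per-arm programs the adversary ranges over functions of those variables rather than of $(Z,A,X)$ and $(W,A,X)$ --- so passing to the $\{A=a\}$-conditional statements demanded by \eqref{eq:ORproxexist}--\eqref{eq:IPWproxexist} is exactly where positivity must be invoked. This is also why the estimand has to be built arm by arm: each per-arm program leaves the integral equation at the opposite arm entirely unconstrained (there the moment function is identically zero), so no single program recovers the full bridge function, whereas stitching the two arm-specific minimizers together through the indicators does.
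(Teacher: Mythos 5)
Your proposal is correct and follows essentially the same route as the paper: invoke Theorem~\ref{thm:conds} with $g_1=-I(A=a)$, $g_2=I(A=a)Y$, $g_3=1$, $g_4=0$ to obtain the conditional moment equations given $(Z,X)$ and $(W,X)$, then use the tower property and positivity to factor out $P(A=a\mid Z,X)$ and $P(A=a\mid W,X)$ and recover \eqref{eq:ORproxexist} and \eqref{eq:IPWproxexist} arm by arm. Your explicit remarks on where positivity enters and why the construction must be done per arm are slightly more detailed than the paper's proof but do not change the argument.
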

Note that surprisingly, the propensity score does not appear in the moment equations in Proposition \ref{prop:proxATE}; a fact previously noted in \citep{cui2020semiparametric}, although motivated from a different perspective.

\section{DOUBLY ROBUST KERNEL ESTIMATOR}
\label{sec:kernel}

In this section, we describe solving the minimax problems \eqref{eq:optq} and \eqref{eq:opth} when an RKHS is used as the hypothesis class. We demonstrate that in this case, fast convergence rates can be obtained and we present a closed-form solution for the optimization problem which renders an easy implementation of the method possible.
We further provide results regarding estimating the ill-posedness measure for RKHSes.

Consider the finite data version of the estimators by replacing the population expectation operator $\E[\cdot]$ with sample average operator $\hat{\E}[\cdot]$. We propose the following Tikhonov regularization-based optimizations:
\begin{equation}
\label{eq:optkernelq}
\begin{aligned}
\hat{q}=\arg\min_{q\in\mathcal{Q}}&\sup_{h\in\mathcal{H}} \hat{\E}\big[ h(V_h)[q(V_q)g_1(V)+g_3(V)]
- h^2(V_h) \big]-\lambda^{q}_{\sH}\|h\|_{\mathcal{H}}^2+\lambda^{q}_{\Q}\|q\|_{\mathcal{Q}}^2,
\end{aligned}
\end{equation}
\begin{equation}
\label{eq:optkernelh}
\begin{aligned}
\hat{h}=\arg\min_{h\in\mathcal{H}}&\sup_{q\in\mathcal{Q}} \hat{\E}\big[ q(V_q)[h(V_h)g_1(V)+g_2(V)]
- q^2(V_q) \big]-\lambda^{h}_{\Q}\|q\|_{\mathcal{Q}}^2+\lambda^{h}_{\sH}\|h\|_{\mathcal{H}}^2,
\end{aligned}
\end{equation}
where we assume $\mathcal{H}$ and $\mathcal{Q}$ are RKHSes with kernels $K_{\mathcal{H}}$ and $K_{\mathcal{Q}}$, respectively, equipped with the RKHS norms 
$\|\cdot\|_{\mathcal{H}}$ 
and 
$\|\cdot\|_{\mathcal{Q}}$.
Since the form of the two optimization problems are the same, in the following, we only discuss optimization problem \eqref{eq:optkernelh}. We drop the superscripts $h$ from $\lambda^{h}_{\Q}$ and $\lambda^{h}_{\sH}$ to make the notations less cluttered, and denote sample size by $n$.
\subsection{Closed-Form Solution for the Minimax Optimization}
We start by showing that the optimization problem in Equation \eqref{eq:optkernelh} has a closed-form solution.
Define
$K_{\Q,n}=(K_\Q(V_{q_i},V_{q_j}))_{i,j=1}^{n}$
and $K_{\sH,n}=(K_\sH(V_{h_i},V_{h_j}))_{i,j=1}^{n}$
as the empirical kernel matrices corresponding to spaces $\Q$ and $\sH$, respectively.

\begin{proposition}
\label{prop:mingen}
Equation \eqref{eq:optkernelh} achieves its optimum at $\hat{h}=\sum_{i=1}^n\alpha_{i}K_{\sH}(V_{h_i},\cdot)$,
with $\alpha$ defined as 
\begin{equation*}
\begin{aligned}	
\alpha=&-\big( K_{\sH,n}diag(g_{1,n}) \Gamma diag(g_{1,n}) K_{\sH,n}+n^2\lambda_\sH K_{\sH,n}\big)^\dagger
 K_{\sH,n}diag(g_{1,n}) \Gamma g_{2,n},
\end{aligned}
\end{equation*}
where 
$\Gamma=\frac{1}{4}K_{\Q,n}( \frac{1}{n} K_{\Q,n}+\lambda_\Q I_n)^{-1}$,
$diag(g_{1,n})$ is a diagonal matrix with $g_1(V_i)$ as the $i$-th diagonal entry, and $g_{2,n}\coloneqq[g_2(V_1)\cdots g_2(V_n)]^\top$,
and $^\dagger$ denotes Moore-Penrose pseudoinverse.
\end{proposition}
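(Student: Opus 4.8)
The plan is to solve the minimax problem in two nested closed-form steps: first fix $h$ and evaluate the inner supremum over $q$ exactly, then minimize the resulting objective over $h$. Both steps will collapse to finite-dimensional quadratic optimization via a representer-theorem argument, so that what remains is pure linear algebra.

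First I would argue the reduction to the sample span. Fixing $h$ and decomposing an arbitrary $q\in\Q$ as $q=q_\parallel+q_\perp$, with $q_\parallel\in\operatorname{span}\{K_\Q(V_{q_i},\cdot)\}_{i=1}^n$ and $q_\perp$ orthogonal to it in $\|\cdot\|_\Q$, the reproducing property gives $q_\perp(V_{q_i})=0$ for every $i$, so every term inside $\hat{\E}[\cdot]$ in \eqref{eq:optkernelh} depends on $q$ only through $q_\parallel$; since $\|q\|_\Q^2=\|q_\parallel\|_\Q^2+\|q_\perp\|_\Q^2$ appears with a negative sign and we are maximizing over $q$, the supremum is attained at $q_\perp=0$. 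The identical argument, now with a positive regularizer because we minimize over $h$, shows the outer optimum is attained at some $h=\sum_{i=1}^n\alpha_i K_\sH(V_{h_i},\cdot)$. I would then parametrize $q=\sum_j\beta_j K_\Q(V_{q_j},\cdot)$ and translate the problem into vectors $\alpha,\beta\in\mathbb{R}^n$ using $q(V_{q_i})=(K_{\Q,n}\beta)_i$, $\|q\|_\Q^2=\beta^\top K_{\Q,n}\beta$, $h(V_{h_i})=(K_{\sH,n}\alpha)_i$, and $\|h\|_\sH^2=\alpha^\top K_{\sH,n}\alpha$.

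For the inner maximization, writing $u:=diag(g_{1,n})K_{\sH,n}\alpha+g_{2,n}$ (the vector with $i$-th entry $g_1(V_i)h(V_{h_i})+g_2(V_i)$), the objective in \eqref{eq:optkernelh} becomes the concave quadratic $\tfrac1n\beta^\top K_{\Q,n}u-\beta^\top(\tfrac1n K_{\Q,n}^2+\lambda_\Q K_{\Q,n})\beta$ in $\beta$, plus the term $\lambda_\sH\alpha^\top K_{\sH,n}\alpha$ that is constant in $\beta$. Setting the gradient in $\beta$ to zero yields $K_{\Q,n}(\tfrac2n K_{\Q,n}+2\lambda_\Q I_n)\beta=\tfrac1n K_{\Q,n}u$, which is solved by $K_{\Q,n}\beta=\tfrac2n\Gamma u$ with $\Gamma=\tfrac14 K_{\Q,n}(\tfrac1n K_{\Q,n}+\lambda_\Q I_n)^{-1}$; note $\Gamma$ is symmetric and positive semidefinite, being a function of $K_{\Q,n}$. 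Substituting back, I expect the optimal inner value to collapse to $\tfrac1{n^2}u^\top\Gamma u$ (using $\Gamma=\Gamma^\top$). When $K_{\Q,n}$ is singular one replaces the inverse by the corresponding pseudoinverse and notes that $\tfrac1n K_{\Q,n}u\in\operatorname{range}(K_{\Q,n})$, so a solution still exists and the value is unchanged.

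It then remains to minimize $S(\alpha)=\tfrac1{n^2}u^\top\Gamma u+\lambda_\sH\alpha^\top K_{\sH,n}\alpha$ with $u=M\alpha+g_{2,n}$ and $M:=diag(g_{1,n})K_{\sH,n}$, a convex quadratic in $\alpha$ with Hessian proportional to the positive semidefinite matrix $M^\top\Gamma M+n^2\lambda_\sH K_{\sH,n}$. Its first-order condition $(M^\top\Gamma M+n^2\lambda_\sH K_{\sH,n})\alpha=-M^\top\Gamma g_{2,n}$ is necessary and sufficient for a minimizer, and since both $M^\top\Gamma g_{2,n}=K_{\sH,n}diag(g_{1,n})\Gamma g_{2,n}$ and the range of the coefficient matrix lie inside $\operatorname{range}(K_{\sH,n})$, the system is consistent; its minimum-norm solution is $\alpha=-(M^\top\Gamma M+n^2\lambda_\sH K_{\sH,n})^\dagger M^\top\Gamma g_{2,n}$, and expanding $M=diag(g_{1,n})K_{\sH,n}$ and $M^\top=K_{\sH,n}diag(g_{1,n})$ gives exactly the claimed expression for $\alpha$. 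The individual manipulations are routine; the steps I expect to demand the most care are the representer reduction (since $\Q$ and $\sH$ are full RKHSes rather than norm balls, one must rely on the signs of the regularizers), the constant-tracking needed to land precisely on $\Gamma=\tfrac14 K_{\Q,n}(\tfrac1n K_{\Q,n}+\lambda_\Q I_n)^{-1}$, and above all the rank-deficient case, where one must verify that the pseudoinverse formulas genuinely attain the optima and not merely solve possibly-inconsistent normal equations.
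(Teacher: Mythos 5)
Your proof is correct and follows essentially the same route as the paper's: a representer-theorem reduction of both players to the sample span (argued via orthogonal decomposition rather than citing the generalized representer theorem), an explicit closed-form inner maximum equal to $\tfrac{1}{n^2}u^\top\Gamma u$, and a convex quadratic minimization in $\alpha$ whose first-order conditions give the stated pseudoinverse formula. Your added attention to consistency of the normal equations (which the paper glosses over) is fine in substance, though the cleanest justification is that the objective is a nonnegative convex quadratic, hence attains its minimum, and for $\lambda_\sH>0$ the coefficient matrix, being a sum of PSD terms including $n^2\lambda_\sH K_{\sH,n}$, has range exactly $\operatorname{range}(K_{\sH,n})$, which contains the right-hand side.
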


\subsection{Convergence Analysis of the Nuisance Functions}
In Section \ref{sec:proposal}, we observed that the solution to the population level optimization problem \eqref{eq:opth} satisfies the conditional moment equation $\E[\hat{h}(V_h)g_1(V)+g_2(V)|V_q]=0$, which is equivalent to $\E\big[g_1(V)\{\hat{h}(V_h)-{h}_0(V_h)\}\big|V_q\big]=0$. 
Hence, to quantify the performance of an estimator $\hat{h}$, we define the \emph{projected risk} of the estimator as
\begin{equation*}
\begin{aligned}	
R(\hat{h})\coloneqq \big\|\text{P}\big[g_1(V)\{\hat{h}(V_h)-{h}_0(V_h)\}\big|V_q\big]\big\|_2.
\end{aligned}
\end{equation*}
We will use the recently proposed approach of \cite{dikkala2020minimax} for bounding the projected risk of the regularized minimax estimator in \eqref{eq:optkernelh}. 
The bound that we provide is based on the critical radii of the involved function classes, which are defined by upper bounding the localized Rademacher complexity of the class. See the Supplementary Materials for the definition of localized Rademacher complexity, critical radius, as well as a computation method when an RKHS is used as the function class.

For any function class $\mathcal{F}$, let $\mathcal{F}_B\coloneqq\{f\in\mathcal{F}:\|f\|^2_{\mathcal{F}}\le B\}$.
We require the following conditions on the function classes $\sH$ and $\Q$ for the convergence rate results.
\begin{assumption}
\label{assumption:convergenceN}
$(i)$
$\sH$ and $\Q$ are classes of $b$-uniformly bounded functions, where without loss of generality, we assume $b=1$. The functions $g_1$ and $g_2$ are also bounded.
$(ii)$
$\Q$ is a symmetric class, i.e., if $q\in\Q$, then $-q\in\Q$. Also, $\Q$ is a star-convex class, i.e., if $q\in\Q$, then $\alpha q\in\Q$, for all $\alpha\in[0,1]$.
$(iii)$
$h_0\in\sH$ with $B$ an upper bound on $\|h_0\|^2_{\sH}$.
$(iv)$
For all $h\in\sH$, $\textup{P}\big[g_1(V)\{{h}(V_h)-{h}_0(V_h)\}\big|V_q\big]\in\Q_{L^2\|h-h_0\|^2_\sH}$.
Specifically, with $U=2L^2B$, for all $h\in\sH_B$, we have $\textup{P}\big[g_1(V)\{{h}(V_h)-{h}_0(V_h)\}\big|V_q\big]\in\Q_U$.
\end{assumption}

\begin{theorem}
\label{thm:nuisanceconv}
Define the function class $\mathcal{G}_B\coloneqq\big\{ 
V\rightarrow\alpha g_1(V)\{h(V_h)-h_0(V_h)\}\textup{P}\big[g_1(V)\{h(V_h)-h_0(V_h)\}\big|V_q\big],
\text{where }\alpha\in[0,1], h\in\sH,h-h_0\in\sH_B
\big\}$.
Let $\delta_n$ be an upper bound on the critical radii of $\Q_{U}$ and $\mathcal{G}_B$, and define
$\delta\coloneqq\delta_n+c_0\sqrt{\frac{\log(c_1/\zeta)}{n}}$, for some constants $c_0$ and $c_1$. If $\lambda_\Q\ge\frac{\delta^2}{U}$ and $\lambda_{\sH}\ge\lambda_{\Q}L^2c_2$, for some constant $c_2$, then under Assumption \ref{assumption:convergenceN}, with probability $1-3\zeta$, the estimator $\hat{h}$ in \eqref{eq:optkernelh} satisfies
\begin{equation*}
\begin{aligned}	
R(\hat{h})=
O\Big(\delta+\frac{\lambda_\sH}{\delta}\|h_0\|^2_\sH\Big).
\end{aligned}
\end{equation*}
\end{theorem}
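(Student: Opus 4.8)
The plan is to recognize \eqref{eq:optkernelh} as an instance of the adversarial estimator for linear conditional moment restrictions of \cite{dikkala2020minimax} and to transport their critical-radius bound, the two pieces of real work being (i) identifying the function classes whose critical radii drive the rate and (ii) a bootstrap that controls $\|\hat h\|_\sH$, which is not bounded a~priori. Write the affine moment $m(V;h)\coloneqq g_1(V)h(V_h)+g_2(V)$, so that $h_0$ is characterized by $\text{P}[m(V;h_0)\mid V_q]=0$, the slope identity $m(V;h)-m(V;h_0)=g_1(V)\{h-h_0\}(V_h)$ holds, and the projected risk $R(\hat h)=\|\text{P}[m(V;\hat h)\mid V_q]-\text{P}[m(V;h_0)\mid V_q]\|_2$ is exactly the projected metric of that framework. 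Setting $\Psi_n(h;q)\coloneqq\hat{\E}[q(V_q)m(V;h)-q^2(V_q)]-\lambda_\Q\|q\|_\Q^2$, the estimator is $\hat h=\arg\min_{h\in\sH}\{\sup_{q\in\Q}\Psi_n(h;q)+\lambda_\sH\|h\|_\sH^2\}$. The identity $\text{P}[q\,(m(V;h)-m(V;h_0))]=\text{P}\big[q\cdot\text{P}[g_1\{h-h_0\}\mid V_q]\big]$ is what makes $R$ the natural quantity to control and explains why the ``variance'' class governing the concentration of the bilinear term is $\mathcal{G}_B$.

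First I would establish the localized uniform concentration underlying everything: with high probability, uniformly over $q\in\Q_U$ and over $h$ with $h-h_0\in\sH_B$,
\[
\big|(\hat{\E}-\text{P})[q(V_q)g_1(V)\{h-h_0\}(V_h)]\big|\lesssim\delta\,\big(\|q\|_2+R_h+\delta\big),
\]
where $R_h\coloneqq\|\text{P}[g_1\{h-h_0\}\mid V_q]\|_2$, and similarly $\big|(\hat{\E}-\text{P})[q(V_q)m(V;h_0)]\big|\lesssim\delta(\|q\|_2+\delta)$; here $\lesssim$ hides the boundedness constants of Assumption~\ref{assumption:convergenceN}(i). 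This is an application of the localized empirical-process machinery used by \cite{dikkala2020minimax}: $\delta_n$ upper-bounds the critical radius of $\Q_U$ (yielding the $\|q\|_2$-terms) and of $\mathcal{G}_B$ (yielding the $\delta R_h$ term, since the second moment of the bilinear term is controlled by members of $\mathcal{G}_B$), while the $\sqrt{\log(c_1/\zeta)/n}$ inflation defining $\delta$ supplies the high-probability margin. To invoke this at $h=\hat h$ I need $\hat h-h_0$ (up to a fixed rescaling) to lie in $\sH_B$; comparing the objective value of $\hat h$ to that of $h_0$ gives $\lambda_\sH\|\hat h\|_\sH^2\le\sup_q\Psi_n(h_0;q)+\lambda_\sH\|h_0\|_\sH^2=O(\delta^2)+\lambda_\sH\|h_0\|_\sH^2$, hence $\|\hat h-h_0\|_\sH^2\lesssim B+\delta^2/\lambda_\sH\lesssim B$ because $\lambda_\sH\ge c_2L^2\lambda_\Q\ge c_2\delta^2/(2B)$; this self-bounding step (formally a peeling argument over dyadic shells of $\|h\|_\sH$) is the one genuinely delicate point.

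Next I would sandwich $\sup_q\Psi_n(\hat h;q)$. For the upper bound, optimality of $\hat h$ and then discarding $-\lambda_\sH\|\hat h\|_\sH^2\le0$ give $\sup_q\Psi_n(\hat h;q)\le\sup_q\Psi_n(h_0;q)+\lambda_\sH\|h_0\|_\sH^2$; since $\text{P}[m(V;h_0)\mid V_q]=0$, the concentration bound together with the stabilizer $-\hat{\E}[q^2]\asymp-\|q\|_2^2$ forces $\sup_q\Psi_n(h_0;q)=O(\delta^2)$, so $\sup_q\Psi_n(\hat h;q)\lesssim\delta^2+\lambda_\sH\|h_0\|_\sH^2$. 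For the lower bound, I plug in the test function $q^\star\coloneqq c\,\text{P}[g_1\{\hat h-h_0\}\mid V_q]$, which by Assumption~\ref{assumption:convergenceN}(iv) and the a~priori norm bound lies in $\Q_U$ for a suitable absolute constant $c$; by the projection identity $\text{P}[q^\star g_1\{\hat h-h_0\}]=c\,R(\hat h)^2$ and $\text{P}[(q^\star)^2]=c^2R(\hat h)^2$, and with $\|q^\star\|_\Q^2\le c^2L^2\|\hat h-h_0\|_\sH^2$, the two concentration inequalities give
\[
\sup_q\Psi_n(\hat h;q)\gtrsim R(\hat h)^2-O\big(\delta R(\hat h)\big)-O(\delta^2)-\lambda_\Q L^2\|\hat h-h_0\|_\sH^2.
\]

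Finally I combine the two bounds. The conditions $\lambda_\Q\ge\delta^2/U$ and $\lambda_\sH\ge c_2L^2\lambda_\Q$, together with the a~priori bound $\|\hat h-h_0\|_\sH^2\lesssim B+\delta^2/\lambda_\sH$, let the term $\lambda_\Q L^2\|\hat h-h_0\|_\sH^2$ be absorbed into $O(\delta^2)+O(\lambda_\sH\|h_0\|_\sH^2)$, leaving the scalar inequality $R(\hat h)^2\lesssim\delta R(\hat h)+\delta^2+\lambda_\sH\|h_0\|_\sH^2$, hence $R(\hat h)\lesssim\delta+\sqrt{\lambda_\sH}\,\|h_0\|_\sH$; an AM--GM step $\sqrt{\lambda_\sH}\,\|h_0\|_\sH=\sqrt{\delta\cdot(\lambda_\sH/\delta)\|h_0\|_\sH^2}\le\tfrac12\big(\delta+(\lambda_\sH/\delta)\|h_0\|_\sH^2\big)$ then yields exactly $R(\hat h)=O\big(\delta+(\lambda_\sH/\delta)\|h_0\|_\sH^2\big)$, and the probability $1-3\zeta$ follows from a union bound over the two concentration events and the high-probability form of $\delta$. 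The hard part is the localized concentration with the norm-unbounded estimator: one must control the empirical process over the data-dependent $\sH$-ball into which $\hat h$ falls, which is precisely why the statement introduces the auxiliary class $\mathcal{G}_B$ and couples $\lambda_\sH$ to $\lambda_\Q$ — these are the knobs that make the bootstrap close.
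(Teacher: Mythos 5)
Your plan is essentially the paper's: both reduce the problem to the \cite{dikkala2020minimax} machinery, bound $\sup_{q\in\Q}\Psi_n(\hat h,q)$ from above via optimality of $\hat h$ plus $\sup_{q}\Psi_n(h_0,q)=O(\delta^2)$, and from below by testing against (a rescaling of) $q_{\hat h}=\textup{P}[g_1(V)\{\hat h(V_h)-h_0(V_h)\}\mid V_q]$, whose membership in $\Q$ is exactly what Assumption \ref{assumption:convergenceN}(iv) and the class $\mathcal{G}_B$ are for. Where you genuinely diverge is in how the unbounded $\|\hat h\|_\sH$ is tamed. The paper never proves an a priori norm bound: it extends the two concentration inequalities to all of $\Q$ and all of $\sH$ with multiplicative penalties $\|q\|^2_\Q/U$ and $\|h-h_0\|^2_\sH/B$ (star-convex rescaling), keeps $\lambda_\sH\|\hat h\|^2_\sH$ on both sides, and lets it cancel exactly between the upper bound ($\lambda_\sH(\|h_0\|^2_\sH-\|\hat h\|^2_\sH)+O(\delta^2)$) and the lower bound ($\tfrac{\delta}{2}R(\hat h)-C\delta^2-\lambda_\sH(\|h_0\|^2_\sH+\|\hat h\|^2_\sH)$). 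You instead prove a self-bounding norm control first; note that no peeling is needed, since $0\in\Q$ gives $\sup_q\Psi_n(\hat h,q)\ge 0$ and hence $\lambda_\sH\|\hat h\|^2_\sH\le \sup_q\Psi_n(h_0,q)+\lambda_\sH\|h_0\|^2_\sH$ directly on the Step-1 event. Two small repairs are needed to make your route deliver the stated constant: (i) in the absorption of $\lambda_\Q L^2\|\hat h-h_0\|^2_\sH$ you must use the sharper self-bound $\|\hat h-h_0\|^2_\sH\lesssim \|h_0\|^2_\sH+\delta^2/\lambda_\sH$ (which your objective comparison does give), not the cruder $\lesssim B$, otherwise the final bound degrades to $O(\delta+\lambda_\sH B/\delta)$; and (ii) your ball is inflated by a constant factor, so strictly you need the critical radius of $\mathcal{G}_{cB}$ rather than $\mathcal{G}_B$ (a constants-only issue). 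Also, your first displayed concentration claim --- uniform over all pairs $q\in\Q_U$, $h-h_0\in\sH_B$ --- is stronger than what the critical radii of $\Q_U$ and $\mathcal{G}_B$ justify; the class $\mathcal{G}_B$ only covers the tied pairs $(q_h,h)$. Fortunately every instance you actually invoke is either tied ($q^\star\propto q_{\hat h}$) or has $h=h_0$ with free $q$ (Lipschitz contraction over $\Q_U$), so the argument stands once stated that way. A last cosmetic difference: the paper scales the test function data-dependently, $\alpha=\delta/(2\|q_{\hat h}\|_2)$, giving an inequality linear in $R(\hat h)$ plus a trivial case $\|q_{\hat h}\|_2\le\delta$, whereas your constant scaling gives the quadratic inequality $R^2\lesssim \delta R+\delta^2+\lambda_\sH\|h_0\|^2_\sH$ and then AM--GM; your intermediate form $R(\hat h)\lesssim\delta+\sqrt{\lambda_\sH}\,\|h_0\|_\sH$ is, if anything, slightly stronger than the stated conclusion.
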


\subsection{Characterization of Ill-posedness Measure}
In Section \ref{sec:proposal}, for any given value $\delta>0$, we defined the local measure of ill-posedness. Due to the Tikhonov regularization used in our minimax estimator, assume that there exists a constant $C$, such that the output of the estimator satisfies $\hat{h}^*-h_0\in\sH_C$. Therefore, we define $\sH_C^{|\delta}\coloneqq\big\{ h\in\sH_C:\big\| \textup{P}[h(V_h)|V_q] \big\|_2\le\delta \big\}$, and focus on the measure of ill-posedness $\tau_h(\delta)\coloneqq\sup_{h\in\sH_C^{|\delta}}\|h\|_2$.

If $\sH$ is an RKHS, $\tau_h(\delta)$ can be bounded using the eigenvalues of the RKHS and a measure of dependency between $V_q$ and $V_h$. This also implies that convergence of the projected risk to zero leads to vanishing of the RMSE, i.e., as required in Assumption \ref{assumption:nuisconv}, the estimator is consistent.
Let $\{\mu_j\}_{j=1}^\infty$ and $\{\phi_j\}_{j=1}^\infty$ be the eigenvalues and eigenfunctions of the RKHS $\sH$. For any $m\in\mathbb{N}_+$, let $V_m$ be the $m\times m$ matrix with entry $(i,j)$ defines as $[V_m]_{i,j}=\E\big[ \E[\phi_i(V_h)|V_q]\E[\phi_j(V_h)|V_q] \big]$.
Note that the quantity on the right hand side of the equality measures how much the eigenfucntions are smoothened by the conditional expectation operator. For instance, if $V_h=V_q$, as is in the case of ACE under no unmeasured confounders, then $V_m$ will be the identity matrix of size $m$. Similarly, for the proximal setup, if the proxy variables are very informative about the unobserved confounder, $V_h$ and $V_q$ will be highly correlated, and the matrix $V_m$ will be close to identity matrix.

\begin{lemma}[\cite{dikkala2020minimax}]
\label{lem:ill}
Let $\lambda_m$ be the minimum eigenvalue of $V_m$, and suppose that for all $i\le m<j$, $\Big|\E\big[ \E[\phi_i(V_h)|V_q]\E[\phi_j(V_h)|V_q] \big]\Big|\le c\lambda_m$, for some constant $c$. 
Then
\begin{equation*}
\begin{aligned}	
\tau_h(\delta)^2\le\min_{m\in\mathbb{N}_+}\Big\{ \frac{4\delta^2}{\lambda_m}+(4c^2+1)C\mu_{m+1} \Big\}.
\end{aligned}
\end{equation*}
\end{lemma}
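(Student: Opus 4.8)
The plan is to diagonalize everything in the spectral basis of the RKHS $\sH$ and to split a generic competitor $h$ into a ``low-frequency'' and a ``high-frequency'' piece, bounding each separately. Let $\{\mu_j\}_{j\ge 1}$ (non-increasing) and $\{\phi_j\}_{j\ge 1}$ be the eigenvalues and eigenfunctions of $\sH$, chosen orthonormal in $\mathcal{L}_2(P_{V_h})$; this is the normalization for which $V_m=I_m$ when $V_h=V_q$, as in the remark preceding the lemma. Fix any $h\in\sH_C^{|\delta}$ and expand $h=\sum_j a_j\phi_j$, so that $\|h\|_2^2=\sum_j a_j^2$, the RKHS constraint reads $\sum_j a_j^2/\mu_j=\|h\|_\sH^2\le C$, and, writing $\psi_j\coloneqq\E[\phi_j(V_h)\mid V_q]$, the projection constraint reads $\sum_{i,j}a_ia_j\langle\psi_i,\psi_j\rangle=\|\E[h(V_h)\mid V_q]\|_2^2\le\delta^2$. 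Two structural facts I will use: the leading $m\times m$ block of the Gram matrix $\big(\langle\psi_i,\psi_j\rangle\big)_{i,j}$ is exactly $V_m$, and conditional expectation is an $\mathcal{L}_2$-contraction, so $\|\psi_j\|_2\le 1$ and $\|\E[f(V_h)\mid V_q]\|_2\le\|f\|_2$ for any $f$.

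Fix $m\in\mathbb{N}_+$ and decompose $h=h_{\le m}+h_{>m}$ with $h_{\le m}=\sum_{j\le m}a_j\phi_j$ and $h_{>m}=\sum_{j>m}a_j\phi_j$. By orthonormality of the $\phi_j$, $\|h\|_2^2=\|h_{\le m}\|_2^2+\|h_{>m}\|_2^2$. The high-frequency piece is handled at once using monotonicity of the eigenvalues and the RKHS constraint:
\[
\|h_{>m}\|_2^2=\sum_{j>m}a_j^2\le\mu_{m+1}\sum_{j>m}\frac{a_j^2}{\mu_j}\le C\mu_{m+1},
\]
which accounts for the ``$+1$'' in the constant $(4c^2+1)$.

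It then remains to bound $\|h_{\le m}\|_2^2$. Since $\lambda_m$ is the least eigenvalue of $V_m$, $\lambda_m\|h_{\le m}\|_2^2\le a_{\le m}^{\top}V_m a_{\le m}=\|\E[h_{\le m}(V_h)\mid V_q]\|_2^2$, so it suffices to bound $\|\E[h_{\le m}(V_h)\mid V_q]\|_2^2$ by something of order $\delta^2+c^2\lambda_m C\mu_{m+1}$. The plan is to write $\E[h_{\le m}(V_h)\mid V_q]=\E[h(V_h)\mid V_q]-\E[h_{>m}(V_h)\mid V_q]$ and expand the square: the ``signal'' term $\E[h(V_h)\mid V_q]$ has squared norm at most $\delta^2$; the ``leakage'' term $\E[h_{>m}(V_h)\mid V_q]$ has squared norm at most $\|h_{>m}\|_2^2\le C\mu_{m+1}$ by contraction; and the cross term is
\[
\big\langle \E[h_{\le m}\mid V_q],\,\E[h_{>m}\mid V_q]\big\rangle=\sum_{i\le m<j}a_ia_j\,\langle\psi_i,\psi_j\rangle,
\]
which is exactly where the hypothesis $\big|\langle\psi_i,\psi_j\rangle\big|=\big|\E[\E[\phi_i(V_h)\mid V_q]\E[\phi_j(V_h)\mid V_q]]\big|\le c\lambda_m$ for $i\le m<j$ is invoked. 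Using that entrywise bound together with the already-established fact that the high-frequency coefficients carry $\ell_2$ mass $\sqrt{\sum_{j>m}a_j^2}\le\sqrt{C\mu_{m+1}}$, I will control the cross term by a quantity of the form $c\sqrt{\lambda_m}\,\|\E[h_{\le m}\mid V_q]\|_2\,\sqrt{C\mu_{m+1}}$ and then close the estimate by solving the resulting quadratic inequality for $\|\E[h_{\le m}\mid V_q]\|_2$; this yields $\|\E[h_{\le m}(V_h)\mid V_q]\|_2^2\le 4\delta^2+4c^2\lambda_m C\mu_{m+1}$ (after adjusting constants), hence $\|h_{\le m}\|_2^2\le\frac{4\delta^2}{\lambda_m}+4c^2C\mu_{m+1}$.

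Adding the two pieces gives $\|h\|_2^2\le\frac{4\delta^2}{\lambda_m}+(4c^2+1)C\mu_{m+1}$ for the fixed $m$; taking the supremum over $h\in\sH_C^{|\delta}$ and then the infimum over $m\in\mathbb{N}_+$ proves the bound on $\tau_h(\delta)^2$. The step I expect to be the main obstacle is the cross-term estimate: converting the \emph{entrywise} bound $|\langle\psi_i,\psi_j\rangle|\le c\lambda_m$ on the off-diagonal block into a usable bound on the bilinear form $\sum_{i\le m<j}a_ia_j\langle\psi_i,\psi_j\rangle$ must be done without paying a spurious factor of $m$ or of $\sum_{j>m}\mu_j$. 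The point that makes it go through is that this block only ever appears paired with the small tail coefficients $a_{>m}$ (whose $\ell_2$ mass is at most $C\mu_{m+1}$) and that $\E[\,\cdot\mid V_q]$ is an $\mathcal{L}_2$-contraction, so the tail block $(\langle\psi_i,\psi_j\rangle)_{i,j>m}$ itself never has to be controlled. The remaining steps are routine bookkeeping.
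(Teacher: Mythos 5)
The paper itself gives no proof of Lemma \ref{lem:ill}: it is quoted from \cite{dikkala2020minimax} and is not re-proved in the Supplementary Materials, so your argument has to stand on its own. Your skeleton is the natural one and is mostly sound: Mercer basis orthonormal in $\mathcal{L}_2(P_{V_h})$, the split $h=h_{\le m}+h_{>m}$, the tail bound $\|h_{>m}\|_2^2\le C\mu_{m+1}$, the head bound $\lambda_m\|h_{\le m}\|_2^2\le \|\E[h_{\le m}(V_h)\mid V_q]\|_2^2$, and the final quadratic-inequality bookkeeping are all correct.

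The genuine gap is exactly at the step you flagged as the crux, and your proposed resolution does not go through. You need the cross term to satisfy $\big|\langle\E[h_{\le m}\mid V_q],\E[h_{>m}\mid V_q]\rangle\big|\le c\sqrt{\lambda_m}\,\|\E[h_{\le m}\mid V_q]\|_2\,\sqrt{C\mu_{m+1}}$, but neither ingredient you cite delivers the factor $c\sqrt{\lambda_m}$ together with the factor $\sqrt{\mu_{m+1}}$. The entrywise hypothesis $|\langle\psi_i,\psi_j\rangle|\le c\lambda_m$ only controls the bilinear form against $\ell_1$ norms, $\big|\sum_{i\le m<j}a_ia_j\langle\psi_i,\psi_j\rangle\big|\le c\lambda_m\|a_{\le m}\|_1\|a_{>m}\|_1$; converting to your quantities costs $\sqrt{m}$ on the head and $\sqrt{\sum_{j>m}\mu_j/\mu_{m+1}}$ on the tail (only $\|a_{>m}\|_2\le\sqrt{C\mu_{m+1}}$ is available, not $\|a_{>m}\|_1$), and an entrywise bound on an $m\times\infty$ block cannot bound the operator norm without such dimension-type factors (a block with all entries equal to $c\lambda_m$ has operator norm $c\lambda_m\sqrt{mN}$). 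The other route you gesture at, contraction of $\E[\cdot\mid V_q]$ plus a $\mu_j$-weighted Cauchy--Schwarz and Bessel, does give a clean $m$-free bound, but it is $\big|\langle\E[h_{\le m}\mid V_q],\E[h_{>m}\mid V_q]\rangle\big|\le\sqrt{C\mu_{m+1}}\,\|\E[h_{\le m}\mid V_q]\|_2$, with no $c\sqrt{\lambda_m}$ because it never uses the hypothesis; feeding this into your quadratic yields $\|h_{\le m}\|_2^2\lesssim(\delta^2+C\mu_{m+1})/\lambda_m$, i.e.\ the unwanted term $C\mu_{m+1}/\lambda_m$ in place of $c^2C\mu_{m+1}$, which is strictly weaker and would degrade the rate claimed in the example following the lemma. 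To obtain your intermediate bound you would need something like $\sum_{j>m}\langle\E[h_{\le m}\mid V_q],\psi_j\rangle^2\le c^2\lambda_m\|\E[h_{\le m}\mid V_q]\|_2^2$, which the entrywise hypothesis does not imply (each summand is at most $c^2\lambda_m^2\|a_{\le m}\|_1^2$, but there are infinitely many summands, and Bessel only gives the $\lambda_m$-free total). Note also that your plan never uses the RKHS constraint on the head coefficients, $\sum_{i\le m}a_i^2/\mu_i\le C$; some additional leverage of this kind, or a sharper use of the hypothesis than entrywise Cauchy--Schwarz, is needed to keep the tail contribution at order $c^2C\mu_{m+1}$, so as written the key step fails and the lemma is not established by the proposal.
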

As an example of the application of Lemma \ref{lem:ill}, suppose we use an RKHS, such as a Sobolev RKHS, with polynomial eigendecay, i.e., $\mu_m\sim m^{-a}$, and the smoothing effect of the conditional expectation operator is also polynomial, i.e., $\lambda_m\sim m^{-b}$, which implies that $\tau_h(\delta)=O(\delta^{\frac{a}{a+b}})$. By Theorem \ref{thm:convergence}, we need $\sqrt{n}r_h(n)r_q(n)^{\frac{a}{a+b}}=o(1)$.
Therefore, if $r_h(n)=n^{-r_h}$ and $r_q(n)=n^{-r_q}$, we require $\frac{1}{2}<r_h+\frac{ar_q}{a+b}$ for $\sqrt{n}$-convergence of the functional of interest.

\begin{figure*}[t!]
    \centering
    \includegraphics[scale = 0.7]{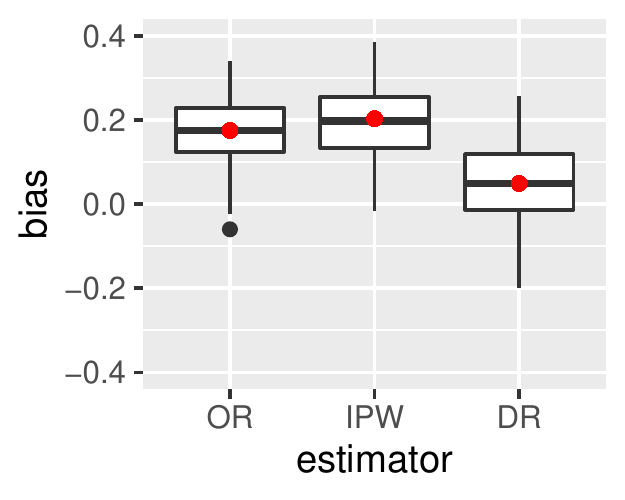}
    \includegraphics[scale = 0.7]{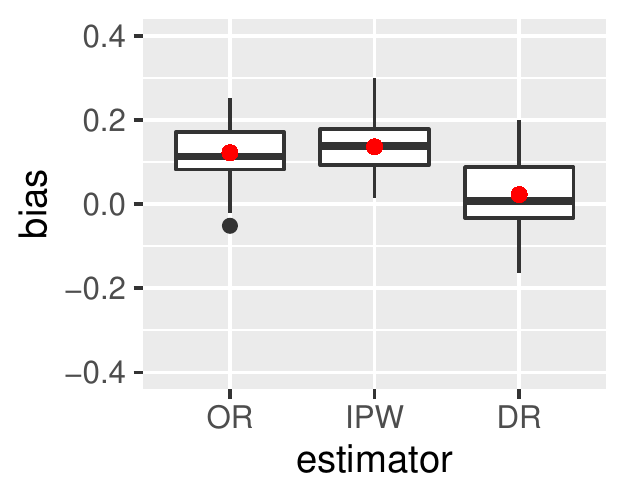}
    \includegraphics[scale = 0.7]{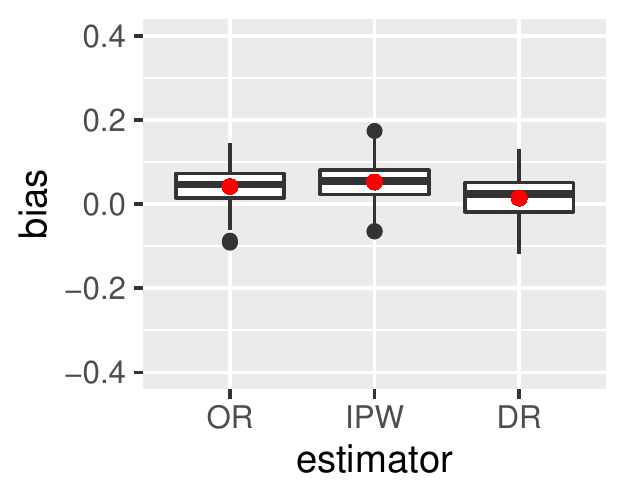}
    \caption{Boxplots of simulation results when $n = 800, 1600, 3200$, respectively.
    }
    \label{fig:syntheticexp}
\end{figure*}

\section{EXPERIMENTS}
\label{sec:exp}

{\bf Synthetic-Data Experiments.} 
We investigate operational characteristics of our framework by applying doubly robust kernel estimators developed in Section \ref{sec:kernel} to estimate $h_0$ and $q_0$ in Section \ref{sec:apps} in a range of simulation settings.\footnote{The implementation is publicly available at \href{https://github.com/andrewyyp/Kernel-Doubly-Robust}{https://github.com/andrewyyp/Kernel-Doubly-Robust}.} 
We generate the data such that the assumptions in Section \ref{sec:apps} hold. We consider sample size $n \in \{800, 1600, 3200\}$ and repeat each simulation for 100 times. The details of the data generating mechanisms as well as additional simulation results considering different data generating mechanisms are provided in the Supplementary Materials.
We estimate the average causal effect using 5 folds cross fitting, with 5 folds cross validation to tune the hyperparameters and kernel bandwidths when learning $h_0$ and $q_0$. We then compare the performance of estimation of ACE obtained from the following three approaches: 
$(i)$ Proximal outcome regression (POR) estimator, which is based on the result in Theorem \ref{thm:POR},
$(ii)$ Proximal inverse probability weighting (PIPW) estimator, which is based on the result in Theorem \ref{thm:PIPW}, and
$(iii)$ Proximal doubly robust (PDR) estimator, which is based on the IF in display \eqref{eq:proximalIF}.
The results of the comparisons are shown in Figure \ref{fig:syntheticexp}. All three estimators attain smaller bias as sample size becomes larger. As expected, the PDR estimator outperforms POR and PIPW estimators in all cases.

{\bf Real-Data Analysis.} 
We reanalyze the Study to Understand Prognoses and Preferences for Outcomes and Risks of Treatments (SUPPORT) with the aim of evaluating the causal effect of right heart catheterization (RHC) during the initial care of critically ill patients in the intensive care unit (ICU) on survival time up to 30 days \citep{connors1996effectiveness}. The same dataset has been analyzed using proximal framework in \citep{tchetgen2020introduction, cui2020semiparametric} with parametric estimation on nuisance parameters.
Data are available on 5735 individuals, 2184 treated and 3551 controls. In total, 3817 patients survived and 1918 died within 30 days. The outcome Y is the number of days between admission and death or censoring at day 30. We include 11 baseline covariates to adjust for potential confounding.  
We kept choices of $Z$ and $W$ the same as in \citep{tchetgen2020introduction, cui2020semiparametric}.  
Details on baseline covariates, hyperparameters and bandwidths are given in the Supplementary Materials. The results are summarized in Table \ref{tab:applicationresults}.

\begin{table}[t!]
\caption{Causal effect estimates (standard deviations) and 95\% confidence intervals.}
\label{tab:applicationresults}
\centering
\begin{tabular}{|lll|}
\hline
& ACE (SDs) & 95\% CIs \\ \hline
$\hat \psi_{\text{POR}}$  &  -1.76 (0.24)  & (-2.23, -1.29)\\
$\hat \psi_{\text{PIPW}}$ &  -1.59 (0.24)  & (-2.07, -1.11)\\
$\hat \psi_{\text{PDR}}$  &  -1.70 (0.24)  & (-2.17, -1.22)\\ \hline        
\end{tabular}
\end{table}

Concordance between the three proximal estimators offers confidence in modeling assumptions, indicating that RHC may have a more harmful effect on 30 day survival among critically ill patients admitted into an intensive care unit than previously reported. In closing, our proximal estimates are well aligned with results obtained in \citep{tchetgen2020introduction, cui2020semiparametric}.

\section{CONCLUSION}
\label{sec:conc}

We have demonstrated that the double-robustness property of a moment function can be used for constructing estimating equations for the nuisance components of the moment function. We framed the idea in terms of a minimax optimization approach, in which we choose a nuisance function such that it minimizes the perturbation of the expected value of the moment function for the worst-case deviation in the other nuisance function. Framing the problem in terms of an optimization, enabled us to use tools from machine learning for implementing the idea and techniques from statistical learning theory to analyze the convergence rates. Specifically, we used kernel methods for designing the learners and Rademacher complexity analysis for obtaining the convergence rates. 
As is the case with most of nonparametric learners, the approach introduces challenges such as choosing the hyper parameters and having larger time complexity. However, it gives us robustness with respect to model misspecification compared to parametric learners.
We provided conditions on the product error of the nuisance functions, as well as the ill-posedness of a conditional expectation operator to obtain the property of asymptotic linearity for the estimator of the parameter of interest. 
We investigated the application of the developed methodology in estimating the average causal effect in the proximal causal inference framework. The experiment results confirmed the superiority of the double robust learner compared to methods that only use one part of the distribution.

\subsubsection*{Acknowledgements}
\begin{sloppypar}
Eric J Tchetgen Tchetgen was supported by NIH grants R01AI27271, R01CA222147, R01AG065276, R01GM139926.
Ilya Shpitser was supported by grants: NSF CAREER 1942239, ONR N00014-21-1-2820, NIH R01 AI127271-01A1, and NSF 2040804.
\end{sloppypar}

\newpage

\bibliography{Refs.bib}

\begin{thebibliography}{}

\bibitem[Athey et~al., 2016]{athey2016approximate}
Athey, S., Imbens, G.~W., and Wager, S. (2016).
\newblock Approximate residual balancing: De-biased inference of average
  treatment effects in high dimensions.
\newblock {\em arXiv preprint arXiv:1604.07125}.

\bibitem[Avagyan and Vansteelandt, 2017]{avagyan2017honest}
Avagyan, V. and Vansteelandt, S. (2017).
\newblock Honest data-adaptive inference for the average treatment effect under
  model misspecification using penalised bias-reduced double-robust estimation.
\newblock {\em arXiv preprint arXiv:1708.03787}.

\bibitem[Bang and Robins, 2005]{bang2005doubly}
Bang, H. and Robins, J.~M. (2005).
\newblock Doubly robust estimation in missing data and causal inference models.
\newblock {\em Biometrics}, 61(4):962--973.

\bibitem[Bennett et~al., 2019]{bennett2019deep}
Bennett, A., Kallus, N., and Schnabel, T. (2019).
\newblock Deep generalized method of moments for instrumental variable
  analysis.
\newblock {\em arXiv preprint arXiv:1905.12495}.

\bibitem[Bickel et~al., 1993]{bickel1993efficient}
Bickel, P.~J., Klaassen, C.~A., Bickel, P.~J., Ritov, Y., Klaassen, J.,
  Wellner, J.~A., and Ritov, Y. (1993).
\newblock {\em Efficient and adaptive estimation for semiparametric models},
  volume~4.
\newblock Johns Hopkins University Press Baltimore.

\bibitem[Bickel and Ritov, 1988]{bickel1988estimating}
Bickel, P.~J. and Ritov, Y. (1988).
\newblock Estimating integrated squared density derivatives: sharp best order
  of convergence estimates.
\newblock {\em Sankhy{\=a}: The Indian Journal of Statistics, Series A}, pages
  381--393.

\bibitem[Blundell et~al., 2007]{blundell2007semi}
Blundell, R., Chen, X., and Kristensen, D. (2007).
\newblock Semi-nonparametric iv estimation of shape-invariant engel curves.
\newblock {\em Econometrica}, 75(6):1613--1669.

\bibitem[Cao et~al., 2009]{cao2009improving}
Cao, W., Tsiatis, A.~A., and Davidian, M. (2009).
\newblock Improving efficiency and robustness of the doubly robust estimator
  for a population mean with incomplete data.
\newblock {\em Biometrika}, 96(3):723--734.

\bibitem[Chen and Christensen, 2018]{chen2018optimal}
Chen, X. and Christensen, T.~M. (2018).
\newblock Optimal sup-norm rates and uniform inference on nonlinear functionals
  of nonparametric iv regression.
\newblock {\em Quantitative Economics}, 9(1):39--84.

\bibitem[Chen and Pouzo, 2012]{chen2012estimation}
Chen, X. and Pouzo, D. (2012).
\newblock Estimation of nonparametric conditional moment models with possibly
  nonsmooth generalized residuals.
\newblock {\em Econometrica}, 80(1):277--321.

\bibitem[Chernozhukov et~al., 2018]{chernozhukov2018double}
Chernozhukov, V., Chetverikov, D., Demirer, M., Duflo, E., Hansen, C., Newey,
  W., and Robins, J. (2018).
\newblock Double/debiased machine learning for treatment and structural
  parameters.

\bibitem[Chernozhukov et~al., 2019]{chernozhukov2019semi}
Chernozhukov, V., Demirer, M., Lewis, G., and Syrgkanis, V. (2019).
\newblock Semi-parametric efficient policy learning with continuous actions.

\bibitem[Chernozhukov et~al., 2016a]{chernozhukov2016locally}
Chernozhukov, V., Escanciano, J.~C., Ichimura, H., Newey, W.~K., and Robins,
  J.~M. (2016a).
\newblock Locally robust semiparametric estimation.
\newblock {\em arXiv preprint arXiv:1608.00033}.

\bibitem[Chernozhukov et~al., 2016b]{chernozhukov2016locallyv2}
Chernozhukov, V., Escanciano, J.~C., Ichimura, H., Newey, W.~K., and Robins,
  J.~M. (2016b).
\newblock Locally robust semiparametric estimation.
\newblock {\em arXiv preprint arXiv:1608.00033v2}.

\bibitem[Chernozhukov et~al., 2020]{chernozhukov2020adversarial}
Chernozhukov, V., Newey, W., Singh, R., and Syrgkanis, V. (2020).
\newblock Adversarial estimation of riesz representers.
\newblock {\em arXiv preprint arXiv:2101.00009}.

\bibitem[Connors et~al., 1996]{connors1996effectiveness}
Connors, A.~F., Speroff, T., Dawson, N.~V., Thomas, C., Harrell, F.~E., Wagner,
  D., Desbiens, N., Goldman, L., Wu, A.~W., Califf, R.~M., et~al. (1996).
\newblock The effectiveness of right heart catheterization in the initial care
  of critically iii patients.
\newblock {\em Jama}, 276(11):889--897.

\bibitem[Cui et~al., 2020]{cui2020semiparametric}
Cui, Y., Pu, H., Shi, X., Miao, W., and Tchetgen~Tchetgen, E. (2020).
\newblock Semiparametric proximal causal inference.
\newblock {\em arXiv preprint arXiv:2011.08411}.

\bibitem[Cui and Tchetgen~Tchetgen, 2019]{cui2019selective}
Cui, Y. and Tchetgen~Tchetgen, E. (2019).
\newblock Selective machine learning of doubly robust functionals.
\newblock {\em arXiv preprint arXiv:1911.02029}.

\bibitem[Deaner, 2021]{deaner2021proxy}
Deaner, B. (2021).
\newblock Proxy controls and panel data.
\newblock {\em
  \url{https://bdeaner.mit.edu/sites/default/files/images/Ben_Deaner_JMP_Proxy_Controls_and_Panel_Data_updated.pdf}}.

\bibitem[Dikkala et~al., 2020]{dikkala2020minimax}
Dikkala, N., Lewis, G., Mackey, L., and Syrgkanis, V. (2020).
\newblock Minimax estimation of conditional moment models.
\newblock {\em arXiv preprint arXiv:2006.07201}.

\bibitem[Dukes et~al., 2021]{dukes2021proximal}
Dukes, O., Shpitser, I., and Tchetgen~Tchetgen, E. (2021).
\newblock Proximal mediation analysis.
\newblock {\em arXiv preprint arXiv:2109.11904}.

\bibitem[Farbmacher et~al., 2020]{farbmacher2020causal}
Farbmacher, H., Huber, M., Laff{\'e}rs, L., Langen, H., and Spindler, M.
  (2020).
\newblock Causal mediation analysis with double machine learning.
\newblock {\em arXiv preprint arXiv:2002.12710}.

\bibitem[Foster and Syrgkanis, 2019]{foster2019orthogonal}
Foster, D.~J. and Syrgkanis, V. (2019).
\newblock Orthogonal statistical learning.
\newblock {\em arXiv preprint arXiv:1901.09036}.

\bibitem[Ghassami et~al., 2021]{ghassami2021proximal}
Ghassami, A., Shpitser, I., and Tchetgen~Tchetgen, E. (2021).
\newblock Proximal causal inference with hidden mediators: Front-door and
  related mediation problems.
\newblock {\em arXiv preprint arXiv:2111.02927}.

\bibitem[Ghassami et~al., 2022]{ghassami2022combining}
Ghassami, A., Shpitser, I., and Tchetgen~Tchetgen, E. (2022).
\newblock Combining experimental and observational data for identification of
  long-term causal effects.
\newblock {\em arXiv preprint arXiv:2201.10743}.

\bibitem[Hartford et~al., 2017]{hartford2017deep}
Hartford, J., Lewis, G., Leyton-Brown, K., and Taddy, M. (2017).
\newblock Deep iv: A flexible approach for counterfactual prediction.
\newblock In {\em International Conference on Machine Learning}, pages
  1414--1423. PMLR.

\bibitem[Hern{\'a}n and Robins, 2020]{hernan2020causal}
Hern{\'a}n, M.~A. and Robins, J.~M. (2020).
\newblock Causal inference: what if.

\bibitem[Imbens et~al., 2021]{imbens2021controlling}
Imbens, G., Kallus, N., and Mao, X. (2021).
\newblock Controlling for unmeasured confounding in panel data using minimal
  bridge functions: From two-way fixed effects to factor models.
\newblock {\em arXiv preprint arXiv:2108.03849}.

\bibitem[Imbens et~al., 2022]{imbens2022long}
Imbens, G., Kallus, N., Mao, X., and Wang, Y. (2022).
\newblock Long-term causal inference under persistent confounding via data
  combination.
\newblock {\em arXiv preprint arXiv:2202.07234}.

\bibitem[Kallus et~al., 2019]{kallus2019localized}
Kallus, N., Mao, X., and Uehara, M. (2019).
\newblock Localized debiased machine learning: Efficient inference on quantile
  treatment effects and beyond.
\newblock {\em arXiv preprint arXiv:1912.12945}.

\bibitem[Kallus et~al., 2021]{kallus2021causal}
Kallus, N., Mao, X., and Uehara, M. (2021).
\newblock Causal inference under unmeasured confounding with negative controls:
  A minimax learning approach.
\newblock {\em arXiv preprint arXiv:2103.14029}.

\bibitem[Kallus and Zhou, 2018]{kallus2018policy}
Kallus, N. and Zhou, A. (2018).
\newblock Policy evaluation and optimization with continuous treatments.
\newblock In {\em International Conference on Artificial Intelligence and
  Statistics}, pages 1243--1251. PMLR.

\bibitem[Kang et~al., 2007]{kang2007demystifying}
Kang, J.~D., Schafer, J.~L., et~al. (2007).
\newblock Demystifying double robustness: A comparison of alternative
  strategies for estimating a population mean from incomplete data.
\newblock {\em Statistical science}, 22(4):523--539.

\bibitem[Kuroki and Pearl, 2014]{kuroki2014measurement}
Kuroki, M. and Pearl, J. (2014).
\newblock Measurement bias and effect restoration in causal inference.
\newblock {\em Biometrika}, 101(2):423--437.

\bibitem[Li et~al., 2021]{li2021identification}
Li, W., Miao, W., and Tchetgen~Tchetgen, E. (2021).
\newblock Identification and estimation of nonignorable missing outcome mean
  without identifying the full data distribution.
\newblock {\em arXiv preprint arXiv:2110.05776}.

\bibitem[Liao et~al., 2020]{liao2020provably}
Liao, L., Chen, Y.-L., Yang, Z., Dai, B., Wang, Z., and Kolar, M. (2020).
\newblock Provably efficient neural estimation of structural equation model: An
  adversarial approach.
\newblock {\em arXiv preprint arXiv:2007.01290}.

\bibitem[Lipsitch et~al., 2010]{lipsitch2010negative}
Lipsitch, M., Tchetgen~Tchetgen, E., and Cohen, T. (2010).
\newblock Negative controls: a tool for detecting confounding and bias in
  observational studies.
\newblock {\em Epidemiology (Cambridge, Mass.)}, 21(3):383.

\bibitem[Mastouri et~al., 2021]{mastouri2021proximal}
Mastouri, A., Zhu, Y., Gultchin, L., Korba, A., Silva, R., Kusner, M., Gretton,
  A., and Muandet, K. (2021).
\newblock Proximal causal learning with kernels: Two-stage estimation and
  moment restriction.
\newblock In {\em International Conference on Machine Learning}, pages
  7512--7523. PMLR.

\bibitem[Miao et~al., 2018]{miao2018identifying}
Miao, W., Geng, Z., and Tchetgen~Tchetgen, E.~J. (2018).
\newblock Identifying causal effects with proxy variables of an unmeasured
  confounder.
\newblock {\em Biometrika}, 105(4):987--993.

\bibitem[Miao et~al., 2015]{miao2015identification}
Miao, W., Liu, L., Tchetgen~Tchetgen, E., and Geng, Z. (2015).
\newblock Identification, doubly robust estimation, and semiparametric
  efficiency theory of nonignorable missing data with a shadow variable.
\newblock {\em arXiv preprint arXiv:1509.02556}.

\bibitem[Miao and Tchetgen~Tchetgen, 2018]{miao2018confounding}
Miao, W. and Tchetgen~Tchetgen, E. (2018).
\newblock A confounding bridge approach for double negative control inference
  on causal effects (supplement and sample codes are included).
\newblock {\em arXiv preprint arXiv:1808.04945}.

\bibitem[Muandet et~al., 2019]{muandet2019dual}
Muandet, K., Mehrjou, A., Lee, S.~K., and Raj, A. (2019).
\newblock Dual instrumental variable regression.
\newblock {\em arXiv preprint arXiv:1910.12358}.

\bibitem[Newey, 1990]{newey1990semiparametric}
Newey, W.~K. (1990).
\newblock Semiparametric efficiency bounds.
\newblock {\em Journal of applied econometrics}, 5(2):99--135.

\bibitem[Nie and Wager, 2017]{nie2017quasi}
Nie, X. and Wager, S. (2017).
\newblock Quasi-oracle estimation of heterogeneous treatment effects.
\newblock {\em arXiv preprint arXiv:1712.04912}.

\bibitem[Okui et~al., 2012]{okui2012doubly}
Okui, R., Small, D.~S., Tan, Z., and Robins, J.~M. (2012).
\newblock Doubly robust instrumental variable regression.
\newblock {\em Statistica Sinica}, pages 173--205.

\bibitem[Oprescu et~al., 2019]{oprescu2019orthogonal}
Oprescu, M., Syrgkanis, V., and Wu, Z.~S. (2019).
\newblock Orthogonal random forest for causal inference.
\newblock In {\em International Conference on Machine Learning}, pages
  4932--4941. PMLR.

\bibitem[Robins et~al., 2008]{robins2008higher}
Robins, J., Li, L., Tchetgen~Tchetgen, E., van~der Vaart, A., et~al. (2008).
\newblock Higher order influence functions and minimax estimation of nonlinear
  functionals.
\newblock In {\em Probability and statistics: essays in honor of David A.
  Freedman}, pages 335--421. Institute of Mathematical Statistics.

\bibitem[Robins et~al., 2017]{robins2017minimax}
Robins, J.~M., Li, L., Mukherjee, R., Tchetgen~Tchetgen, E., and van~der Vaart,
  A. (2017).
\newblock Minimax estimation of a functional on a structured high-dimensional
  model.
\newblock {\em The Annals of Statistics}, 45(5):1951--1987.

\bibitem[Robins and Rotnitzky, 2001]{robins2001comment}
Robins, J.~M. and Rotnitzky, A. (2001).
\newblock Comment on the bickel and kwon article,“inference for
  semiparametric models: Some questions and an answer”.
\newblock {\em Statistica Sinica}, 11(4):920--936.

\bibitem[Rotnitzky et~al., 2012]{rotnitzky2012improved}
Rotnitzky, A., Lei, Q., Sued, M., and Robins, J.~M. (2012).
\newblock Improved double-robust estimation in missing data and causal
  inference models.
\newblock {\em Biometrika}, 99(2):439--456.

\bibitem[Scharfstein et~al., 1999]{scharfstein1999adjusting}
Scharfstein, D.~O., Rotnitzky, A., and Robins, J.~M. (1999).
\newblock Adjusting for nonignorable drop-out using semiparametric nonresponse
  models.
\newblock {\em Journal of the American Statistical Association},
  94(448):1096--1120.

\bibitem[Sch{\"o}lkopf et~al., 2001]{scholkopf2001generalized}
Sch{\"o}lkopf, B., Herbrich, R., and Smola, A.~J. (2001).
\newblock A generalized representer theorem.
\newblock In {\em International conference on computational learning theory},
  pages 416--426. Springer.

\bibitem[Shalit et~al., 2017]{shalit2017estimating}
Shalit, U., Johansson, F.~D., and Sontag, D. (2017).
\newblock Estimating individual treatment effect: generalization bounds and
  algorithms.
\newblock In {\em International Conference on Machine Learning}, pages
  3076--3085. PMLR.

\bibitem[Shi et~al., 2021]{shi2021theory}
Shi, X., Miao, W., Hu, M., and Tchetgen~Tchetgen, E. (2021).
\newblock Theory for identification and inference with synthetic controls: A
  proximal causal inference framework.
\newblock {\em arXiv preprint arXiv:2108.13935}.

\bibitem[Shi et~al., 2020a]{shi2020multiply}
Shi, X., Miao, W., Nelson, J.~C., and Tchetgen~Tchetgen, E.~J. (2020a).
\newblock Multiply robust causal inference with double-negative control
  adjustment for categorical unmeasured confounding.
\newblock {\em Journal of the Royal Statistical Society: Series B (Statistical
  Methodology)}, 82(2):521--540.

\bibitem[Shi et~al., 2020b]{shi2020selective}
Shi, X., Miao, W., and Tchetgen~Tchetgen, E. (2020b).
\newblock A selective review of negative control methods in epidemiology.
\newblock {\em Current Epidemiology Reports}, pages 1--13.

\bibitem[Singh, 2020]{singh2020kernelnc}
Singh, R. (2020).
\newblock Kernel methods for unobserved confounding: Negative controls,
  proxies, and instruments.
\newblock {\em arXiv preprint arXiv:2012.10315}.

\bibitem[Sofer et~al., 2016]{sofer2016negative}
Sofer, T., Richardson, D.~B., Colicino, E., Schwartz, J., and
  Tchetgen~Tchetgen, E. (2016).
\newblock On negative outcome control of unobserved confounding as a
  generalization of difference-in-differences.
\newblock {\em Statistical science: a review journal of the Institute of
  Mathematical Statistics}, 31(3):348.

\bibitem[Tchetgen~Tchetgen et~al., 2010]{tchetgen2010doubly}
Tchetgen~Tchetgen, E.~J., Robins, J.~M., and Rotnitzky, A. (2010).
\newblock On doubly robust estimation in a semiparametric odds ratio model.
\newblock {\em Biometrika}, 97(1):171--180.

\bibitem[Tchetgen~Tchetgen and Shpitser, 2012]{tchetgen2012semiparametric}
Tchetgen~Tchetgen, E.~J. and Shpitser, I. (2012).
\newblock Semiparametric theory for causal mediation analysis: efficiency
  bounds, multiple robustness, and sensitivity analysis.
\newblock {\em Annals of statistics}, 40(3):1816.

\bibitem[Tchetgen~Tchetgen et~al., 2020]{tchetgen2020introduction}
Tchetgen~Tchetgen, E.~J., Ying, A., Cui, Y., Shi, X., and Miao, W. (2020).
\newblock An introduction to proximal causal learning.
\newblock {\em arXiv preprint arXiv:2009.10982}.

\bibitem[Tsiatis, 2007]{tsiatis2007semiparametric}
Tsiatis, A. (2007).
\newblock {\em Semiparametric theory and missing data}.
\newblock Springer Science \& Business Media.

\bibitem[Tsiatis et~al., 2011]{tsiatis2011improved}
Tsiatis, A.~A., Davidian, M., and Cao, W. (2011).
\newblock Improved doubly robust estimation when data are monotonely coarsened,
  with application to longitudinal studies with dropout.
\newblock {\em Biometrics}, 67(2):536--545.

\bibitem[Van~der Laan, 2014]{van2014targeted}
Van~der Laan, M.~J. (2014).
\newblock Targeted estimation of nuisance parameters to obtain valid
  statistical inference.
\newblock {\em The international journal of biostatistics}, 10(1):29--57.

\bibitem[van~der Laan and Gruber, 2010]{van2010collaborative}
van~der Laan, M.~J. and Gruber, S. (2010).
\newblock Collaborative double robust targeted maximum likelihood estimation.
\newblock {\em The international journal of biostatistics}, 6(1).

\bibitem[Van~der Laan and Rose, 2011]{van2011targeted}
Van~der Laan, M.~J. and Rose, S. (2011).
\newblock {\em Targeted learning: causal inference for observational and
  experimental data}.
\newblock Springer Science \& Business Media.

\bibitem[Van~der Vaart, 2000]{van2000asymptotic}
Van~der Vaart, A.~W. (2000).
\newblock {\em Asymptotic statistics}, volume~3.
\newblock Cambridge university press.

\bibitem[Vansteelandt et~al., 2008]{vansteelandt2008multiply}
Vansteelandt, S., VanderWeele, T.~J., Tchetgen~Tchetgen, E.~J., and Robins,
  J.~M. (2008).
\newblock Multiply robust inference for statistical interactions.
\newblock {\em Journal of the American Statistical Association},
  103(484):1693--1704.

\bibitem[Vermeulen and Vansteelandt, 2015]{vermeulen2015bias}
Vermeulen, K. and Vansteelandt, S. (2015).
\newblock Bias-reduced doubly robust estimation.
\newblock {\em Journal of the American Statistical Association},
  110(511):1024--1036.

\bibitem[Vlassis et~al., 2020]{vlassis2020proximal}
Vlassis, N., Hebda, P., McBride, S., and Noulas, A. (2020).
\newblock On proximal causal learning with many hidden confounders.
\newblock {\em arXiv preprint arXiv:2012.06725}.

\bibitem[Wager and Athey, 2018]{wager2018estimation}
Wager, S. and Athey, S. (2018).
\newblock Estimation and inference of heterogeneous treatment effects using
  random forests.
\newblock {\em Journal of the American Statistical Association},
  113(523):1228--1242.

\bibitem[Wainwright, 2019]{wainwright2019high}
Wainwright, M.~J. (2019).
\newblock {\em High-dimensional statistics: A non-asymptotic viewpoint},
  volume~48.
\newblock Cambridge University Press.

\bibitem[Ying et~al., 2021]{ying2021proximal}
Ying, A., Miao, W., Shi, X., and Tchetgen~Tchetgen, E. (2021).
\newblock Proximal causal inference for complex longitudinal studies.
\newblock {\em arXiv preprint arXiv:2109.07030}.

\bibitem[Zheng and Van Der~Laan, 2010]{zheng2010asymptotic}
Zheng, W. and Van Der~Laan, M.~J. (2010).
\newblock Asymptotic theory for cross-validated targeted maximum likelihood
  estimation.

\end{thebibliography}
\bibliographystyle{apalike}


\newpage 

\begin{center}
    {\LARGE \bf Supplementary Materials}
\end{center}

\vspace{20mm}

\appendix

\section{RELATED WORK}

In recent years, doubly robust moment functions have been proposed for several different parameters of interest \citep{scharfstein1999adjusting,robins2001comment,bang2005doubly,kang2007demystifying,vansteelandt2008multiply,rotnitzky2012improved,van2011targeted,okui2012doubly,tchetgen2012semiparametric,chernozhukov2016locally,chernozhukov2018double}. The predominant approach for constructing doubly robust moment functions is by using the influence function of the parameter of interest. The seminal work of \cite{scharfstein1999adjusting} took this approach for the study of estimating the average outcome in the presence of missing data. \cite{chernozhukov2016locally} derived several novel classes of doubly robust moment conditions by adding to identifying moment functions the nonparametric influence functions, and discussed sufficient conditions for the existence of doubly robust estimating functions. In our work, we study a general class of doubly robust influence functions as our moment function, which is an extension of a class introduced in \citep{robins2008higher}.

There are few other works on using a doubly robust moment function for estimating the nuisance functions. One approach is to estimate the nuisance functions by minimizing the variance of the doubly robust estimator \citep{cao2009improving,tsiatis2011improved,van2010collaborative}. Another perspective is focusing on bias reduction rather than variance reduction and our proposed method also falls into this category \citep{van2014targeted, vermeulen2015bias, avagyan2017honest,cui2019selective}. Especially, \cite{vermeulen2015bias} proposed the bias reduced doubly robust estimation approach, which locally minimizes the squared first-order asymptotic bias of the doubly robust estimator in the direction of the nuisance parameters under misspecification of both working models. \cite{vermeulen2015bias} only considers parametric working models and the estimating equations are based on the derivative of the functions. There is no straight forward generalization of their approach in the literature, and stability concerns in terms of framing that approach as optimization problem can arise. In fact, our method can be viewed as a proposal for generalization of that work to the non-parametric setup.

Our approach is related to the series of work \citep{blundell2007semi,chen2012estimation,chen2018optimal} in which functionals of non-parametric instrumental variable response curve are considered. Specifically, \cite{chen2012estimation} proposed a class of penalized sieve minimum distance estimators for this task. Our work provides analogous techniques for a large class of doubly robust functionals using minimax estimation of nuisance functions with the hypothesis class chosen to be an RKHS.

There is a growing attention in the literature to the use of machine learning approaches for estimating causal quantities \citep{athey2016approximate,farbmacher2020causal,kallus2018policy,nie2017quasi,shalit2017estimating,wager2018estimation,chernozhukov2019semi,oprescu2019orthogonal,kallus2019localized,dikkala2020minimax,bennett2019deep,hartford2017deep}, with a particular recent interest in minimax machine learning methods \citep{bennett2019deep,dikkala2020minimax,muandet2019dual,liao2020provably,chernozhukov2020adversarial,kallus2021causal}. 
It is important to note that in works such as \citep{dikkala2020minimax} and \citep{bennett2019deep}, the target is a dose-response curve. Therefore, the target of estimation is unique. In our work, functions $h$ and $q$ are just nuisance functions for our parameter of interest, and in general, they do not need to be uniquely identified for the parameter of interest to be uniquely identified. For the convergence analysis of the kernel estimator, we used the results of \cite{dikkala2020minimax}. However, their loss function is a special case of the loss functions resulted from our function class. Also, we utilized the universality property of RKHSes and tailored and streamlined the proof and arguments to account for this property. Also, we modified the presentation form of the hyper parameters.
Also, the starting point of our approach is a doubly robust functional not a conditional moment function as in \citep{dikkala2020minimax}, and the connection between the former and the latter was not stated in the literature before and is one of the contributions of our work. In our work, we start from perturbing a doubly robust functional and prove that this can lead to a conditional moment function.

The authors in the concurrent work \citep{chernozhukov2020adversarial} also use results of \cite{dikkala2020minimax} for function estimation, yet they do not use the double-robustness property for estimating nuisance function and their approach is for a different functional class. The functional class considered in that work does not includes the influence function of the proximal framework. Therefore, we believe that class is not suited for the goal of our paper. Also, that work does not leverage the mixed bias property which is our main tool for estimation. In general, their functional class is restricted to those that do not require ill-posedness concerns regarding projection; both hypothesis spaces are on functions with same arguments.

Finally, regarding the proximal causal inference framework, there is a growing literature on using proxy variables for causal inference from observational data. Extensive discussion of proxy variables encountered in health and social sciences can be found in \citep{lipsitch2010negative, kuroki2014measurement,miao2018identifying,shi2020multiply,shi2020selective,sofer2016negative}. 
The proximal causal inference framework has been recently also extended to other setups such as mediation analysis \citep{dukes2021proximal, ghassami2021proximal}, data fusion \citep{ghassami2022combining, imbens2022long}, and longitudinal data analysis \citep{ying2021proximal, shi2021theory, imbens2021controlling}.
The specific model that we use in our paper is developed in \citep{miao2018identifying,miao2018confounding,tchetgen2020introduction}. Specifically, \cite{miao2018identifying} established sufficient conditions for nonparametric identification of causal effects using a pair of proxies. Their setup was further studied in \citep{vlassis2020proximal} and \citep{deaner2021proxy}. Also, \cite{cui2020semiparametric} developed an efficient semiparametric estimator for the proximal framework. 
\cite{cui2020semiparametric} also proposed estimating equations for the nuisance functions. That work considers a parametric working model for the nuisance functions. Then considers the parameters of the nuisance functions as the parameter of interest and derives the influence functions for these parameters in order to generate estimating equations for the latter. This leads to unconditional moment equations that may be used to estimate parametric models for nuisance bridge functions, yet under certain considerations, as we establish in this paper, these moments equations can generate conditional moment equations allowing for nonparametric estimation of bridge functions. We emphasize that our nonparametric conditional moment equations are generated using a completely different, and more general argument than that of Cui et al. as it applies to a broader class of functionals. 

Existing results for proximal causal inference mostly rely on parametric assumptions for the working models for the nuisance functions, with the following exceptions: \citep{singh2020kernelnc} which uses RKHSes, however, does not use the influence function for estimating the causal effect.
\cite{mastouri2021proximal} only considers the so-called proximal g-formula approach for estimation, which is the result mentioned in Theorem \ref{thm:POR} of our paper. Because the influence function is not used in that work, the bias will be first order and therefore, larger than the bias in our work which is not only second order but also product bias. That is, the bias in our method is guaranteed to be of smaller order. In fact, while our estimator is root-$n$ consistent (under the assumptions of Theorem \ref{thm:convergence}), the estimator in that work will generally fail to be root-$n$ consistent due to slow convergence rate of their nonparametric estimator of the bridge function.
Also, in an independent, concurrent work, \cite{kallus2021causal} consider the use of a minimax learning method in proximal causal inference framework. However, the functional class, assumptions, and convergence analysis for the parameter of interest in that work is different from ours. 
The convergence analysis for the stabilized minimax optimization in that work is also based on the results of \cite{dikkala2020minimax}.
However, that work is primarily focused on ACE in proximal causal inference, yet we work with a broad functional class and we focus on the double robustness property to derive the estimators and convergence analysis. The functional class considered in \citep{kallus2021causal} is a special case of our class of functionals. This can be seen by including the function $\pi$ in that paper in functions $g_1$ and $g_2$ in our functional.

\subsection{Connection to the Double/Debiased Machine Learning Framework and Neyman Orthogonality}

Our results are formally grounded in semiparametric theory. We use a first order influence function, which always leads to a second order bias \citep{bickel1993efficient,newey1990semiparametric, robins2017minimax}. This fundamental property was recently popularized by \cite{chernozhukov2018double} in terms of Neyman orthogonality property. However, note that our result is even stronger because not only our estimator has second order bias, but also it has the so-called mixed bias property, which is the basis of double-robustness. That is, simply having Neyman orthogonality property does not necessarily lead to double-robustness, which is the main feature of our function class, and hence the results of Chernozhokov et al. are not enough to derive ours. Also, due to the same fact, our proof techniques are different. Neyman orthogonality is not the starting assumption in our theorem even though, again, by virtue of the fact that we are using an influence function, our functional does satisfy the Neyman orthogonality property. 

Neyman Orthogonality can be interpreted as the moment equation incurring second order bias in terms of its nuisance functions. Influence functions in semiparametric/nonparametric models have been known to have this property since the 1980s, going back to the works of Newey, Bickel et al., and Van der Vaart. Therefore, all influence functions are guaranteed to satisfy Neyman Orthogonality. Nevertheless, Neyman Orthogonality can be considerably weaker than double robustness, as although influence functions are guaranteed to have second order bias, double robustness of an influence function requires evaluating the influence function under an appropriate choice of parametrization of the observed data distribution. See for example \citep{tchetgen2010doubly} for an example of an influence function for the semiparametric logistic regression model which is not doubly robust if one of the nuisance parameters is the standard propensity score (i.e., $\text{Pr}(\text{Treatment}=1\mid\text{Covariates})$), however, the same influence function becomes doubly robust if an alternative parametrization replacing the propensity score with the retrospective propensity score which further conditions on the outcome not having occurred (i.e., $\text{Pr}(\text{Treatment}=1\mid\text{Covariates}, \text{Outcome}=0)$), a somewhat surprising result given the central role the standard propensity plays in causal inference. 

Therefore, one could use arguments in \cite{chernozhukov2018double} to establish that the bias of our estimator must be second order, however those arguments are insufficient to establish the exact form of the second order bias, potentially leading to conservative statements about the order the bias.
In fact, note that in \citep{chernozhukov2018double}, all nuisance functions are lumped together as one nuisance function, and the work requires a convergence rate for that nuisance function. On the other hand, in our work, we have two nuisance functions, and we do not require convergence rates on either of them, but we require a convergence rate on the product of the biases. That is, one of the nuisance functions can converge arbitrarily slow (contrary to \citep{chernozhukov2018double}), as long as the other one converges fast enough so that their product is of order smaller than root $n$. The approach of \cite{chernozhukov2018double} generally requires both nuisance functions to converge at rate faster than quarter root $n$. It is the special form of our functional of interest which enabled us to obtain such important property.

Note that besides the aforementioned difference between the two works, due to the fact that we have different arguments for the nuisance functions, throughout the proof, we have to also deal with projection of one function space to the other for our convergence analysis, which is a feature new in our work.

\section{CRITICAL RADIUS}

For the bounding approach in Section \ref{sec:kernel}, we need the following definitions from the statistical learning theory literature \citep{wainwright2019high}.
\begin{definition}[Localized Rademacher Complexity]
For a given $\delta>0$, and function class $\mathcal{F}$,  the localized Rademacher complexity of $\mathcal{F}$ is defined as
\[
\mathcal{R}(\delta,\mathcal{F})\coloneqq
\E_{\epsilon,X}\Bigg[ \sup_{\overset{f\in\mathcal{F}}{\|f\|_{2}\le \delta}} \Big|  \frac{1}{n}\sum_{i=1}^n\epsilon_i f(X_i) \Big| \Bigg],
\]
where $\{X_i\}_{i=1}^n$ are i.i.d. samples from the underlying distribution and $\{\epsilon_i\}_{i=1}^n$ are i.i.d. Rademacher variables taking values in $\{-1,+1\}$ with equal probability, independent of $\{X_i\}_{i=1}^n$.
\end{definition}

\begin{definition}[Critical Radius]
The critical radius of a function class $\mathcal{F}$ is the smallest solution $\delta_n^*$ to the inequality $\mathcal{R}(\delta,\mathcal{F})\le\delta^2$.
\end{definition}

\cite{wainwright2019high} also provided the empirical counterparts of the localized Rademacher complexity and critical radius, which can be used to estimate the corresponding true values.

One of the attractive computational properties of RKHSes is that their localized Rademacher complexity can be determined by their eigendecay. 
For any function class $\mathcal{F}$, let $\mathcal{F}_B\coloneqq\{f\in\mathcal{F}:\|f\|^2_{\mathcal{F}}\le B\}$.
If $\mathcal{F}$ is an RKHS with eigenvalues $\{\mu_j\}_{j=1}^\infty$, then the localized Rademacher complexity of $\mathcal{F}_B$ can be upper bounded as \citep{wainwright2019high}
\[
\mathcal{R}(\delta,\mathcal{F}_B)\le B\sqrt{\frac{2}{n}}\sqrt{\sum_{j=1}^\infty\min\{ \mu_j,\delta^2 \}}.
\]
Therefore, we can find an upper bound for the critical radius of $\mathcal{F}_B$ by finding a solution to the inequality
\[
B\sqrt{\frac{2}{n}}\sqrt{\sum_{j=1}^\infty\min\{ \mu_j,\delta^2 \}}\le\delta^2.
\]

\section{A REMARK ON THE MEASURE OF ILL-POSEDNESS}

\cite{chen2012estimation} also proposed the following measure of ill-posedness in their work 
\[
\tau=\sup_{h\in\sH}\frac{\|h\|_2}{\big\|\textup{P}[h(V_h)|V_q] \big\|_2}.
\]
This measure can also be used for bounding error $\|\hat{h}-h_0\|_2$ simply by
\[
\|\hat{h}-h_0\|_2\le\tau\big\|\textup{P}[\hat{h}(V_h)-h_0(V_h)|V_q] \big\|_2.
\]
However, we note that
\begin{align*}
\tau_h(\delta)&=\delta\sup_{h\in\sH^{|\delta}}\frac{\|h\|_2}{\delta}\\
&=\delta\sup_{h\in\sH^{|\delta}}\frac{\|h\|_2}{\big\|\textup{P}[h(V_h)|V_q] \big\|_2}\cdot\frac{\big\|\textup{P}[h(V_h)|V_q] \big\|_2}{\delta}\\
&\le\delta\sup_{h\in\sH^{|\delta}}\frac{\|h\|_2}{\big\|\textup{P}[h(V_h)|V_q] \big\|_2}\\
&\le\delta\tau.
\end{align*}
Hence, working with $\tau_h(\delta)$, as we do in our approach, leads to tighter bounds.

\section{PROOFS}

\begin{proof}[Proof of Proposition \ref{prop:DRHOIF}]

Due to Assumption \ref{assumption:varind}, the model governing the distribution of the data contains parametric submodels $\mathcal{M}^l=\{P^{l,t}:t\ge0\}$ for $l=1,2,...$ (the parameter of each submodel is denoted by $t$), such that
\begin{itemize}
\item $P^{l,0}=P$
\item $P^{l,t}_X={P}_X$
\item $q^{l,t}(v_q)=q_0(v_q)+tf^l(v_q)$
\item $h^{l,t}(v_h)=h_0(v_h)$	
\end{itemize}
where the sequence $\{f^l\}$ is dense in $\mathcal{L}_2(P_{V_q})$.

Let $S^l(P)$ be the score of the submodel $P^{l,t}$ at $t=0$, that is, 
\[
S^l(P)=\frac{d}{d t}\log P^{l,t}\big|_{t=0}.
\]
We note that
\begin{align*}
\psi(P^{l,t})
&=\E_{P^{l,t}}[
q^{l,t}(V_q)h^{l,t}(V_h)g_1(V)+q^{l,t}(V_q)g_2(V)+h^{l,t}(V_h)g_3(V)+g_4(V)
]\\
&=\E_{P^{l,t}}[
\{q_0(V_q)+tf^l(V_q)\}h_0(V_h)g_1(V)\\
&\quad+\{q_0(V_q)+tf^l(V_q)\}g_2(V)+h_0(V_h)g_3(V)+g_4(V)
].
\end{align*}
Therefore,
\begin{align*}
\frac{d}{d t}\psi(P^{l,t})\big|_{t=0}	
&=\E_P[f^l(V_q)h_0(V_h)g_1(V)+f^l(V_q)g_2(V)]\\
&\quad+\E_P[\{q_0(V_q)h_0(V_h)g_1(V)+q_0(V_q)g_2(V)+h_0(V_h)g_3(V)+g_4(V)\}S^l(P)].
\end{align*}
But we notice that since $q_0(V_q)h_0(V_h)g_1(V)+q_0(V_q)g_2(V)+h_0(V_h)g_3(V)+g_4(V)-\psi_0$ is the influence function of $\psi_0$, we have
\begin{align*}
\frac{d}{d t}\psi(P^{l,t})\big|_{t=0}	
=\E_P[\{q_0(V_q)h_0(V_h)g_1(V)+q_0(V_q)g_2(V)+h_0(V_h)g_3(V)+g_4(V)\}S^l(P)],
\end{align*}
which implies that 
\begin{equation}
\label{eq:flvq}
\E_P[f^l(V_q)h_0(V_h)g_1(V)+f^l(V_q)g_2(V)]=
\E_P[\{h_0(V_h)g_1(V)+g_2(V)\}f^l(V_q)]=0.
\end{equation}
Equation \eqref{eq:flvq} is true for every submodel $l$. Therefore, since the sequence $\{f^l\}$ is dense in $\mathcal{L}_2(P_{V_q})$, we conclude that
$\E[h_0(V_h)g_1(V)+g_2(V)|V_q]=0$.

Similarly, we can show that $\E[q_0(V_q)g_1(V)+g_3(V)|V_h]=0$.

For any choice of $q$, we have
\begin{align*}
&\E[IF(V;\psi_0,q,h_0)]
-\E[IF(V;\psi_0,q_0,h_0)]\\
&\quad=\E[g_1(V)h_0(V_h)\{q(V_q)-q_0(V_q)\}+g_2(V)\{q(V_q)-q_0(V_q)\}]\\
&\quad=\E[\{g_1(V)h_0(V_h)+g_2(V)\}\{q(V_q)-q_0(V_q)\}]\\
&\quad=\E[\E[h_0(V_h)g_1(V)+g_2(V)|V_q]\{q(V_q)-q_0(V_q)\}]=0,
\end{align*}

For any choice of $h$, we have
\begin{align*}
&\E[IF(V;\psi_0,q_0,h)]
-\E[IF(V;\psi_0,q_0,h_0)]\\
&\quad=\E[g_1(V)q_0(V_q)\{h(V_h)-h_0(V_h)\}+g_3(V)\{h(V_h)-h_0(V_h)\}]\\
&\quad=\E[\{g_1(V)q_0(V_q)+g_3(V)\}\{h(V_h)-h_0(V_h)\}]\\
&\quad=\E[\E[q_0(V_q)g_1(V)+g_3(V)|V_h]\{h(V_h)-h_0(V_h)\}]=0,
\end{align*}

\end{proof}

\begin{proof}[Proof of Proposition \ref{prop:separate}]
We note that the function $q$ does not appear in 
\[
\E[ prt(q,h;\dotq,0)]=\E\big[ \dotq(V_q)[h(V_h)g_1(V)+g_2(V)]\big].
\]
Similarly, the function $h$ does not appear in 
\[
\E[ prt(q,h;0,\doth)]=\E\big[ \doth(V_h)[q(V_q)g_1(V)+g_3(V)] \big].
\]
Also, recall that the nuisance functions are assumed to be variation independent.
Therefore, $\hat{q}$ and $\hat{h}$ can be estimated using separate optimizations.

Since $\E\big[ \dotq(V_q)[\hat{h}(V_h)g_1(V)+g_2(V)]\big]=0$ for all $\dotq$, the choice of $\dotq(V_q)=\E[\hat{h}(V_h)g_1(V)+g_2(V)\mid V_q]$ and law of iterative expectations imply that $\E[\hat{h}(V_h)g_1(V)+g_2(V)|V_q]=0$. Similarly, we conclude that $\E[\hat{q}(V_q)g_1(V)+g_3(V)|V_h]=0$.

\end{proof}

\begin{proof}[Proof of Theorem \ref{thm:conds}]
	
We note that
\begin{align*}
&\min_{h}\max_{\dotq }
\E\Big[ \dotq(V_q)[h(V_h)g_1(V)+g_2(V)]-\dotq^2(V_q) \Big]\\
&=\min_{h}-\Big(\min_{\dotq }
\E\Big[ (\dotq(V_q)-\frac{1}{2}[h(V_h)g_1(V)+g_2(V)])^2 \Big]-\frac{1}{4}\E\Big[[h(V_h)g_1(V)+g_2(V)]^2\Big]\Big)
\end{align*}
Similarly,
\begin{align*}
&\min_{q}\max_{\doth }
\E\Big[ \doth(V_h)[q(V_q)g_1(V)+g_3(V)]-\doth^2(V_h) \Big]\\
&=\min_{q}-\Big(\min_{\doth }
\E\Big[ (\doth(V_h)-\frac{1}{2}[q(V_q)g_1(V)+g_3(V)])^2 \Big]-\frac{1}{4}\E\Big[[q(V_q)g_1(V)+g_3(V)]^2\Big]\Big),
\end{align*}
The inner optimizations achieve their optimum at
\[
\dotq^*(V_q)=\frac{1}{2}\E[h(V_h)g_1(V)+g_2(V)|V_q],
\]
\[
\doth^*(V_h)=\frac{1}{2}\E[q(V_q)g_1(V)+g_3(V)|V_h],
\]
Therefore, the outer minimization problems are
\begin{align*}
&\min_{h}
\frac{1}{4}\E\Big[ \E[h(V_h)g_1(V)+g_2(V)|V_q]^2 \Big]\ge0,\\
&\min_{q}
\frac{1}{4}\E\Big[ \E[q(V_q)g_1(V)+g_3(V)|V_h]^2 \Big]\ge0.
\end{align*}
But we note that zero is achievable. This is due to Proposition \ref{prop:separate}, where zero is achieved at the ground-truth nuisance functions $q_0$ and $h_0$. Therefore, the optimizers satisfy
\[
\E[h(V_h)g_1(V)+g_2(V)|V_q]=0,
\]
\[
\E[q(V_q)g_1(V)+g_3(V)|V_h]=0.
\]

\end{proof}

\begin{proof}[Proof of Theorem \ref{thm:convergence}]
	
To simplify the notations, assume the size of each partition of the data is $m$, that is, $n=mL$. Let $(\hat{q}_\ell,\hat{h}_\ell)$ be the estimators of $(q_0,h_0)$ obtained on subsamples $I^c_\ell$. 
For parameters $q$, $h$, and $\psi$ define
\[
\phi(V;q,h)\coloneqq IF(V;\psi,q,h)-\psi,
\]
which implies that $\hat{\psi}_\ell$ can be written as
\[
\hat{\psi}_\ell=\mathbb{P}_m^\ell[\phi(V;\hat{q}_\ell,\hat{h}_\ell)].
\]
Therefore, we have
\begin{align*}
\sqrt{n}\{\hat{\psi}-\psi_0\}=&\sqrt{n}\Big\{\frac{1}{L}\sum_{\ell=1}^L \mathbb{P}_m^\ell[\phi(V;\hat{q}_\ell,\hat{h}_\ell)]-\psi_0]      \Big\}\\
=&\frac{\sqrt{n}}{L\sqrt{m}}\sum_{\ell=1}^L\Bigg\{
\Big\{\mathbb{G}_m^\ell[\phi(V;\hat{q}_\ell,\hat{h}_\ell)]-\mathbb{G}_m^\ell[\phi(V;q_0,h_0)]\Big\}\\
&\quad\quad\quad\quad\quad+\mathbb{G}_m^\ell[\phi(V;q_0,h_0)]\\
&\quad\quad\quad\quad\quad+\sqrt{m}\Big\{ \textup{P}[\phi(V;\hat{q}_\ell,\hat{h}_\ell)]-\psi_0 \Big\}
\Bigg\}
\end{align*}
where for any function $f(V)$,
\[
\mathbb{G}_n[f]=\sqrt{n}\big\{ \mathbb{P}_n[f]-\textup{P}[f]  \big\}
\]
represents the empirical process. 
Recalling that $n=mL$, we have
\begin{align*}
\sqrt{n}\{\hat{\psi}-\psi_0\}
=&\frac{1}{\sqrt{L}}\sum_{\ell=1}^L
\Big\{\mathbb{G}_m^\ell[\phi(V;\hat{q}_\ell,\hat{h}_\ell)]-\mathbb{G}_m^\ell[\phi(V;q_0,h_0)]\Big\}
&&(T_1)
\\
+&\frac{1}{\sqrt{L}}\sum_{\ell=1}^L\mathbb{G}_m^\ell[\phi(V;q_0,h_0)]&&(T_2)\\
+&\frac{1}{\sqrt{L}}\sum_{\ell=1}^L\sqrt{m}\Big\{ \textup{P}[\phi(V;\hat{q}_\ell,\hat{h}_\ell)]- \psi_0 \Big\}&&(T_3)
\end{align*}

We will show that $T_2$ provides us with the term $\sqrt{n}~\hat{\E}[IF(V;\psi_0,q_0,h_0)]$, and under the assumptions of the theorem, $T_1$ and $T_3$ are $o_p(1)$.\\

\noindent
{\large\bf Analysis of $\mathbf{T_1}$}\\

Let $V^{\ell,c}$ be the data in all but the $\ell$-th fold, and define
\[
A_m^\ell\coloneqq
\Big\{\mathbb{G}_m^\ell[\phi(V;\hat{q}_\ell,\hat{h}_\ell)]-\mathbb{G}_m^\ell[\phi(V;q_0,h_0)]\Big\}.
\]
We note that 
\begin{align*}
\textit{var}(A_m^\ell|V^{\ell,c})&=m ~\textit{var}\big( \mathbb{P}_m^\ell[\phi(V;\hat{q}_\ell,\hat{h}_\ell)]-\mathbb{P}_m^\ell[\phi(V;q_0,h_0)] \big|V^{\ell,c} \big)\\
&=\textit{var}\big( \phi(V;\hat{q}_\ell,\hat{h}_\ell)-\phi(V;q_0,h_0) \big|V^{\ell,c} \big)\\
&\le \textup{P}\big[   \{\phi(V;\hat{q}_\ell,\hat{h}_\ell)-\phi(V;q_0,h_0)\}^2    \big|V^{\ell,c}\big]\\
&=\Big\| \phi(V;\hat{q}_\ell,\hat{h}_\ell)-\phi(V;q_0,h_0) \Big\|_2^2\\
&=\Big\|   
g_1(V)\hat{q}_\ell(V_q)\hat{h}_\ell(V_h)
+g_2(V)\hat{q}_\ell(V_q)
+g_3(V)\hat{h}_\ell(V_h)\\
&\quad\quad -g_1(V)q_0(V_q)h_0(V_h)
-g_2(V)q_0(V_q)
-g_3(V)h_0(V_h)
  \Big\|_2^2.
\end{align*}
Note that the last expression is equal to both
\[
\Big\| 
(\hat{h}_\ell(V_h)-h_0(V_h))(g_3(V)+g_1(V)\hat{q}_\ell(V_q))
+(\hat{q}_\ell(V_q)-q_0(V_q))(g_2(V)+g_1(V)h_0(V_h))
  \Big\|_2^2
\]
and
\[
\Big\| 
(\hat{h}_\ell(V_h)-h_0(V_h))(g_3(V)+g_1(V)q_0(V_q))
+(\hat{q}_\ell(V_q)-q_0(V_q))(g_2(V)+g_1(V)\hat{h}_\ell(V_h))
  \Big\|_2^2.
\]
Therefore,
\begin{align*}
\textit{var}(A_m^\ell|V^{\ell,c})
\le 2\min\Bigg\{
&\Big\| 
(\hat{h}_\ell(V_h)-h_0(V_h))(g_3(V)+g_1(V)\hat{q}_\ell(V_q))
\Big\|_2^2\\
&+\Big\| 
(\hat{q}_\ell(V_q)-q_0(V_q))(g_2(V)+g_1(V)h_0(V_h))
\Big\|_2^2
~,\\
&\Big\| 
(\hat{h}_\ell(V_h)-h_0(V_h))(g_3(V)+g_1(V)q_0(V_q))
\Big\|_2^2\\
&+\Big\| 
(\hat{q}_\ell(V_q)-q_0(V_q))(g_2(V)+g_1(V)\hat{h}_\ell(V_h))
\Big\|_2^2
\Bigg\}\\
\le 2\min\Bigg\{
&\sup_v\{(g_3(v)+g_1(v)\hat{q}_\ell(v_q))^2\}
\|\hat{h}_\ell-h_0\|_2^2\\
&+\sup_v\{(g_2(v)+g_1(v)h_0(v_h))^2\}
\|\hat{q}_\ell-q_0\|_2^2
~,\\
&\sup_v\{(g_3(v)+g_1(v)q_0(v_q))^2\}
\|\hat{h}_\ell-h_0\|_2^2\\
&+\sup_v\{(g_2(v)+g_1(v)\hat{h}_\ell(v_h))^2\}
\|\hat{q}_\ell-q_0\|_2^2
\Bigg\}
\end{align*}
By Assumptions \ref{assumption:convergence} and  \ref{assumption:nuisconv}, $\|\hat{q}_\ell-q_0\|_2$ and $\|\hat{h}_\ell-h_0\|_2$ converge to zero in probability, and either the first two, or the second two supremums are finite, which implies that 
$\textit{var}(A_m^\ell|V^{\ell,c})$
converges to zero in probability as $m\rightarrow\infty$.
Also note that $\textup{P}[A_m^\ell|V^{\ell,c}]=0$. Therefore,
\[
\textup{P}[(A_m^\ell)^2|V^{\ell,c}]=\textit{var}(A_m^\ell|V^{\ell,c})\overset{p.}{\rightarrow}0\quad\quad\text{as }m\rightarrow\infty.
\]
Then, by conditional Chebyshev inequality, for all $\delta>0$, we have
\[
P(|A_m^\ell|>\delta\mid V^{\ell,c})\overset{p.}{\rightarrow}0\quad\quad\text{as }m\rightarrow\infty.
\]
That is, $P(|A_m^\ell|>\delta\mid V^{\ell,c})$ is a bounded sequence that converges to zero in probability. Therefore, 
\[
\E\big[P(|A_m^\ell|>\delta\mid V^{\ell,c})\big]=P(|A_m^\ell|>\delta)\overset{p.}{\rightarrow}0\quad\quad\text{as }m\rightarrow\infty.
\]
That is, $A_m^\ell\overset{p.}{\rightarrow}0$ as $m\rightarrow\infty$.

The conclusion holds for all $\ell\in\{1,...,L\}$ and hence, we conclude that $T_1=o_p(1)$.\\

\noindent
{\large\bf Analysis of $\mathbf{T_2}$}
\begin{align*}
T_2
&=\frac{1}{\sqrt{L}}\sum_{\ell=1}^L\mathbb{G}_m^\ell[\phi(V;q_0,h_0)]\\
&=\frac{\sqrt{m}}{\sqrt{L}}\sum_{\ell=1}^L\Big\{\mathbb{P}_m^\ell[\phi(V;q_0,h_0)]-\textup{P}[\phi(V;q_0,h_0)]\Big\}\\
&=\frac{\sqrt{m}}{\sqrt{L}}\frac{1}{m}\sum_{\ell=1}^L\sum_{i=1}^m\Big\{ \phi(V_i;q_0,h_0)-\textup{P}[\phi(V;q_0,h_0)] \Big\}\\
&=\frac{1}{\sqrt{n}}\sum_{i=1}^n\Big\{ \phi(V_i;q_0,h_0)-\textup{P}[\phi(V;q_0,h_0)] \Big\}.
\end{align*}
By central limit theorem, the last expression converges to the distribution $\mathcal{N}(0,\textit{var}~(\phi(V;q_0,h_0)))$ if $\textup{P}[\phi(V;q_0,h_0)^2]<\infty$. Note that $\textup{P}[\phi(V;q_0,h_0)^2]$ can be upper bounded as
\begin{align*}
\textup{P}[\phi(V;q_0&,h_0)^2]\\
&=\|\phi(V;q_0,h_0)\|^2_2\\
&=\| g_1(V)q_0(V_q)h_0(V_h)+g_2(V)q_0(V_q)+g_3(V)h_0(V_h)+g_4(V) \|^2_2\\
&\le2\| \{g_2(V)+g_1(V)h_0(V_h)\}q_0(V_q)\|^2_2+2\|g_3(V)h_0(V_h)\|^2_2+2\|g_4(V) \|^2_2\\
&\le2 \sup_v\{(g_2(v)+g_1(v)h_0(v_h))^2\}\|q_0(V_q)\|^2_2+2\sup_v\{g_3(V)^2\}\|h_0(V_h)\|^2_2+2\sup_v\{g_4(V)^2\},
\end{align*}
or upper bounded as
\begin{align*}
\textup{P}[\phi(V;q_0&,h_0)^2]\\
&=\|\phi(V;q_0,h_0)\|^2_2\\
&=\| g_1(V)q_0(V_q)h_0(V_h)+g_2(V)q_0(V_q)+g_3(V)h_0(V_h)+g_4(V) \|^2_2\\
&\le2\| \{g_3(V)+g_1(V)q_0(V_q)\}h_0(V_h)\|^2_2+2\|g_2(V)q_0(V_q)\|^2_2+2\|g_4(V) \|^2_2\\
&\le2 \sup_v\{(g_3(v)+g_1(v)q_0(v_q))^2\}\|h_0(V_h)\|^2_2+2\sup_v\{g_2(V)^2\}\|q_0(V_q)\|^2_2+2\sup_v\{g_4(V)^2\}.
\end{align*}
Therefore by Assumption \ref{assumption:convergence}, we have $\textup{P}[\phi(V;q_0,h_0)^2]<\infty$.\\

\noindent
{\large\bf Analysis of $\mathbf{T_3}$}\\

Using the double robustness property of the estimating equation, we have

\begin{equation}
\label{eq:T31}
\begin{aligned}
T_3	
&=\frac{1}{\sqrt{L}}\sum_{\ell=1}^L\sqrt{m}\Big\{ \textup{P}[\phi(V;\hat{q}_\ell,\hat{h}_\ell)]-\psi_0  \Big\}\\
&=\frac{1}{\sqrt{L}}\sum_{\ell=1}^L\sqrt{m}\Big\{ \textup{P}[\phi(V;\hat{q}_\ell,\hat{h}_\ell)]-\textup{P}[\phi(V;q_0,\hat{h}_\ell)]  \Big\}\\
&=\sqrt{n}\Big\{ \textup{P}[\phi(V;\hat{q}_\ell,\hat{h}_\ell)]-\textup{P}[\phi(V;q_0,\hat{h}_\ell)]  \Big\}
\end{aligned}
\end{equation}
Therefore (dropping the subscript $\ell$ for readability), we have
\begin{align*}
T_3
&=\sqrt{n}\textup{P}\Big[  
\hat{q}(V_q)\hat{h}(V_h)g_1(V)+\hat{q}(V_q)g_2(V)+\hat{h}(V_h)g_3(V)+g_4(V)\\
&\quad-{q}_0(V_q)\hat{h}(V_h)g_1(V)-{q}_0(V_q)g_2(V)-\hat{h}(V_h)g_3(V)-g_4(V)
   \Big]\\
&=\sqrt{n}\textup{P}\Big[(\hat{h}(V_h)g_1(V)+g_2(V))(\hat{q}(V_q)-{q}_0(V_q))\Big]\\
&=\sqrt{n}\textup{P}\Big[\textup{P}\big[\hat{h}(V_h)g_1(V)+g_2(V)\big|V_q\big](\hat{q}(V_q)-{q}_0(V_q))\Big]
\end{align*}
As seen in the proof of Proposition \ref{prop:DRHOIF}, we have $\E\big[{h}_0(V_h)g_1(V)+g_2(V)\big|V_q\big]=0$. Therefore, for all functions $h$, we have
\[
\E\big[{h}(V_h)g_1(V)+g_2(V)\big|V_q\big]=\E\big[g_1(V)\{{h}_0(V_h)-{h}(V_h)\}\big|V_q\big].
\]
Applying this equality to the last expression for $T_3$ and using Cauchy–Schwarz inequality, we have
\begin{align*}
T_3
&=\sqrt{n}\textup{P}\Big[\textup{P}\big[g_1(V)\{\hat{h}(V_h)-{h}_0(V_h)\}\big|V_q\big](\hat{q}(V_q)-{q}_0(V_q))\Big]\\
&\le\sqrt{n}\Big\|\textup{P}\big[g_1(V)\{\hat{h}(V_h)-{h}_0(V_h)\}\big|V_q\big]\Big\|_2\times\big\|\hat{q}(V_q)-{q}_0(V_q)\big\|_2.
\end{align*}
Note that by Assumption \ref{assumption:convergence},
\begin{align*}
&\Big\|\textup{P}\big[g_1(V)\{\hat{q}(V_q)-{q}_0(V_q)\}\big|V_h\big]\Big\|_2\\
&=\Big\|\textup{P}\big[\textup{P}[g_1(V)|V_q,V_h]\{\hat{q}(V_q)-{q}_0(V_q)\}\big|V_h\big]\Big\|_2\\
&\ge \min_{v_q,v_h}\big|\textup{P}[g_1(V)|V_q=v_q,V_h=v_h]\big| \Big\|\textup{P}\big[\hat{q}(V_q)-{q}_0(V_q)\big|V_h\big]\Big\|_2\\
&\ge \sigma_1 \Big\|\textup{P}\big[\hat{q}(V_q)-{q}_0(V_q)\big|V_h\big]\Big\|_2.
\end{align*}
Therefore, there exists constant $C_1$ such that 
\[
\Big\|\textup{P}\big[\hat{q}(V_q)-{q}_0(V_q)\big|V_h\big]\Big\|_2\le C_1 r_q(n).
\]
This implies that
\[
\big\|\hat{q}(V_q)-{q}_0(V_q)\big\|_2\le C_2 \tau_q\big(r_q(n)\big),
\]
for some constant $C_2$. Therefore,
\begin{align*}
T_3
&\le\sqrt{n}\Big\|\textup{P}\big[g_1(V)\{\hat{h}(V_h)-{h}_0(V_h)\}\big|V_q\big]\Big\|_2\times\big\|\hat{q}(V_q)-{q}_0(V_q)\big\|_2\\
&=O\Big(\sqrt{n}~r_h(n)\tau_q\big(r_q(n)\big)\Big).
\end{align*}
We can also replace $\psi_0$ in equation \eqref{eq:T31} with $\textup{P}[\phi(V;\hat{q}_\ell,{h}_0)]$. Hence, we also have
\begin{align*}
T_3
= O\Big(\sqrt{n}~r_q(n)\tau_h\big(r_h(n)\big)\Big),
\end{align*}
which implies that
\[
T_3
= O\Big(\sqrt{n}~\min\Big\{
r_h(n)\tau_q\big(r_q(n)\big) , 
r_q(n)\tau_h\big(r_h(n)\big)    
\Big\}\Big)=o(1),
\]
where the last equality is due to Assumption \ref{assumption:nuisconv}.

\end{proof}

\begin{proof}[Proof of proposition \ref{prop:proxATE}]

By Theorem \ref{thm:conds}, for arm $A=a$, for function $\tilde{h}_a$, we have
\begin{align*}
&\E[-I(A=a)\tilde{h}_a(W,X)+I(A=a)Y|Z,X]=0\\
&\Leftrightarrow \E[\tilde{h}_a(W,X)-Y|Z,A=a,X]=0\\	
&\Leftrightarrow \E[\tilde{h}(W,a,X)-Y|Z,A=a,X]=0,
\end{align*}
where the function $\tilde{h}$ is defined in the statement of the proposition. 

By Theorem \ref{thm:conds}, for arm $A=a$, for function $\tilde{q}_a$, we have
\begin{align*}
&\E[-I(A=a)\tilde{q}_a(Z,X)+1|W,X]=0\\
&\Leftrightarrow \E[\E[I(A=a)\tilde{q}_a(Z,X)|W,A,X]|W,X]=1 \\	
&\Leftrightarrow \E[I(A=a)|W,X]\E[\tilde{q}_a(Z,X)|W,A=a,X]=1 \\	
&\Leftrightarrow \E[\tilde{q}_a(Z,X)|W,A=a,X]=\frac{1}{P(A=a|W,X)}\\
&\Leftrightarrow \E[\tilde{q}(Z,a,X)|W,A=a,X]=\frac{1}{P(A=a|W,X)},
\end{align*}
where the function $\tilde{q}$ is defined in the statement of the proposition. 

\end{proof}

\begin{proof}[Proof of Proposition \ref{prop:mingen}]
We first show that for any function $h$,
\begin{equation}
\begin{aligned}
\sup_{q\in\mathcal{Q}} \hat{\E}\Big[ q(V_q)[h(V_h)g_1(V)&+g_2(V)]- q^2(V_q) \Big]
-\lambda_{\Q}\|q\|_{\mathcal{Q}}^2\\
&=\frac{1}{4}\{\xi_n(h)\}^\top K_{\Q,n}( \frac{1}{n} K_{\Q,n}+\lambda_\Q I_n)^{-1}\{\xi_n(h)\},
\end{aligned}
\end{equation}
where $\xi_{n}(h)=\frac{1}{n}(h(V_{h_i})g_1(V_i)+g_2(V_i))_{i=1}^n$.

To see this, we note that by the generalized representer theorem \citep{scholkopf2001generalized}, the solution to this maximization will be of the form 
\[
q(v_q)=\sum_{j=1}^n\alpha_jK_\Q(V_{q_j},v_q).
\]
Hence, only the coefficients $\alpha=(\alpha_j)_{j=1}^n$ are needed to be calculated.

Note that we have
\[
\hat{\E}[q(V_q)[h(V_h)g_1(V)+g_2(V)]]=\sum_{i=1}^nq(V_{q_i})\{\xi_n(h)\}_i=\alpha^\top K_{\Q,n}\{\xi_n(h)\},
\]
\[
\hat{\E}[q^2(V_q)]=\frac{1}{n}\alpha^\top K_{\Q,n}^2\alpha,
\]
\[
\|q\|_{\mathcal{Q}}^2=\alpha^\top K_{\Q,n}\alpha.
\]
Therefore, we have
\[
\hat{\E}\Big[ q(V_q)[h(V_h)g_1(V)+g_3(V)]- q^2(V_q) \Big]
-\lambda_{\Q}\|q\|_{\mathcal{Q}}^2
=\alpha^\top K_{\Q,n}\{\xi_n(h)\}-\alpha^\top( \frac{1}{n} K_{\Q,n}^2+\lambda_\Q K_{\Q,n})\alpha.
\]
Taking the derivative and setting it to zero, the optimal coefficients can be obtained as
\[
\alpha^*=\frac{1}{2}( \frac{1}{n} K_{\Q,n}+\lambda_\Q I_n)^{-1}\{\xi_n(h)\},
\]
where $I_n$ is the identity matrix of size $n$.

Consequently we have
\begin{align*}
\sup_{q\in\mathcal{Q}} \hat{\E}\Big[ q(V_q)[h(V_h)g_1(V)&+g_2(V)]
- q^2(V_q) \Big]
-\lambda_{\Q}\|q\|_{\mathcal{Q}}^2\\&=
\frac{1}{4}\{\xi_n(h)\}^\top K_{\Q,n}( \frac{1}{n} K_{\Q,n}+\lambda_\Q I_n)^{-1}\{\xi_n(h)\}.
\end{align*}

Therefore, the outer minimization problem in \eqref{eq:optkernelh} is reduced to the following:
\[
\hat{h}
=\arg\min_{h\in\mathcal{H}} \{\xi_n(h)\}^\top \Gamma\{\xi_n(h)\}+\lambda_{\sH}\|h\|_{\mathcal{H}}^2,
\]
where $\Gamma=\frac{1}{4}K_{\Q,n}( \frac{1}{n} K_{\Q,n}+\lambda_\Q I_n)^{-1}$.

We note that by the generalized representer theorem \citep{scholkopf2001generalized}, the solution to this minimization problem will be of the form 
\[
h(v_h)=\sum_{j=1}^n\alpha_jK_\sH(V_{h_j},v_h).
\]
Hence, only the coefficients $\alpha=(\alpha_j)_{j=1}^n$ are needed to be calculated.

Recall that $\xi_{n}(h)=\frac{1}{n}(h(V_{h_i})g_1(V_i)+g_2(V_i))_{i=1}^n$; that is,
\[
\xi_n(h)=\frac{1}{n}(diag(g_{1,n})K_{\sH,n}\alpha+g_{2,n}),
\] 
where $diag(g_{1,n})$ is a diagonal matrix with $g_1(V_i)$ as the $i$-th diagonal entry, and $g_{2,n}\coloneqq[g_2(V_i)\cdots g_2(V_i)]^\top$.
Also note that
\[
\|h\|_{\mathcal{H}}^2=\alpha^\top K_{\sH,n}\alpha.
\]
Therefore, we have
\begin{align*}
&\min_{h\in\mathcal{H}} \{\xi_n(h)\}^\top \Gamma\{\xi_n(h)\}+\lambda_{\sH}\|h\|_{\mathcal{H}}^2\\
&=\min_{\alpha\in\mathbb{R}^n} 
\frac{1}{n^2}\alpha^\top K_{\sH,n}diag(g_{1,n}) \Gamma diag(g_{1,n}) K_{\sH,n}\alpha 
+\frac{2}{n^2}g_{2,n}^\top  \Gamma diag(g_{1,n}) K_{\sH,n}\alpha 
+\lambda_\sH\alpha^\top K_{\sH,n}\alpha+c,
\end{align*}
which is solved by
\[
\alpha^*=-\Big( K_{\sH,n}diag(g_{1,n}) \Gamma diag(g_{1,n}) K_{\sH,n}+n^2\lambda_\sH K_{\sH,n}\Big)^\dagger K_{\sH,n}diag(g_{1,n}) \Gamma g_{2,n}.
\]

\end{proof}

\begin{proof}[Proof of Theorem \ref{thm:nuisanceconv}]

The proof is for the most part the same as the proof of Theorem 1 in \citep{dikkala2020minimax}. 
For a function $h$, define its empirical $L_2$ norm as $\|h\|_{2,n}\coloneqq\sqrt{\mathbb{P}_n[h^2]}$. 
Define
\begin{align*}
&\Psi(h,q)\coloneqq\E\big[\{g_1(V)h(V_h)+g_2(V)\}q(V_q)  \big]= \E\big[g_1(V)\{h(V_h)-h_0(V_h)\}q(V_q)  \big],\\
&\Psi_n(h,q)\coloneqq \hat{\E}\big[\{g_1(V)h(V_h)+g_2(V)\}q(V_q)  \big],
\end{align*}
and 
\begin{align*}
&\Psi^{\lambda}(h,q)\coloneqq \Psi(h,q)-\frac{1}{2}\|q\|^2_{2}-\frac{\lambda_{\Q}}{2}\|q\|_{\mathcal{Q}}^2,\\
&\Psi^{\lambda}_n(h,q)\coloneqq\Psi_n(h,q)-\|q\|^2_{2,n}-\lambda_{\Q}\|q\|_{\mathcal{Q}}^2.
\end{align*}

Note that the minimax optimization \eqref{eq:optkernelh} can be written as
\begin{align*}
\hat{h}
&=\arg\min_{h\in\mathcal{H}}\sup_{q\in\mathcal{Q}} \hat{\E}\Big[ q(V_q)[h(V_h)g_1(V)+g_2(V)]- q^2(V_q) \Big]
-\lambda_{\Q}\|q\|_{\mathcal{Q}}^2+\lambda_{\sH}\|h\|_{\mathcal{H}}^2\\
&=\arg\min_{h\in\mathcal{H}}\sup_{q\in\mathcal{Q}} 
\Psi^{\lambda}_n(h,q)+\lambda_{\sH}\|h\|_{\mathcal{H}}^2.
\end{align*}
The proof proceeds in 3 steps:\\

{\bf Step 1.} We will show that with probability $1-2\zeta$,
\[
\sup_{q\in\Q}\Big\{\Psi_n(\hat{h},q)-\Psi_n(h_0,q)-\|q\|^2_{2,n}-\lambda_{\Q}\|q\|_{\mathcal{Q}}^2\Big\}
\le\lambda_\sH\big( \|h_0\|^2_\sH-\|\hat{h}\|^2_\sH  \big)+O(\delta^2).
\]

{\bf Step 2.} For any function $h$, define $q_h\coloneqq\textup{P}\big[g_1(V)\{h(V_h)-h_0(V_h)\}\big|V_q\big]$. We will show that if $\|q_{\hat{h}}\|_2\ge\delta$, then with probability $1-\zeta$,
\[
\sup_{q\in\Q}\Big\{\Psi_n(\hat{h},q)-\Psi_n(h_0,q)-\|q\|^2_{2,n}-\lambda_{\Q}\|q\|_{\mathcal{Q}}^2\Big\}
\ge
\frac{\delta}{2}R(\hat{h})-C\delta^2-\lambda_\sH\big( \|h_0\|^2_\sH+\|\hat{h}\|^2_\sH  \big).
\]

{\bf Step 3.} Finally, using Steps 1 and 2, if $\|q_{\hat{h}}\|_2\ge\delta$, with probability $1-3\zeta$, we have
\[
\frac{\delta}{2}R(\hat{h})\le 
2\lambda_\sH\|h_0\|^2_\sH+O(\delta^2),
\]
which concludes that
\begin{equation}
\label{eq:step3}
R(\hat{h})=
O\Big(\delta+\frac{\lambda_\sH}{\delta}\|h_0\|^2_\sH\Big).
\end{equation}
Note that $\|q_{\hat{h}}\|_2=R(\hat{h})$. Hence, $\|q_{\hat{h}}\|_2\le\delta$ implies $R(\hat{h})\le\delta$. Therefore, \eqref{eq:step3} in either case holds.

~\\\\
We next prove the inequalities in Steps 1 and 2. The main ingredients in the proofs are the following two results
\begin{theorem}[\cite{wainwright2019high}, Theorem 14.1]
\label{thm:wain}
Let $\mathcal{F}$ be a star-convex, 1-uniformly bounded function class with critical radius $\delta_n$. Then for any $t\ge\delta_n$, we have
\[
\Big|\|f\|_{2,n}^2-\|f\|_{2}^2\Big|\le\frac{1}{2}\|f\|_{2}^2+\frac{t^2}{2},\quad\text{for all }f\in\mathcal{F},
\]
with probability at least $1-c_1e^{-c_2nt^2}$.
\end{theorem}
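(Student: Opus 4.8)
The plan is to reduce the claim to controlling the single empirical process
\[
Z_n(t)\;\coloneqq\;\sup\Bigl\{\,\bigl|\|f\|_{2,n}^2-\|f\|_2^2\bigr|\;:\;f\in\mathcal{F},\ \|f\|_2\le t\,\Bigr\},
\]
and to do this in two stages: (i) show that star-convexity lets the control of $Z_n(t)$ at the \emph{single} radius $t$ propagate to all of $\mathcal{F}$, and (ii) bound $Z_n(t)$ by pairing a localized-Rademacher-complexity bound on its mean with a Talagrand-type tail bound. For stage (i): on the event $\{Z_n(t)\le t^2/2\}$ the stated inequality holds for every $f\in\mathcal{F}$. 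If $\|f\|_2\le t$ this is immediate since $|\|f\|_{2,n}^2-\|f\|_2^2|\le Z_n(t)\le t^2/2\le\tfrac12\|f\|_2^2+\tfrac12 t^2$. If $\|f\|_2=r>t$, star-convexity gives $\tilde{f}\coloneqq(t/r)f\in\mathcal{F}$ with $\|\tilde{f}\|_2=t$, so
\[
\bigl|\|f\|_{2,n}^2-\|f\|_2^2\bigr|=\frac{r^2}{t^2}\bigl|\|\tilde{f}\|_{2,n}^2-\|\tilde{f}\|_2^2\bigr|\le\frac{r^2}{t^2}\,Z_n(t)\le\frac{r^2}{2}=\tfrac12\|f\|_2^2 .
\]
Hence everything reduces to proving $\mathbb{P}\bigl(Z_n(t)>t^2/2\bigr)\le c_1 e^{-c_2 n t^2}$ for all $t\ge\delta_n$.

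For the mean, I would symmetrize, $\mathbb{E}[Z_n(t)]\le 2\,\mathbb{E}\bigl[\sup_{f\in\mathcal{F},\,\|f\|_2\le t}\bigl|\tfrac1n\sum_{i=1}^n\epsilon_i f^2(X_i)\bigr|\bigr]$, and then use that $\mathcal{F}$ is $1$-uniformly bounded: $x\mapsto x^2$ is $2$-Lipschitz on $[-1,1]$ and vanishes at $0$, so the Ledoux--Talagrand contraction inequality bounds the right side by a universal constant times the localized Rademacher complexity $\mathcal{R}(t,\mathcal{F})$. Since $\mathcal{F}$ is star-shaped around $0$, the ratio $\delta\mapsto\mathcal{R}(\delta,\mathcal{F})/\delta$ is non-increasing, so for $t\ge\delta_n$, using the defining critical-radius inequality $\mathcal{R}(\delta_n,\mathcal{F})\le\delta_n^2$,
\[
\mathcal{R}(t,\mathcal{F})\;\le\;\frac{t}{\delta_n}\,\mathcal{R}(\delta_n,\mathcal{F})\;\le\;\frac{t}{\delta_n}\,\delta_n^2\;=\;t\,\delta_n\;\le\;t^2 .
\]
Tracking the universal constants through symmetrization, contraction, and this bound (absorbing them, if necessary, into the precise normalization of $\delta_n$) gives $\mathbb{E}[Z_n(t)]\le t^2/4$.

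For the deviation, each summand $f^2(X_i)-\mathbb{E}[f^2]$ lies in $[-1,1]$, and for $f$ in the localized ball its variance is at most $\mathbb{E}[f^4]\le\mathbb{E}[f^2]=\|f\|_2^2\le t^2$. Talagrand's inequality (in Bousquet's form) then yields, for every $u>0$,
\[
Z_n(t)\;\le\;\mathbb{E}[Z_n(t)]\;+\;c\sqrt{\frac{\bigl(t^2+\mathbb{E}[Z_n(t)]\bigr)u}{n}}\;+\;c\,\frac{u}{n}
\]
with probability at least $1-e^{-u}$. Taking $u=c'nt^2$ for a small universal constant $c'$ and using $\mathbb{E}[Z_n(t)]\le t^2/4$ makes the right-hand side at most $t^2/2$, so $\mathbb{P}(Z_n(t)>t^2/2)\le c_1 e^{-c_2 nt^2}$; combined with stage (i) this proves the theorem.

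I expect the main obstacle to be the constant-tracking in the mean bound: symmetrization, the contraction lemma, the absolute value inside the Rademacher supremum, and the star-shape comparison each contribute a universal multiplicative factor, and arranging for these to collapse to exactly the $\tfrac12,\tfrac12$ in the statement (rather than to unspecified universal constants) is the delicate bookkeeping that the precise normalization of the critical radius is designed to handle. A secondary point requiring care is that one must use a Bernstein/Talagrand-type inequality exploiting the variance bound $\sigma^2\le t^2$ to obtain the $e^{-c_2 nt^2}$ tail; a crude bounded-differences argument would only give the weaker $e^{-c n t^4}$.
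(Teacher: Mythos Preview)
The paper does not supply its own proof of this statement: it is quoted verbatim as Theorem~14.1 of \cite{wainwright2019high} and used as a black-box tool inside the proof of Theorem~\ref{thm:nuisanceconv}. So there is no in-paper argument to compare against.

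That said, your outline is essentially the standard proof one finds in Wainwright's book. The two-stage reduction via star-convexity (so that controlling the localized supremum $Z_n(t)$ at the single scale $t$ suffices for all of $\mathcal{F}$), followed by symmetrization, Ledoux--Talagrand contraction through $x\mapsto x^2$, the star-shape monotonicity of $\delta\mapsto\mathcal{R}(\delta,\mathcal{F})/\delta$, and a Talagrand/Bousquet tail bound exploiting the variance proxy $\sigma^2\le t^2$, is exactly the route taken there. Your self-identified ``main obstacle'' is the genuine one: with the paper's bare definition of the critical radius ($\mathcal{R}(\delta,\mathcal{F})\le\delta^2$), the chain symmetrization\,$\times$\,contraction\,$\times$\,absolute value yields $\mathbb{E}[Z_n(t)]\le C\,t^2$ with a universal $C$ that is not obviously below $1/4$, so one cannot directly land on the clean $\tfrac12,\tfrac12$ split. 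In Wainwright's treatment this is handled by building the requisite constants into the definition of the critical radius; the paper's Definition of the critical radius is a streamlined version that suppresses those constants, which is harmless for how the result is used downstream (the authors only ever invoke it up to universal constants and with $c_1,c_2$ left unspecified). So there is no substantive gap in your plan, only the bookkeeping issue you already flagged.
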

Theorem \ref{thm:wain} implies that for the choice of $t$ equal to $\delta=\delta_n+c_0\sqrt{\frac{\log(c_1/\zeta)}{n}}$, where $\delta_n$ is an upper bound on the critical radius of $Q_U$, for some constants $c_0$ and $c_1$, we have
\[
\Big|\|q\|_{2,n}^2-\|q\|_{2}^2\Big|\le\frac{1}{2}\|q\|_{2}^2+\frac{1}{2}\delta^2,\quad\text{for all }q\in\Q_{U},
\]
with probability at least $1-c_1e^{-c_2n\delta^2}$, and hence, with probability larger than $1-\zeta$.

Moreover, for any function $q\in\Q$ such that $\|q\|^2_\Q\ge U$, since $\Q$ is assumed to be star-convex, the function $\frac{q\sqrt{U}}{\|q\|_\Q}$ is in $\Q_{U}$. Therefore, again by Theorem \ref{thm:wain}, we have
\[
\Big|\|q\|_{2,n}^2-\|q\|_{2}^2\Big|\le\frac{1}{2}\|q\|_{2}^2+\frac{1}{2}\delta^2\frac{\|q\|^2_\Q}{U},
\]
with probability larger than $1-\zeta$. The last two displays imply that with probability larger than $1-\zeta$,
\begin{equation}
\label{eq:ing1}
\Big|\|q\|_{2,n}^2-\|q\|_{2}^2\Big|\le\frac{1}{2}\|q\|_{2}^2+\frac{\delta^2}{2U}\|q\|^2_\Q+\frac{1}{2}\delta^2,\quad\text{for all }q\in\Q.
\end{equation}
Therefore, with probability at least $1-\zeta$,
\[
\lambda_\Q\|q\|_\Q^2+\|q\|_{2,n}^2
\ge (\lambda_\Q-\frac{\delta^2}{2U})\|q\|_\Q^2
+\frac{1}{2}\|q\|_{2}^2-\frac{\delta^2}{2}.
\]
Therefore, assuming that $\lambda_\Q\ge\frac{\delta^2}{U}$,
\begin{equation}
\label{eq:ing2}
\lambda_\Q\|q\|_\Q^2+\|q\|_{2,n}^2
\ge \frac{\lambda_\Q}{2}\|q\|_\Q^2
+\frac{1}{2}\|q\|_{2}^2-\frac{\delta^2}{2}.
\end{equation}

The second key result used in the proof is an extension of Theorem \ref{thm:wain}, proposed in \citep{foster2019orthogonal}. Since $g_1$ and $g_2$ are bounded and $\sH$ is a class of bounded functions, the function $\{g_1(V)h(V_h)+g_2(V)\}q(V_q)$ is Lipschitz with respect to $q$. Therefore, by Lemma 11 in \citep{foster2019orthogonal}, with probability $1-\zeta$,
\[
\big|\Psi_n(h_0,q)-\Psi(h_0,q)\big|\le O(\delta\|q\|_{2}+\delta^2),
\quad\text{for all }q\in\Q_{U}.
\]
Moreover, for any function $q\in\Q$ such that $\|q\|^2_\Q\ge U$, since $\Q$ is assumed to be star-convex, the function $\frac{q\sqrt{U}}{\|q\|_\Q}$ is in $\Q_{U}$. Therefore,
\[
\big|\Psi_n(h_0,q)-\Psi(h_0,q)\big|\le O(\delta\|q\|_{2}+\frac{\delta^2}{\sqrt{U}}\|q\|_\Q),
\]
with probability larger than $1-\zeta$. The last two displays imply that with probability larger than $1-\zeta$,
\begin{equation}
\label{eq:ing3}
\big|\Psi_n(h_0,q)-\Psi(h_0,q)\big|\le O(\delta\|q\|_{2}+\delta^2+\frac{\delta^2}{\sqrt{U}}\|q\|_\Q),
\quad\text{for all }q\in\Q.
\end{equation}

For any function $h$, define $q_h\coloneqq\textup{P}\big[g_1(V)\{h(V_h)-h_0(V_h)\}\big|V_q\big]$, which is a function in $\Q$.
Again, by Lemma 11 in \citep{foster2019orthogonal}, 
since $\delta_n$ is an upper bound on the critical radius of $\mathcal{G}_B$, for all functions $h$ such that $h-h_0\in\sH_B$, 
with probability $1-\zeta$,
\begin{align*}
\big|
\big(\Psi_n(h,q_h)-\Psi_n(h_0,q_h)\big)-
\big(\Psi(h,q_h)-\Psi(h_0,q_h)\big)
\big|
&\le O(\delta\|g_1(V)\{h(V_h)-h_0(V_h)\}q_h(V_q)\|_{2}+\delta^2)\\
&=O(\delta\|q_h(V_q)\|_{2}+\delta^2).
\end{align*}
Moreover, for any function $h\in\sH$ such that $\|h-h_0\|^2_\sH\ge B$, since $\sH$ is assumed to be star-convex, the function $\frac{(h-h_0)\sqrt{B}}{\|h-h_0\|_\sH}$ is in $\sH_{B}$. Therefore,
\[
\big|
\big(\Psi_n(h,q_h)-\Psi_n(h_0,q_h)\big)-
\big(\Psi(h,q_h)-\Psi(h_0,q_h)\big)
\big|
\le O(\delta\|q_h(V_q)\|_{2}\frac{\|h-h_0\|^2_\sH}{B}+\delta^2\frac{\|h-h_0\|^2_\sH}{B}),
\]
with probability larger than $1-\zeta$. The last two displays imply that with probability larger than $1-\zeta$, 
\begin{equation}
\label{eq:ing4}
\begin{aligned}
&\big|
\big(\Psi_n(h,q_h)-\Psi_n(h_0,q_h)\big)-
\big(\Psi(h,q_h)-\Psi(h_0,q_h)\big)
\big|\\
&\quad\le 
O(\delta\|q_h(V_q)\|_{2}+\delta^2+\delta\|q_h(V_q)\|_{2}\frac{\|h-h_0\|^2_\sH}{B}+\delta^2\frac{\|h-h_0\|^2_\sH}{B}),
\quad\text{for all }h\in\sH.
\end{aligned}
\end{equation}

Equipped with inequalities \eqref{eq:ing1}, \eqref{eq:ing2}, \eqref{eq:ing3}, and \eqref{eq:ing4}, we now present the proofs of Steps 1 and 2.\\\\\\

\emph{Proof of Step 1.}

We note that,
\begin{align*}
\sup_{q\in\Q}\Psi^{\lambda}_n(\hat{h},q)
&=\sup_{q\in\Q}\Big\{\Psi_n(\hat{h},q)-\|q\|^2_{2,n}-\lambda_{\Q}\|q\|_{\mathcal{Q}}^2\Big\}\\
&=\sup_{q\in\Q}\Big\{\Psi_n(\hat{h},q)-\Psi_n(h_0,q)+\Psi_n(h_0,q)-\|q\|^2_{2,n}-\lambda_{\Q}\|q\|_{\mathcal{Q}}^2\Big\}\\
&\ge\sup_{q\in\Q}\Big\{\Psi_n(\hat{h},q)-\Psi_n(h_0,q)-\|q\|^2_{2,n}-\lambda_{\Q}\|q\|_{\mathcal{Q}}^2\Big\}
-\sup_{q\in\Q}\Psi^{\lambda}_n(h_0,q).
\end{align*}
That is,
\[
\sup_{q\in\Q}\Big\{\Psi_n(\hat{h},q)-\Psi_n(h_0,q)-\|q\|^2_{2,n}-\lambda_{\Q}\|q\|_{\mathcal{Q}}^2\Big\}
\le\sup_{q\in\Q}\Psi^{\lambda}_n(h_0,q)+ \sup_{q\in\Q}\Psi^{\lambda}_n(\hat{h},q).
\]
Moreover, by definition,
\[
\sup_{q\in\Q}\Psi^{\lambda}_n(\hat{h},q)+\lambda_\sH\|\hat{h}\|^2_\sH 
\le\sup_{q\in\Q} \Psi^{\lambda}_n(h_0,q)+\lambda_\sH\|h_0\|^2_\sH.
\]
Hence,
\[
\sup_{q\in\Q}\Psi^{\lambda}_n(\hat{h},q)
\le\sup_{q\in\Q} \Psi^{\lambda}_n(h_0,q)+\lambda_\sH(\|h_0\|^2_\sH-\|\hat{h}\|^2_\sH ).
\]
Therefore, we have
\begin{equation}
\label{eq:step1_1}
\sup_{q\in\Q}\Big\{\Psi_n(\hat{h},q)-\Psi_n(h_0,q)-\|q\|^2_{2,n}-\lambda_{\Q}\|q\|_{\mathcal{Q}}^2\Big\}
\le2\sup_{q\in\Q}\Psi^{\lambda}_n(h_0,q)+\lambda_\sH(\|h_0\|^2_\sH-\|\hat{h}\|^2_\sH ).
\end{equation}

By inequalities \eqref{eq:ing2} and \eqref{eq:ing3}, with probability $1-2\zeta$, for some constant $c$ we have
\begin{equation}
\label{eq:step1_0}
\begin{aligned}
\sup_{q\in\Q}\Psi^{\lambda}_n(h_0,q)
&=\sup_{q\in\Q}\Big\{\Psi_n(h_0,q)-\|q\|^2_{2,n}-\lambda_{\Q}\|q\|_{\mathcal{Q}}^2\Big\}\\
&\le\sup_{q\in\Q}\Big\{
\Psi(h_0,q)+ c(\delta\|q\|_{2}+\delta^2+\frac{\delta^2}{\sqrt{U}}\|q\|_\Q)
-\frac{\lambda_\Q}{2}\|q\|_\Q^2
-\frac{1}{2}\|q\|_{2}^2+\frac{\delta^2}{2}\Big\}\\
&=\sup_{q\in\Q}\Big\{
 c(\delta\|q\|_{2}+\delta^2+\frac{\delta^2}{\sqrt{U}}\|q\|_\Q)
-\frac{\lambda_\Q}{2}\|q\|_\Q^2
-\frac{1}{2}\|q\|_{2}^2+\frac{\delta^2}{2}\Big\},
\end{aligned}
\end{equation}
where the last equality is due to the fact that $\Psi(h_0,q)=0$.
We note that since we assumed that $\lambda_\Q\ge\frac{\delta^2}{U}$, we have
\begin{equation}
\label{eq:step1_2}
\begin{aligned}
&\sup_{q\in\Q}\Big\{
c\delta\|q\|_{2}-\frac{1}{2}\|q\|_{2}^2
+c\frac{\delta^2}{\sqrt{U}}\|q\|_\Q-\frac{\lambda_\Q}{2}\|q\|_\Q^2\Big\}\\
&\le\sup_{q\in\Q}\Big\{ c\delta\|q\|_{2}-\frac{1}{2}\|q\|_{2}^2  \Big\}
+\sup_{q\in\Q}\Big\{  c\frac{\delta^2}{\sqrt{U}}\|q\|_\Q-\frac{\delta^2}{2U}\|q\|_\Q^2 \Big\}\\
&\le \frac{c^2\delta^2}{2}+\frac{c^2\delta^2}{2}=O(\delta^2).
\end{aligned}
\end{equation}

From inequalities \eqref{eq:step1_0}, \eqref{eq:step1_1}, and \eqref{eq:step1_2} we conclude that 
\[
\sup_{q\in\Q}\Big\{\Psi_n(\hat{h},q)-\Psi_n(h_0,q)-\|q\|^2_{2,n}-\lambda_{\Q}\|q\|_{\mathcal{Q}}^2\Big\}
\le\lambda_\sH\big( \|h_0\|^2_\sH-\|\hat{h}\|^2_\sH  \big)+O(\delta^2).
\]
~\\

\emph{Proof of Step 2.}

Recall that for any function $h$,  $q_h\coloneqq\textup{P}\big[g_1(V)\{h(V_h)-h_0(V_h)\}\big|V_q\big] $, which is a function in $\Q$.
Suppose $\|q_{\hat{h}}\|_2\ge\delta$ and $\alpha=\frac{\delta}{2\|q_{\hat{h}}\|_2}\in[0,\frac{1}{2}]$. Since $\Q$ is star-convex and $q_{\hat{h}}\in\Q$, then also $\alpha q_{\hat{h}}\in\Q$. Therefore,
\begin{equation}
\label{eq:step2_0}
\begin{aligned}
&\sup_{q\in\Q}\Big\{\Psi_n(\hat{h},q)-\Psi_n(h_0,q)-\|q\|^2_{2,n}-\lambda_{\Q}\|q\|_{\mathcal{Q}}^2\Big\}\\
&\quad\quad\ge
\alpha\Big(\Psi_n(\hat{h},q_{\hat{h}})-\Psi_n(h_0,q_{\hat{h}})\Big)
-\alpha^2\Big(\|q_{\hat{h}}\|^2_{2,n}+\lambda_{\Q}\|q_{\hat{h}}\|_{\mathcal{Q}}^2\Big).
\end{aligned}
\end{equation}

By inequality \eqref{eq:ing4}, with probability $1-\zeta$, we have
\begin{align*}
&
\alpha\Big(\Psi_n(\hat{h},q_{\hat{h}})-\Psi_n(h_0,q_{\hat{h}})\Big)
\\
&\quad\ge 
\alpha\Big(\Psi(\hat{h},q_{\hat{h}})-\Psi(h_0,q_{\hat{h}})\Big)
-\alpha O(\delta\|q_{\hat{h}}(V_q)\|_{2}+\delta^2+\delta\|q_{\hat{h}}(V_q)\|_{2}\frac{\|\hat{h}-h_0\|^2_\sH}{B}+\delta^2\frac{\|\hat{h}-h_0\|^2_\sH}{B}).
\end{align*}
Hence, recalling that $\alpha=\frac{\delta}{2\|q_{\hat{h}}\|_2}\in[0,\frac{1}{2}]$, and since $\Psi(h_0,q_{\hat{h}})=0$, there exists constant $c$, such that
\begin{align*}
\alpha\Big(\Psi_n(\hat{h},q_{\hat{h}})-\Psi_n(h_0,q_{\hat{h}})\Big)
\ge 
\alpha\Psi(\hat{h},q_{\hat{h}})
-c\delta^2-c\delta^2\frac{\|\hat{h}-h_0\|^2_\sH}{B}.
\end{align*}
Also, we note that
\begin{align*}
\alpha\Psi(\hat{h},q_{\hat{h}})
&=\frac{\delta}{2\|q_{\hat{h}}\|_2}
\text{P}\big[g_1(V)\{\hat{h}(V_h)-h_0(V_h)\}q_{\hat{h}}(V_q)  \big]\\
&=\frac{\delta}{2\|q_{\hat{h}}\|_2}
\text{P}\big[\text{P}[g_1(V)\{\hat{h}(V_h)-h_0(V_h)\}|V_q]^2  \big]\\
&=\frac{\delta\|q_{\hat{h}}\|_2^2}{2\|q_{\hat{h}}\|_2}
=\frac{\delta}{2}R(\hat{h}).
\end{align*}
Therefore,
\begin{equation}
\label{eq:step2_1}
\begin{aligned}
\alpha\Big(\Psi_n(\hat{h},q_{\hat{h}})-\Psi_n(h_0,q_{\hat{h}})\Big)
\ge
\frac{\delta}{2}R(\hat{h})-c\delta^2-c\delta^2\frac{\|\hat{h}-h_0\|^2_\sH}{B}.
\end{aligned}
\end{equation}

By inequality \eqref{eq:ing1}, the fact that $\lambda_\Q\ge\frac{\delta^2}{U}$, and the fact that $\|q_{\hat{h}}\|_{\mathcal{Q}}^2\le L^2\|\hat{h}-h_0\|^2_\sH$, we have
\begin{equation}
\label{eq:step2_2}
\begin{aligned}
-\alpha^2\Big(\|q_{\hat{h}}\|^2_{2,n}+\lambda_{\Q}\|q_{\hat{h}}\|_{\mathcal{Q}}^2\Big)
&\ge
-\alpha^2\Big(
\frac{3}{2}\|q_{\hat{h}}\|^2_{2}+\frac{\delta^2}{2U}\|q_{\hat{h}}\|_{\mathcal{Q}}^2+\frac{1}{2}\delta^2
+\lambda_{\Q}\|q_{\hat{h}}\|_{\mathcal{Q}}^2\Big)\\
&\ge
-\alpha^2
\frac{3}{2}\|q_{\hat{h}}\|^2_{2}
-\frac{\delta^2}{2U}\|q_{\hat{h}}\|_{\mathcal{Q}}^2
-\frac{1}{2}\delta^2
-\lambda_{\Q}\|q_{\hat{h}}\|_{\mathcal{Q}}^2\\
&\ge
-\delta^2
-(\frac{\delta^2}{2U}+\lambda_{\Q})\|q_{\hat{h}}\|_{\mathcal{Q}}^2\\
&\ge
-\delta^2
-\frac{3\lambda_{\Q}}{2}\|q_{\hat{h}}\|_{\mathcal{Q}}^2\\
&\ge
-\delta^2
-\frac{3\lambda_{\Q}}{2}L^2\|\hat{h}-h_0\|^2_\sH.
\end{aligned}
\end{equation}

From inequalities \eqref{eq:step2_0}, \eqref{eq:step2_1}, and \eqref{eq:step2_2} we have
\begin{align*}
&\sup_{q\in\Q}\Big\{\Psi_n(\hat{h},q)-\Psi_n(h_0,q)-\|q\|^2_{2,n}-\lambda_{\Q}\|q\|_{\mathcal{Q}}^2\Big\}\\
&\quad\quad\ge
\frac{\delta}{2}R(\hat{h})-c\delta^2-c\delta^2\frac{\|\hat{h}-h_0\|^2_\sH}{B}
-\delta^2
-\frac{3\lambda_{\Q}}{2}L^2\|\hat{h}-h_0\|^2_\sH\\
&\quad\quad\ge
\frac{\delta}{2}R(\hat{h})-c\delta^2-cU\lambda_\Q\frac{\|\hat{h}-h_0\|^2_\sH}{B}
-\delta^2
-\frac{3\lambda_{\Q}}{2}L^2\|\hat{h}-h_0\|^2_\sH\\
&\quad\quad=
\frac{\delta}{2}R(\hat{h})-c\delta^2-\delta^2
-\lambda_{\Q}\Big(\frac{cU}{B}
+\frac{3}{2}L^2\Big)
\|\hat{h}-h_0\|^2_\sH\\
&\quad\quad\ge
\frac{\delta}{2}R(\hat{h})-c\delta^2-\delta^2
-\lambda_{\Q}\Big(\frac{2cU}{B}
+3L^2\Big)
\big( \|h_0\|^2_\sH+\|\hat{h}\|^2_\sH  \big)\\
&\quad\quad=
\frac{\delta}{2}R(\hat{h})-c\delta^2-\delta^2
-\lambda_{\Q}\Big(4cL^2
+3L^2\Big)
\big( \|h_0\|^2_\sH+\|\hat{h}\|^2_\sH  \big).
\end{align*}
Assuming $\lambda_\Q\Big(4cL^2
+3L^2\Big)\le\lambda_\sH$, for some constant $C$ we conclude that if $\|q_{\hat{h}}\|_2\ge\delta$, then with probability $1-\zeta$,
\[
\sup_{q\in\Q}\Big\{\Psi_n(\hat{h},q)-\Psi_n(h_0,q)-\|q\|^2_{2,n}-\lambda_{\Q}\|q\|_{\mathcal{Q}}^2\Big\}
\ge
\frac{\delta}{2}R(\hat{h})-C\delta^2-\lambda_\sH\big( \|h_0\|^2_\sH+\|\hat{h}\|^2_\sH  \big).
\]

\end{proof}

\section{ADDITIONAL MATERIALS FOR SECTION \ref{sec:exp}}
\subsection{Data Generating Process in the Synthetic-Data Experiments}
We first describe the data generating mechanism of the scenario considered in the main text, which is the same as in \citep{cui2020semiparametric}.
Covariates $X$ are generated from a multivariate normal distribution $N(\Gamma_x,\Sigma_x)$. We then generate $A$ conditional on $X$ from a Bernoulli distribution.

Next, we generate $Z,W,U$ from the following multivariate normal distribution,
\[
\left( Z,W,U\right) |A,X\sim MVN\left( \left(
\begin{array}{c}
\alpha _{0}+\alpha _{a}A+\alpha _{x}X \\
\mu _{0}+\mu _{a}A+\mu _{x}X \\
\kappa _{0}+\kappa _{a}A+\kappa _{x}X
\end{array}
\right) ,\Sigma=\left(
\begin{array}{ccc}
\sigma _{z}^{2} & \sigma _{zw} & \sigma _{zu} \\
\sigma _{zw} & \sigma _{w}^{2} & \sigma _{wu} \\
\sigma _{zu} & \sigma _{wu} & \sigma _{u}^{2}
\end{array}
\right) \right).
\]

Finally, $Y$ is generated from
\begin{eqnarray*}
\E\left( Y|W,U,A,Z,X\right)  &=&\E\left( Y|U,A,Z,X\right) +\omega \left\{
W-\E\left( W|U,A,Z,X\right) \right\}  \\
&=&\E\left( Y|U,A,X\right) +\omega \left\{ W-\E\left( W|U,X\right) \right\}  \\
&=&b_{0}+b_{a}A+b_{x}X+b_{w}\E\left( W|U,X\right) +\omega \left\{ W-\E\left( W|U,X\right)
\right\}  \\
&=&b_{0}+b_{a}A+b_{x}X+\left( b_{w}-\omega \right) \E\left( W|U,X\right) +\omega W,
\end{eqnarray*}
where
\[
\E\left( W|U,X\right) =\E\left( W|U,A,Z,X\right) =\mu _{0}+\mu _{x}X+\frac{\sigma _{wu}}{
\sigma _{u}^{2}}\left( U-\kappa _{0}-\kappa_{x}X\right).
\]
The parameters are set as follows:
\begin{itemize}
\setlength\itemsep{1em}

\item $\Gamma_x=(0.25,0.25)^T$, $\Sigma_x=\left(
\begin{array}{ccc}
\sigma_x^2 & 0\\
0 & \sigma_x^2\\
\end{array}
\right)$, $\sigma_x=0.25$.

\item $\Pr \left(A=1|X\right)=\left[1+ \exp\{(0.125,0.125)^TX\}\right]^{-1}$.

\item $\alpha_0= 0.25$, $\alpha_a= 0.125$, $\alpha_x= (0.25,0.25)^T$.

\item $\mu_0= 0.25$, $\mu_a= 0.25$, $\mu_x= (0.25,0.25)^T$.

\item $\kappa_0= 0.25$, $\kappa_a= 0.25$, $\kappa_x= (0.25,0.25)^T$.

\item $\Sigma=\left(
\begin{array}{ccc}
1 & 0.25 & 0.5 \\
0.25 & 1 & 0.5 \\
0.5 & 0.5 & 1
\end{array}
\right).$

\item $b_0= 2$, $b_a= 2$, $b_x= (0.25,0.25)^T$, $b_w=4$, $\omega=2$.
\end{itemize}
The validity of relevant assumptions imposed in Section \ref{sec:apps} was proved in \citep{cui2020semiparametric}.
This scenario sets dimensions of $X$, $Z$, $W$ to 2, 1, 1. Hyper parameters $\lambda_Q^q$, $\lambda_Q^h$, $\lambda_H^q$, and $\lambda_H^h$, as well as the kernel bandwidth are chosen by cross validation with minimizing the projected risk. 

\subsection{Multidimensional Proxies}
We further consider a more complicated scenario where the dimensions of $X$, $Z$, $W$ increase to 5, 2, 2. The data are generated by
\begin{equation}
    U \sim \text{Normal}(0, \sigma_U^2)
\end{equation}
\begin{equation}
    X|U \sim \text{Normal}(\mu_{UX}U, \sigma_X^2)
\end{equation}
\begin{equation}
    A|X, U \sim \text{Bernoulli}(1/(1 + \exp(t_A + t_X X + t_Z (M_{AZ} + \mu_{XZ} X + \mu_{UZ} U) + 0.5 t_Z \sigma_Z^2 t_Z^\top)))
\end{equation}
\begin{equation}
    Z|A, X, U \sim \text{Normal}(\mu_{AZ} A + \mu_{XZ} X + \mu_{UZ} U, \sigma_Z^2)
\end{equation}
\begin{equation}
    W|X, U \sim \text{Normal}(\mu_{XW} X + \mu_{UW} U, \sigma_W^2)
\end{equation}
\begin{equation}
    Y|A, X, U \sim \text{Normal}(\mu_{AY} A + \mu_{WY}W + \mu_{XY} X + \mu_{UY} U, \sigma_Y^2)
\end{equation}
The parameters are set as follows
\begin{itemize}
    \item $\mu_{U} = 0$, $\sigma_U = 0.3$.
    \item $\mu_{UX} = \left(\begin{array}{c}
0.4 \\
0.525 \\
0.650\\
0.775\\
0.9
\end{array}\right)$,  $\sigma_X^2 = \left(\begin{array}{ccccc}
0.3 & 0 & 0 & 0 & 0 \\
0 & 0.3  & 0 & 0 & 0 \\
0 & 0 & 0.3  & 0 & 0 \\
0 & 0 & 0 & 0.3  & 0 \\
0 & 0 & 0 & 0 & 0.3 
\end{array}
    \right).$
    \item $t_Z = (0.9, 0.4)$, $t_X = (0.9, 0.775, 0.65, 0.525, 0.4)$, $t_A = -t_Z \mu_{AZ} - t_Z 
    \sigma_Z^2 t_Z^\top = -0.981$.
    \item $\mu_{AZ} = (0.5, 0.6)$, $\mu_{XZ} = \left(\begin{array}{ccccc}
0.4 & 0.511 & 0.622 & 0.733 & 0.844 \\
0.456 & 0.567  & 0.678 & 0.789 & 0.9 
\end{array}
    \right)$, $\mu_{UZ} = (0.8, 0.9)$,  $\sigma_Z^2 = \left(\begin{array}{cc}
0.3 & 0 \\
0 & 0.3 
\end{array}
    \right).$
    \item $\mu_{XW} = \left(\begin{array}{ccccc}
0.9 & 0.789 & 0.678 & 0.567  & 0.455 \\
0.844 & 0.733 & 0.622 & 0.511 & 0.4
\end{array}
    \right)$, $\mu_{UW} = (0.8, 0.9)$, $\sigma_W^2 = \left(\begin{array}{cc}
0.3 & 0 \\
0 & 0.3 
\end{array}
    \right).$
    \item $\mu_{AY} = 2$, $\mu_{WY} = (0.4, 0.9)$, $\mu_{XY} = (0.4, 0.525, 0.65, 0.775, 0.9)$, $\mu_{UY} = (0.4, 0.9)$, $\sigma_Y^2 = 0.3$
\end{itemize}

The validity of relevant assumptions imposed in Section \ref{sec:apps} follows immediately by the temporal order of data generating process and the underlying independence. The simulation results are presented in Figure \ref{fig:syntheticexp2}. Once again, all three estimators attain smaller bias as sample size becomes larger and the PDR estimator beats POR and PIPW estimators in all cases.

\begin{figure}[t!]
    \centering
    \includegraphics[scale = 0.7]{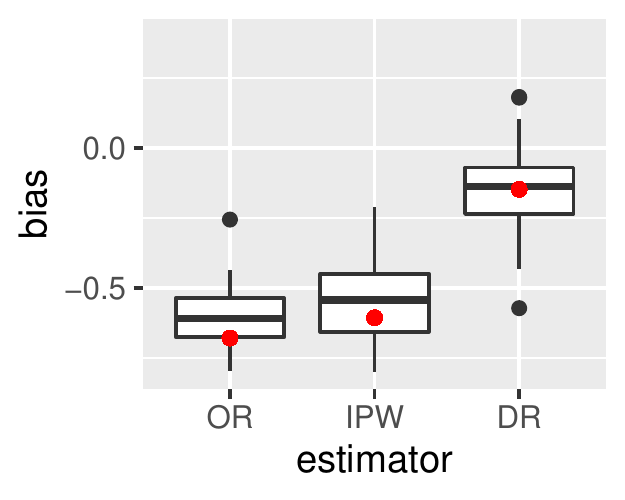}
    \includegraphics[scale = 0.7]{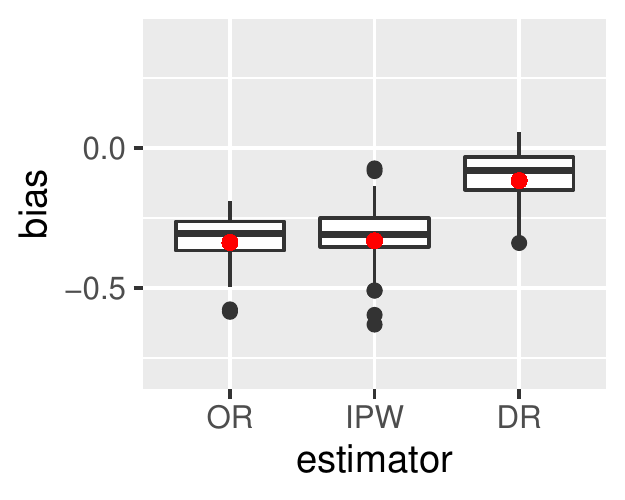}
    \includegraphics[scale = 0.7]{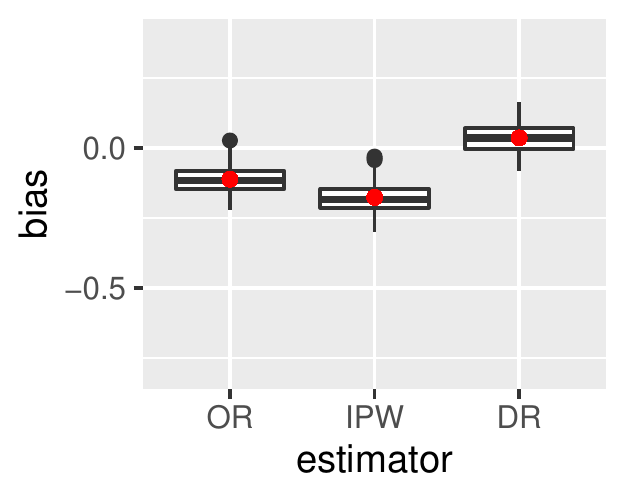}
    \caption{Boxplots of simulation results when $N = 400, 800, 1600$, respectively.
    }
    \label{fig:syntheticexp2}
\end{figure}

\subsection{Nonlinear Confounding Bridge Functions}
In this subsection we consider a simulating process such that the confounding bridge functions $h$ and $q$ are nonlinear. In particular, we add cubic terms of $(X, U)$ when generating $A$ and also interaction terms when generating $Y$. The data are generated by
\begin{equation}
    U \sim \text{Normal}(0, 1),
\end{equation}
\begin{equation}
    X_1 \sim \text{Normal}(0, 1),
\end{equation}
\begin{equation}
    X_2 \sim \text{Normal}(0, 1),
\end{equation}
\begin{equation}
    A|X, U \sim \text{Bernoulli}(1/(1 + \exp(-0.25 -0.2 X_1 -0.3 X_2 - 0.1 X_1^3 - 0.05 X_2^3 - 0.25 U + 0.1U^3))
\end{equation}
\begin{equation}
    Z|A, X, U \sim \text{Normal}(0.5 + 0.5 A + 0.2 X_1 - 0.2 X_2 + 0.75 U, 1)
\end{equation}
\begin{equation}
    W|X, U \sim \text{Normal}(0.3 + 0.35 X_1 + 0.25X_2 - 0.75 U, 1)
\end{equation}
\begin{equation}
\begin{aligned}
    &Y|A, X, U \sim 
    \text{Normal}(-0.5 + A + 0.25X_1 - 0.2 X_2 - 0.5AX_1 + 0.3AX_2 \\
    &\qquad\qquad\qquad- 0.025X_1^3 + 0.03X_2^3 - X - 0.3U + 0.25 AU + 0.025 U^3, 1)
\end{aligned}
\end{equation}

The validity of relevant assumptions imposed in Section \ref{sec:apps} follows immediately by the temporal order of data generating process and the underlying independence. The simulation results are presented in Figure \ref{fig:syntheticexp3}. Once again, all three estimators attain smaller bias as sample size becomes larger and the PDR estimator beats POR and PIPW estimators in all cases.

\begin{figure}[H]
    \centering
    \includegraphics[scale = 0.7]{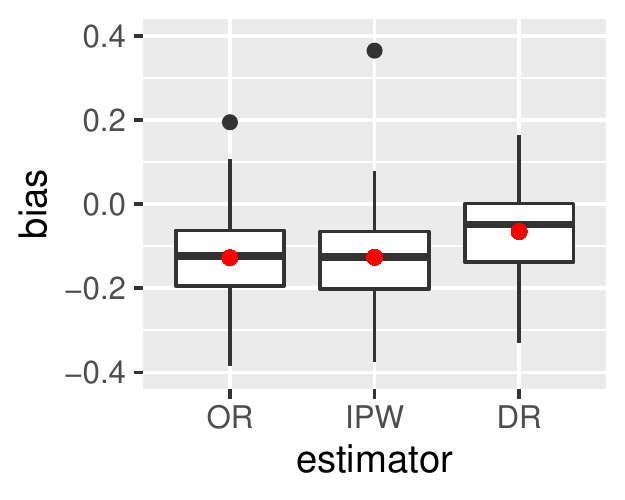}
    \includegraphics[scale = 0.7]{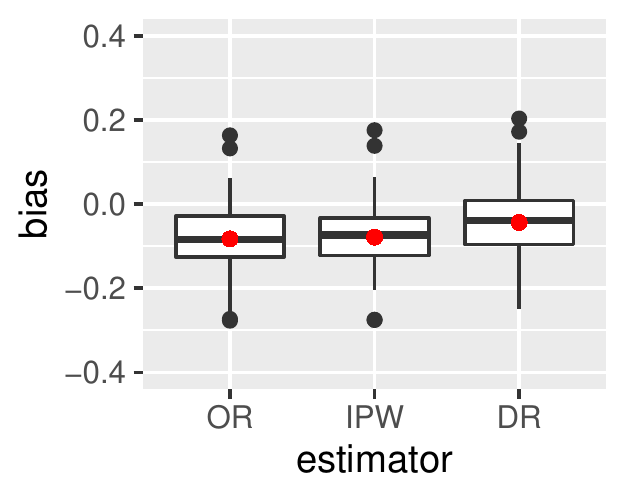}
    \includegraphics[scale = 0.7]{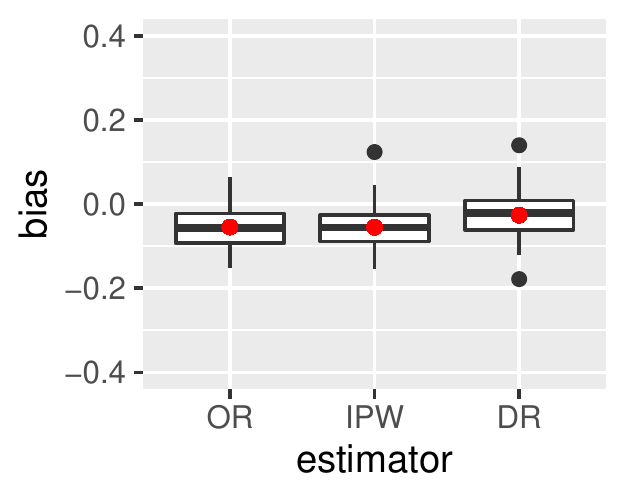}
    \caption{Boxplots of simulation results when $N = 400, 800, 1600$, respectively.
    }
    \label{fig:syntheticexp3}
\end{figure}

{\bf Real-Data Analysis.} 
We allocated $Z$ = (pafi1, paco21) and $W$ = (ph1, hema1) as the same choices of treatment- and outcome-inducing confounding proxies in \citep{tchetgen2020introduction, cui2020semiparametric}. 
The rest baseline covariates are $X$ = (age, sex, cat1\_coma, cat2\_coma, dnr1, surv2md1, aps1).
We chose hyperparameters $\lambda^{h}_{\Q} = 0.01$, $\lambda^{q}_{\sH} = 0.01$, $\lambda^{h}_{\sH} = 0.001$ and $\lambda^{q}_{\Q} = 0.001$. We set the kernel bandwidths 35 and 20 in $K_{\sH,n}$ and $K_{\Q,n}$.

\end{document}